\newcommand{\ourdata}{PediaTypes\xspace}
\newcommand{\ourotherdata}{WikiTopics\xspace}
\newcommand{\ourinv}{double invariant\xspace}
\newcommand{\Ourequiv}{Double equivariant\xspace}
\newcommand{\ourequiv}{double equivariant\xspace}
\newcommand{\Ourgraphease}{Knowledge graph\xspace}
\newcommand{\ourgraphease}{knowledge graph\xspace}
\newcommand{\ourgrapheases}{knowledge graphs\xspace}
\newcommand{\ourgraph}{knowledge graph\xspace}
\newcommand{\ourgraphs}{knowledge graphs\xspace}
\newcommand{\OurTask}{Fully Inductive Link Prediction\xspace}
\newcommand{\Ourtask}{Fully inductive link prediction\xspace}
\newcommand{\ourtask}{fully inductive link prediction\xspace}
\newcommand{\OurModel}{ISDEA\xspace}
\newcommand{\OurNewModel}{ISDEA+\xspace}
\newcommand{\OurOtherModel}{DEq-InGram\xspace}
\newcommand{\tr}{\text{train}\xspace}
\newcommand{\te}{\text{inf}\xspace}
\newcommand{\suptr}{^{(\tr)}}
\newcommand{\supte}{^{(\te)}}
\newcommand{\trifunc}[1]{#1_\text{triplet}}
\newcommand{\grafunc}[1]{#1_\text{graph}}
\DeclareMathOperator{\bigldcbrace}{\big\{\mskip-7mu\big\{}
\DeclareMathOperator{\bigrdcbrace}{\big\}\mskip-7mu\big\}}
\let\save@mathaccent\mathaccent
\newcommand*\if@single[3]{%
    \setbox0\hbox{${\mathaccent"0362{#1}}^H$}%
    \setbox2\hbox{${\mathaccent"0362{\kern0pt#1}}^H$}%
    \ifdim\ht0=\ht2 #3\else #2\fi
    }
\newcommand*\rel@kern[1]{\kern#1\dimexpr\macc@kerna}
\newcommand*\widebar[1]{{\@ifnextchar^{{\wide@bar{#1}{0}}}{\wide@bar{#1}{1}}}}
\newcommand*\wide@bar[2]{\if@single{#1}{\wide@bar@{#1}{#2}{1}}{\wide@bar@{#1}{#2}{2}}}
\newcommand*\wide@bar@[3]{%
\begingroup
\def\mathaccent##1##2{%
    \let\mathaccent\save@mathaccent
    \if#32 \let\macc@nucleus\first@char \fi
    \setbox\z@\hbox{$\macc@style{\macc@nucleus}_{}$}%
    \setbox\tw@\hbox{$\macc@style{\macc@nucleus}{}_{}$}%
    \dimen@\wd\tw@
    \advance\dimen@-\wd\z@
    \divide\dimen@ 3
    \@tempdima\wd\tw@
    \advance\@tempdima-\scriptspace
    \divide\@tempdima 10
    \advance\dimen@-\@tempdima
    \ifdim\dimen@>\z@ \dimen@0pt\fi
    \rel@kern{0.6}\kern-\dimen@
    \if#31
        \overline{\rel@kern{-0.6}\kern\dimen@\macc@nucleus\rel@kern{0.4}\kern\dimen@}%
        \advance\dimen@0.4\dimexpr\macc@kerna
        \let\final@kern#2%
        \ifdim\dimen@<\z@ \let\final@kern1\fi
        \if\final@kern1 \kern-\dimen@\fi
    \else
        \overline{\rel@kern{-0.6}\kern\dimen@#1}%
    \fi
}%
\macc@depth\@ne
\let\math@bgroup\@empty \let\math@egroup\macc@set@skewchar
\mathsurround\z@ \frozen@everymath{\mathgroup\macc@group\relax}%
\macc@set@skewchar\relax
\let\mathaccentV\macc@nested@a
\if#31
    \macc@nested@a\relax111{#1}%
\else
    \def\gobble@till@marker##1\endmarker{}%
    \futurelet\first@char\gobble@till@marker#1\endmarker
    \ifcat\noexpand\first@char A\else
        \def\first@char{}%
    \fi
    \macc@nested@a\relax111{\first@char}%
\fi
\endgroup
}
\def\adj{\mathbf{A}} %
\def\adjwild{\mathbf{A}^{(*)}}
\def\relsizewild{{R^{(*)}}}
\def\nodesize{{N}}
\def\relsize{{R}}
\def\eqref#1{equation~\ref{#1}}
\def\1{\bm{1}}
\def\vz{{\bm{z}}}
\def\mR{{\bm{R}}}
\def\mV{{\bm{V}}}
\DeclareMathAlphabet{\mathsfit}{\encodingdefault}{\sfdefault}{m}{sl}
\SetMathAlphabet{\mathsfit}{bold}{\encodingdefault}{\sfdefault}{bx}{n}
\newcommand{\tens}[1]{\bm{\mathsfit{#1}}}
\def\tB{{\tens{B}}}
\def\tZ{{\tens{Z}}}
\def\cB{{\mathcal{B}}}
\def\cG{{\mathcal{G}}}
\def\cN{{\mathcal{N}}}
\def\cR{{\mathcal{R}}}
\def\cS{{\mathcal{S}}}
\def\cV{{\mathcal{V}}}
\def\sA{{\mathbb{A}}}
\def\sN{{\mathbb{N}}}
\def\sR{{\mathbb{R}}}
\def\sS{{\mathbb{S}}}
\def\L2L{L^2(\mathcal{J}) \rightarrow L^2(\mathcal{J})}
\renewcommand{\L}{\mathcal{L}} 
\def\gG{{\mathcal{G}}}
\def\gR{{\mathcal{R}}}
\def\gV{{\mathcal{V}}}
\def\adj{{\mathbf{A}}} 
\newcommand{\rel}{_{\rm{rel}}}
\newcommand{\trip}{^{(\rm{triplet})}}
\theoremstyle{plain}
\newtheorem{theorem}{Theorem}[section]
\theoremstyle{definition}
\newtheorem{definition}[theorem]{Definition}
\theoremstyle{remark}
\newcommand{\bruno}[1]{}
\newcommand{\jincheng}[1]{}
\newcommand{\yucheng}[1]{}
\newcommand{\update}[1]{#1}
\newcommand{\red}{\color{red}}  %
\title[Double Equivariance for Inductive Link Prediction for Both New Nodes and New Relation Types]{Double Equivariance for Inductive Link Prediction for Both New Nodes and New Relation Types}
\author{%
  Jincheng Zhou \\
  Purdue University \\
  \email{zhou791@purdue.edu} \\
  \And
  Yucheng Zhang \\
  Purdue University \\
  \email{zhan4332@purdue.edu} \\
  \And
  Jianfei Gao\thanks{Equal contribution} \\
  Purdue University \\
  \email{gao462@purdue.edu} \\
  \And
  Yangze Zhou\footnotemark[1] \\
  Purdue University \\
  \email{zhou950@purdue.edu} \\
  \And
  Bruno Ribeiro \\
  Purdue University \\
  \email{ribeiro@cs.purdue.edu} \\
}
\begin{document}

\maketitle

\begin{abstract}

The task of fully inductive link prediction in knowledge graphs has gained significant attention, with various graph neural networks being proposed to address it. 
This task presents greater challenges than traditional inductive link prediction tasks with only new nodes, as models must be capable of zero-shot generalization to both unseen nodes and unseen relation types in the inference graph. 
Despite the development of novel models, a unifying theoretical understanding of their success remains elusive, and the limitations of these methods are not well-studied. 
In this work, we introduce the concept of double permutation-equivariant representations and demonstrate its necessity for effective performance 
\update{in this task.}
We show that many existing models, despite their diverse architectural designs, conform to this framework. 
However, we also identify inherent limitations in double permutation-equivariant representations, which restrict these models’ ability to learn effectively on datasets with varying characteristics. 
Our findings suggest that while double equivariance is necessary for meta-learning across knowledge graphs from different domains, it is not sufficient. 
There remains a fundamental gap between double permutation-equivariant models and the concept of {\em foundation models} designed to learn patterns across all domains.

\end{abstract}

\section{Introduction}
\label{sec:intro}
Knowledge graphs (KGs) are often domain-specific, such as those used in health care~\citep{chandak2023building}, material science~\citep{venugopal2022largest,statt2023materials}, e-commerce~\citep{dong2018challenges}, etc..
For robust learning across multiple domains, it is desirable to pretrain knowledge graph models on multiple domains and then perform zero-shot predictions on new (unseen) domains (\Cref{fig:example-link}).
This work lays the foundations for what we term {\em zero-shot fully inductive link prediction}, which aims to predict missing factual triplets in {\ourgraph}s at inference time, even when both nodes and relations are unseen during training~\citep{ingram,galkin2023towards}.

{\Ourtask} differs from conventional inductive link prediction task~\citep{schlichtkrull2018modeling,teru2020inductive,galkin2021nodepiece,zhu2021neural,chenrefactor}, where only unseen nodes are found during inference, but the vocabulary of relation types remains the same. 
But while there are graph neural networks (GNNs) for zero-shot {\ourtask}, such as InGram~\citep{ingram} and Ultra~\citep{galkin2023towards} (sometimes described as knowledge graph foundation models~\citep{galkin2023towards,mao2024graph}), there is still no unified theoretical framework that explains why these approaches should work or any theory to guide the design of future architectures.

{\bf Contributions.} 
This work introduces a general theoretical framework, named {\em double permutation-equivariant graph representations}, which we demonstrate as the underlying principle behind models like InGram~\citep{ingram} and Ultra~\citep{galkin2023towards}. We demonstrate how this framework provides a general blueprint for designing future models by constructing a simple model following the theory and improving an existing model (InGram).
Most importantly, we highlight the limitations of {\bf all} such models in scenarios where negative transfer~\citep{zhang2022survey,wang2022mitigating} happens. Specifically:
\begin{enumerate}[leftmargin=*]
    \item 
    We formally define the notion of {\em double equivariant structural representations} and {\em distributionally double equivariant positional embeddings}. We demonstrate the generality of this theoretical framework by proving
    that Ultra~\citep{galkin2023towards} produces double equivariant structural representations, while InGram~\citep{ingram} generates distributionally double equivariant positional embeddings. 
    \item 
    Based on the these insights, we propose a simple yet effective variant of InGram, named {\em \OurOtherModel}, which enhances the robustness and stability of the original InGram model.
    Additionally, we introduce a straightforward modeling framework called {\em \OurNewModel}, based on the principle of double equivariant structural representations. 
    \update{This framework can transform {\em any} GNNs designed for homogeneous graphs into double equivariant models suitable for {\ourgraph}s.}
    \item We empirically demonstrate the effectiveness and improvements brought by our framework on two new {\ourgraph} datasets: \ourdata and \ourotherdata. Despite the positive results, we show that {\bf all} existing double equivariant models face significant limitations in their ability to jointly learn across multiple domains. Specifically, these models suffer from negative transfer when certain {\ourgraph}s are combined during training. This finding highlights a key challenge on the path toward achieving ideal graph foundation models~\citep{galkin2023towards,mao2024graph}.
\end{enumerate}

\begin{figure}
\begin{subfigure}{0.45\textwidth} \label{fig:example-link-a}
\centering
\includegraphics[width=\linewidth]{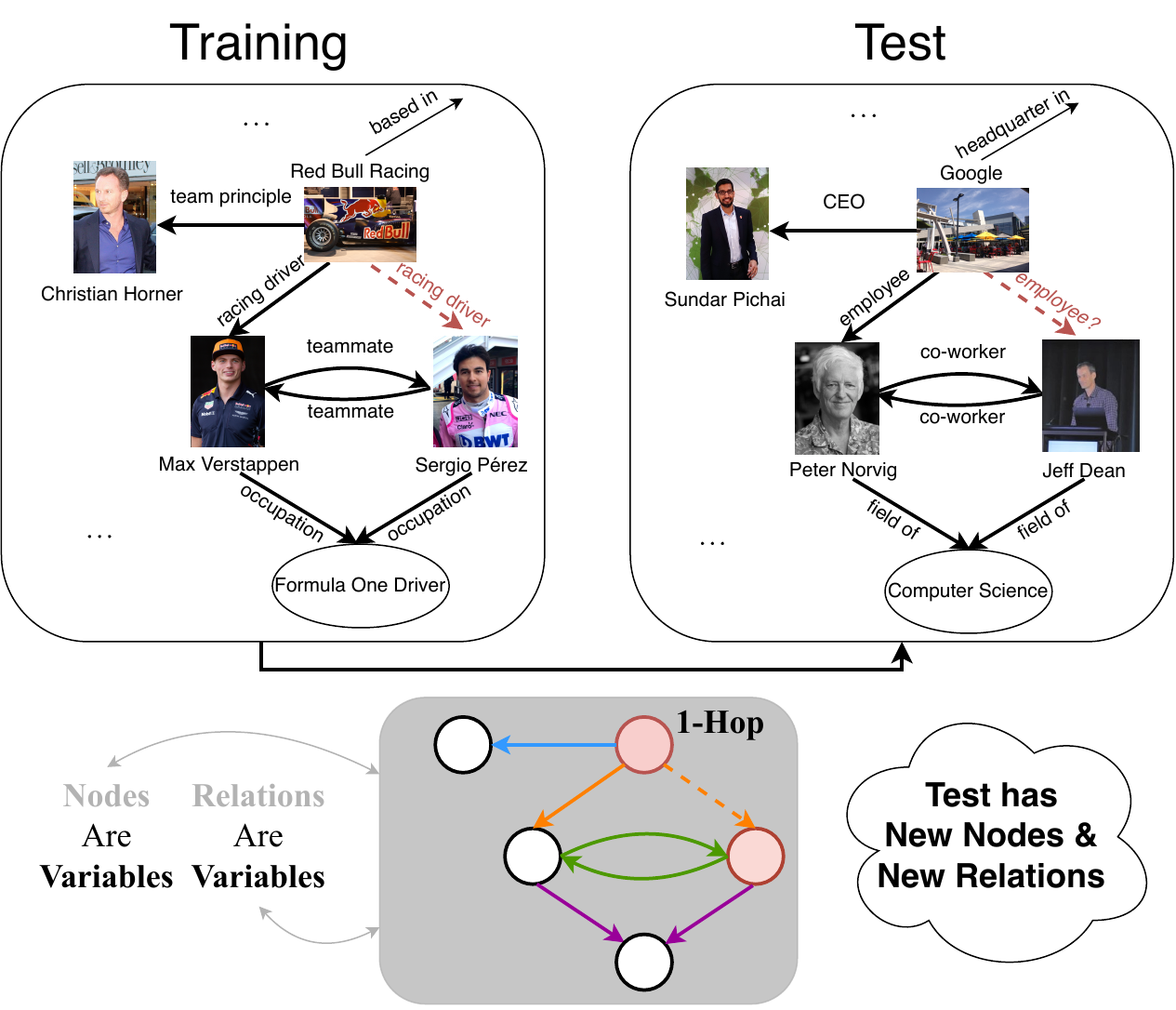} 
\subcaption{{\bf (Our task) {\ourtask}}}
\end{subfigure}
\hfill
\begin{subfigure}{0.45\textwidth} \label{fig:example-link-b}
\centering
\includegraphics[width=\linewidth]{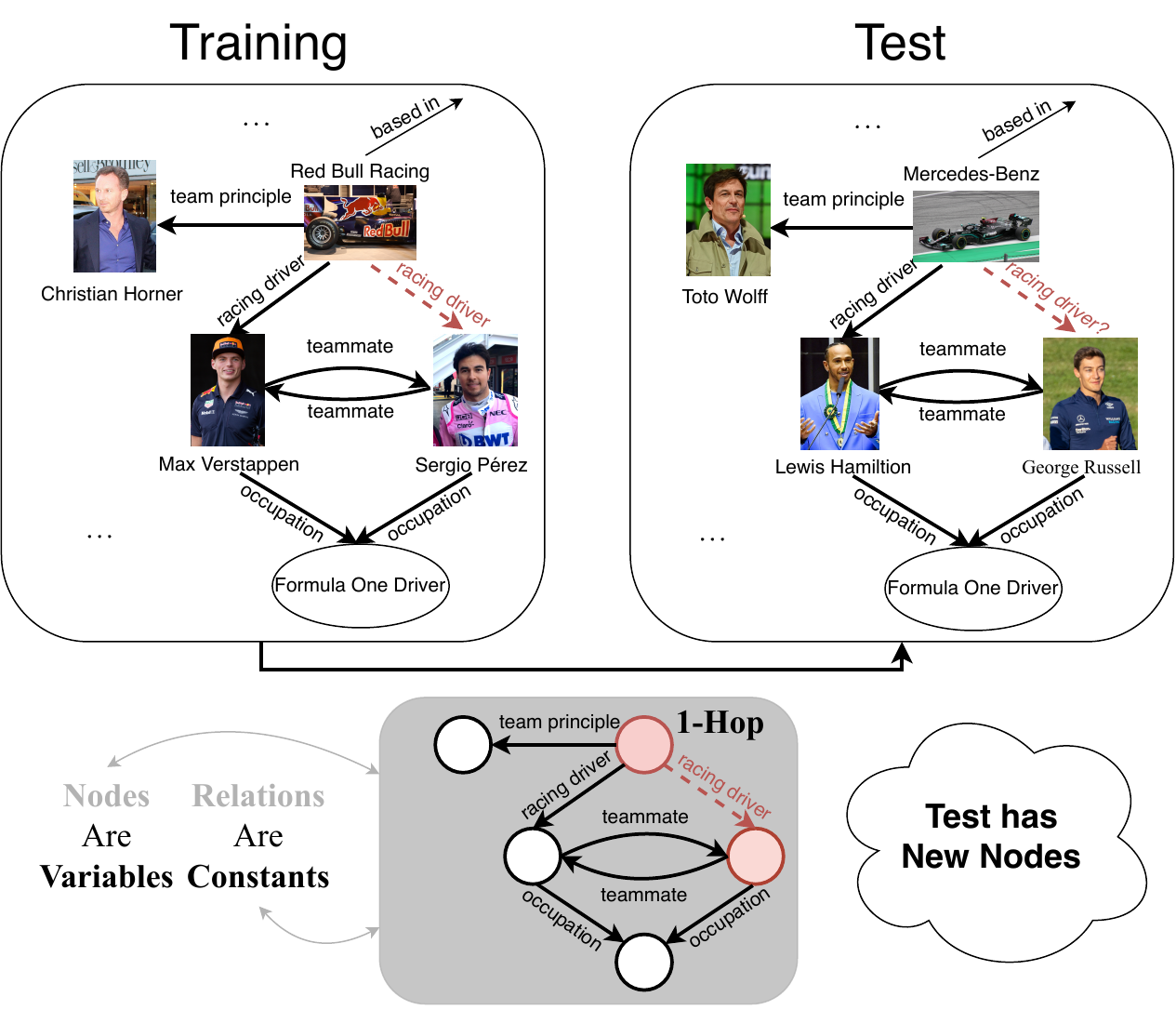} 

\subcaption{{\bf Node-only inductive link prediction}}
\end{subfigure}
\caption{\footnotesize  (a) {\bf {\Ourtask}:} Model learns to inductively predict new facts over an inference-time KG from a different domain, with 
new nodes and new relation types.
(b) {\bf Traditional node-only inductive link prediction:} 
\update{Inference-time KG has new nodes but relation types are all seen in training.}
}
\label{fig:example-link}
\vspace{-15pt}
\end{figure}


\section{Double Equivariance Enables Fully Inductive Link Prediction}\label{sec:theory}
In what follows, we first introduce the \ourtask task. We then proceed to theoretically describe the task in a general setting and propose our double equivariant modeling framework to handle the task using structural representations and positional embeddings.

\subsection{Formalizing the Fully Inductive Link Prediction Task}
\label{subsec:taskdef}
We now introduce notations and definitions used throughout this paper.
First, we formally define our inductive link prediction task for both new nodes and new relation types, i.e., \ourtask, over \ourgraph{s}. We denote $[n] := \{ 1, \dots, n \}$ for any $n\in \sN$.
Let $\cG\suptr = (\cV\suptr, \cR\suptr, \adj\suptr)$ be the training \ourgraph, where $\cV\suptr$ is the set of $N\suptr$ training nodes, $\cR\suptr$ is the set of $R\suptr$ training relation types.
We also define two associated bijective mappings $v\suptr_{\cdot}:[N\suptr]\rightarrow \cV\suptr, r\suptr_{\cdot}:[R\suptr]\rightarrow \cR\suptr$ that enumerate the nodes and relation types in training.
The tensor $\adj\suptr \in \{0, 1\}^{N\suptr \times R\suptr \times N\suptr}$ defines the adjacency  of the training graph such that $\forall (i,k,j)\in [N\suptr]\times [R\suptr]\times [N\suptr], \adj\suptr_{i, k, j} = 1$ indicates that the triplet $(v\suptr_i, r\suptr_k, v\suptr_j)$ is present in the data (%
we denote $(i,k,i)$ as the $k$-th attribute of node $i$). To simplify notation, we further refer to the collection of all {\ourgraph}s of any sizes as $\sA \coloneqq \cup_{N=1}^\infty\cup_{R=2}^\infty \{0, 1\}^{N \times R \times N}$. 

\begin{definition}[\textbf{\Ourtask task}]\label{def:task-double}
The task of \ourtask learns a model on a set of $K$ training graphs $\{\cG\suptr_1,\ldots,\cG\suptr_K\}$ (often in a self-supervised fashion by masking links) and inductively applies it to predict missing links in an inference graph $\cG\supte = (\cV\supte, \cR\supte, \adj\supte)$ with new nodes and new relation types, i.e., $\cV\supte \not\subset \cup^{K}_{i=1}\cV_{i}\suptr, \cR\supte \not\subset \cup^{K}_{i=1}\cR_{i}\suptr$, without extra context given to the model.
\end{definition}
In what follows, for simplicity we ignore the superscript $\text{train}$ and $\text{test}$.
And since there are bijections $v_\cdot \text{ and } r_\cdot$ between indices and nodes and relation types, we represent the triplet $(v_i,r_k,v_j)\in \cV\times \cR\times \cV$ with indices $(i,k,j)\in[N]\times[R]\times[N]$, and mainly use $\adj$ to denote the \ourgraph.  %

\update{
\Cref{fig:example-link}(a) shows a concrete example of a fully inductive link prediction task. Here, the training KG capturing the topic of F1 racing and the test KG concerning the organizational structure of a company have {\em non-overlapping} set of entities and relation types. This is in contrast to existing node-only inductive task as depicted in \Cref{fig:example-link}(b) as well as some of the relation-inductive settings studied by existing work~\citep{geng2023relational,zhao2020attention,MaKEr,csr2022,jin2022inductive}. Notably, the latter generally can handle unseen relation types {\em only when the test graph is a superset of the training graph}. In this case, the existence of an edge with an unseen relation type could be inferred from its neighborhood that only contains training relation types. These methods thus cannot be applied to \Cref{fig:example-link}(a) - here the neighborhood around the target edge in the test graph, (Google, employee, Jeff Dean), only consists of other unseen relation types. 
We also note that the test graph does not carry additional context such as textual description embeddings for unseen relations or graph ontology. \Cref{sec:related,sec:appd-rel} have more thorough discussion.

Because of the generality of \Cref{def:task-double}, the \ourtask forces the model to not rely on potential overlaps or additional contextual information, and must learn to differentiate nodes and relations based only on their structural relationships in $\adj$, rather than their labels in $\cV \text{ and } \cR$. This is achieved by what we call the {\em double equivariant representations}, which we elaborate next.
}

%
%

%
%
%

%
%
%
%
%
%
%
 %
\subsection{Double Equivariant Representations for Knowledge Graphs}
\label{sec:def}
In what follows, we provide definitions and theoretical statements of our proposed double equivariant \ourgraphease representations in the main paper while referring all proofs to \Cref{sec:proof}.
The proposal starts with defining the permutation actions on {\ourgraphs} as:%
\begin{definition}[\textbf{Node and relation permutation actions on {\ourgrapheases}}]
\label{def:rel-graph-perm}
For any \ourgraphease $\adj \in \sA$ with number of nodes and relations $\nodesize, \relsize$, a node permutation $\phi \in \sS_\nodesize$ is an element of the symmetric group $\sS_\nodesize$, a relation permutation $\tau \in \sS_\relsize$ is an element of the symmetric group $\sS_\relsize$, and the operation $\phi \circ \tau \circ \adj$ is the action of $\phi$ and $\tau$ on $\adj$, defined as $\forall (i, k, j) \in [\nodesize]\times[\relsize]\times[\nodesize], (\phi \circ\tau\circ \adj)_{\phi \circ i, \tau\circ k, \phi \circ j} = \adj_{i,k,j}$ where $\phi \circ i = \phi_i$ and $\tau\circ k = \tau_k$. The node and relation permutation actions on $\adj$ are commutative, i.e., $\phi\circ\tau\circ \adj = \tau\circ\phi\circ \adj$.
\end{definition}
To learn structural representation for both nodes and relations, %
we first design triplet representations that are invariant to the two permutation actions on nodes and relations, as shown below.
\begin{definition}[\textbf{Double invariant triplet representations}]
\label{def:trip-repr}
For any \ourgraphease $\adj \in \sA$ with number of nodes and relations $\nodesize,\relsize$, a double invariant triplet representation %
is a function $\trifunc{\Gamma}: \cup_{N=1}^\infty\cup_{R=2}^\infty([N]\times[R]\times[N])\times \sA \rightarrow \sR^{d}, d \geq 1$, such that $\forall (i, k, j) \in [\nodesize]\times[\relsize]\times[\nodesize], \forall \phi \in \sS_\nodesize, \forall \tau \in \sS_\relsize$, $\trifunc{\Gamma}((i, k, j), \adj) = \trifunc{\Gamma}((\phi \circ i, \tau \circ k, \phi \circ j), \phi \circ \tau \circ \adj)$.
\end{definition}
To understand the property of our double invariant triplet representations, we first introduce the notion of \ourgraph isomorphism and triplet double isomorphism.
\begin{definition}[\textbf{\Ourgraphease isomorphism and Triplet isomorphism}]
\label{def:rel-graph-iso}
We say two {\ourgrapheases} $\adj^{(G)}, \adj^{(G^\prime)} \in \sA$ with number of nodes and relations $N^{(G)},R^{(G)}$
and $N^{(G^\prime)}, R^{(G^\prime)}$ respectively, are isomorphic (denoted as ``$\adj^{(G)} \simeq_\text{graph} \adj^{(G^\prime)}$'') if and only if $\exists \phi\in\sS_{N^{(G)}}, \exists\tau\in \sS_{R^{(G)}}$, such that $\phi \circ \tau \circ \adj^{(G)} = \adj^{(G^\prime)}$. %
And we say two triplets $( i^{(G)}, k^{(G)}, j^{(G)} )\in [N^{(G)}]\times[R^{(G)}]\times[N^{(G)}]$, $( i^{(G^\prime)}, k^{(G^\prime)}, j^{(G^\prime)} ) \in [N^{(G^\prime)}]\times[R^{(G^\prime)}]\times[N^{(G^\prime)}]$ are isomorphic triplets (denoted as ``$(( i^{(G)}, k^{(G)}, j^{(G)} ), \adj^{(G)}) \simeq_\text{triplet} ((  i^{(G^\prime)}, k^{(G^\prime)}, j^{(G^\prime)}), \adj^{(G^\prime)})$'') if and only if $\exists \phi\in\sS_{N^{(G)}}, \exists\tau\in \sS_{R^{(G)}}$, such that $\phi \circ \tau \circ \adj^{(G)} = \adj^{(G^\prime)}$ and $( i^{(G^\prime)}, k^{(G^\prime)}, j^{(G^\prime)} ) = ( \phi \circ i^{(G)}, \tau \circ k^{(G)}, \phi \circ j^{(G)} )$.
\end{definition}

\update{
As an example, the training and test graphs in \Cref{fig:example-link}(a) are isomorphic KGs. Moreover, the triplet (Red Bull Racing, racing driver, Sergio Pérez) in the training graph is isomorphic to the triplet (Google, employee, Jeff Dean) in the test graph, as they are the counterparts of each other in the respective isomorphic KGs.
}
It is clear that our double invariant triplet representations are able to output the same representations for these isomorphic triplets, enabling \ourtask, where the model trained to predict the missing (Red Bull Racing, racing driver, Sergio Pérez) in the training graph is able to predict the missing (Google, employee, Jeff Dean) in test. 

\update{
\Cref{def:trip-repr,def:rel-graph-iso} generalize the traditional graph isomorphism defined on homogeneous graphs~\citep{xu2018powerful,srinivasan2020on} to knowledge graphs, by considering not only permutations actions on the nodes, which is those concerned by traditional graph isomorphisms, but also permutation actions on the relation types. 
}
The connection between \Cref{def:trip-repr} and logical reasoning can be found in \Cref{appx:logic}.
Next, we define the structure \ourequiv representations for the whole \ourgraphease $\adj$ (akin to how GNNs provide representations for a whole graph). %
\begin{definition}[\textbf{\Ourequiv \ \ourgraphease structural representations}]
\label{def:rel-graph-repr}
For any \ourgraphease $\adj \in \sA$ with  number of nodes and relations $\nodesize, \relsize$, a function $\grafunc{\Gamma}: \sA \rightarrow \cup_{N=1}^\infty\cup_{R=2}^\infty \sR^{N \times R \times N \times d}, d \geq 1$ is \ourequiv w.r.t.\ arbitrary node $\phi \in \sS_\nodesize$ and relation $\tau \in \sS_\relsize$ permutations, if $\grafunc{\Gamma}(\phi \circ \tau \circ \adj) = \phi \circ \tau \circ \grafunc{\Gamma}(\adj)$.
Moreover, valid mappings of $\Gamma_\text{graph}$ must map a domain element to an image element with the same number of nodes and relations.
\end{definition}
Finally, we connect \Cref{def:trip-repr,def:rel-graph-repr} by showing how to build double equivariant graph representations from double invariant triplet representations in \Cref{thm:inv-and-equiv}, and vice-versa.
\begin{restatable}[\textbf{From double invariant triplet representations to double equivariant graph representations}]{theorem}{thmone}
\label{thm:inv-and-equiv}
For all $\adj \in \sA$ with number of nodes and relations $\nodesize,\relsize$, given a \ourinv triplet representation $\trifunc{\Gamma}$, we can construct a \ourequiv graph representation as $\left( \grafunc{\Gamma}(\adj) \right)_{i,k,j} := \trifunc{\Gamma}((i,k,j), \adj)$, $\forall (i,k,j) \in [\nodesize]\times [\relsize]\times [\nodesize]$, and vice-versa.
\end{restatable}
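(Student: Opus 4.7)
The plan is to prove both directions of the equivalence by direct unpacking of definitions, since the theorem is essentially asserting that the permutation action on a tensor indexed by $(i,k,j)$ is dual to relabeling the triplet arguments of a function. There are no real estimates, no combinatorial arguments, no existence assertions beyond explicit construction, so I expect the whole argument to fit on a page.

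For the forward direction (invariant $\Rightarrow$ equivariant), I would fix an arbitrary $\adjwild\in\sA$ with dimensions $\nodesizewild,\relsizewild$, arbitrary $\phi\in\sS_\nodesizewild$ and $\tau\in\sS_\relsizewild$, and an arbitrary index $(i,k,j)\in[\nodesizewild]\times[\relsizewild]\times[\nodesizewild]$. I would then compute the $(\phi\circ i,\tau\circ k,\phi\circ j)$-entry of both sides of the claimed identity $\grafunc{\Gamma}(\phi\circ\tau\circ\adjwild)=\phi\circ\tau\circ\grafunc{\Gamma}(\adjwild)$. The left-hand side, by the definition $\grafunc{\Gamma}(\cdot)_{i',k',j'}:=\trifunc{\Gamma}((i',k',j'),\cdot)$ with indices $(i',k',j')=(\phi\circ i,\tau\circ k,\phi\circ j)$, equals $\trifunc{\Gamma}((\phi\circ i,\tau\circ k,\phi\circ j),\phi\circ\tau\circ\adjwild)$. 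The right-hand side, by the node/relation permutation action on tensors in \Cref{def:rel-graph-perm}, equals $\grafunc{\Gamma}(\adjwild)_{i,k,j}=\trifunc{\Gamma}((i,k,j),\adjwild)$. The two quantities are equal by double invariance of $\trifunc{\Gamma}$ (\Cref{def:trip-repr}). Since $(i,k,j)$ and the permutations were arbitrary, $\grafunc{\Gamma}$ is double equivariant. I would also briefly note that the image-dimension condition in \Cref{def:rel-graph-repr} is met because the construction produces a tensor of shape $\nodesizewild\times\relsizewild\times\nodesizewild\times d$.

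For the reverse direction (equivariant $\Rightarrow$ invariant), I would define $\trifunc{\Gamma}((i,k,j),\adjwild):=\grafunc{\Gamma}(\adjwild)_{i,k,j}$ and verify the invariance identity in \Cref{def:trip-repr} by the same chain read in the opposite order:
\[
\trifunc{\Gamma}((\phi\circ i,\tau\circ k,\phi\circ j),\phi\circ\tau\circ\adjwild)
=\grafunc{\Gamma}(\phi\circ\tau\circ\adjwild)_{\phi\circ i,\tau\circ k,\phi\circ j}
=(\phi\circ\tau\circ\grafunc{\Gamma}(\adjwild))_{\phi\circ i,\tau\circ k,\phi\circ j}
=\grafunc{\Gamma}(\adjwild)_{i,k,j}
=\trifunc{\Gamma}((i,k,j),\adjwild),
\]
where the first and fourth equalities are the construction, the second is double equivariance of $\grafunc{\Gamma}$, and the third is \Cref{def:rel-graph-perm}.

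There is no serious obstacle here; the only subtlety worth being careful about is consistent bookkeeping between the two dual conventions for the permutation action, namely $(\phi\circ\tau\circ\adjwild)_{\phi\circ i,\tau\circ k,\phi\circ j}=\adjwild_{i,k,j}$ from \Cref{def:rel-graph-perm} versus indexing the output tensor at $(i,k,j)$ directly. I would write the two directions in parallel so the reader sees that the same one-line identity is being read forward and backward, and I would verify commutativity of $\phi$ and $\tau$ only implicitly, since it is already established in \Cref{def:rel-graph-perm} and plays no further role.
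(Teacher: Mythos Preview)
Your proposal is correct and follows essentially the same approach as the paper's proof: both directions are handled by the same entrywise unpacking of the definitions, computing the $(\phi\circ i,\tau\circ k,\phi\circ j)$-entry of both sides and invoking \Cref{def:trip-repr}, \Cref{def:rel-graph-perm}, and \Cref{def:rel-graph-repr} in the obvious order. Your added remark that the image-dimension condition in \Cref{def:rel-graph-repr} is automatically satisfied is a small nicety the paper omits.
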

Next, we consider positional graph embeddings that are equivariant in distribution.

\subsection{Distributionally Double Equivariant Positional Graph Embeddings}
\label{sec:positional}
InGram~\citep{ingram} is an existing work capable of performing our \ourtask task~(\Cref{def:task-double}), but it does so with what we now define as {\em distributionally double equivariant positional embeddings}, which are permutation sensitive, as we will show in \Cref{subsec:de-ingram}:

\begin{definition}[\textbf{Distributionally double equivariant positional embeddings}]
\label{def:pos-trip-repr}
For any \ourgraphease $\adj \in \sA$ with number of nodes and relations $\nodesize,\relsize$, the distributionally double equivariant positional embeddings of $\adj$ are defined as joint samples of random variables $\tZ|\adj \sim p(\tZ|\adj)$,  where the tensor $\tZ $ is defined as $\tZ_{i,k,j}\in\sR^d, d\geq 1, \forall (i,k,j) \in [\nodesize]\times[\relsize]\times[\nodesize]$, where we say $p(\tZ|\adj)$ is a double equivariant probability distribution on $\adj$ defined as $\forall \phi\in \sS_\nodesize,\forall \tau\in \sS_\relsize,  p(\tZ |\adj) = p(\phi \circ \tau \circ  \tZ| \phi \circ \tau \circ \adj)$.
\end{definition}
Prior work on (standard) link prediction tasks has shown the advantages of equivariant representations over positional embeddings~\citep{zhang2021labeling}.
Others have established the equivalence between positional embeddings and structural representations for simple graphs by proving that representations based on an expectation of the positional embeddings are equivariant to node permutations~\citep{srinivasan2020on}.
In what follows, we extend this result to the double equivariant setting:

\begin{restatable}[\textbf{From distributional double equivariant positional embeddings to double equivariant representations}]{theorem}{thmthree}
\label{thm:equivalence} For any \ourgraphease $\adj \in \sA$, %
the average $\mathbb{E}_{p(\tZ|\adj)}[\tZ|\adj]$ is a double equivariant \ourgraphease representation (\Cref{def:rel-graph-repr}) for any distributional double equivariant positional embeddings $\tZ|\adj$ (\Cref{def:pos-trip-repr}). %
\end{restatable}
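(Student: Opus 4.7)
The plan is to unpack both sides of the equivariance identity $\grafunc{\Gamma}(\phi \circ \tau \circ \adjwild) = \phi \circ \tau \circ \grafunc{\Gamma}(\adjwild)$ for the candidate $\grafunc{\Gamma}(\adjwild) := \mathbb{E}_{p(\tZ|\adjwild)}[\tZ]$, and show they agree via a distributional change of variables that uses \Cref{def:pos-trip-repr} and the linearity of the permutation action.

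First, I would fix arbitrary $\phi \in \sS_\nodesizewild$ and $\tau \in \sS_\relsizewild$ and write
$$\grafunc{\Gamma}(\phi\circ\tau\circ \adjwild) = \mathbb{E}_{p(\tZ'|\phi\circ\tau\circ \adjwild)}[\tZ'].$$
By distributional double equivariance (\Cref{def:pos-trip-repr}), $p(\tZ|\adjwild) = p(\phi\circ\tau\circ \tZ\,|\,\phi\circ\tau\circ \adjwild)$, so sampling $\tZ' \sim p(\cdot\,|\,\phi\circ\tau\circ\adjwild)$ is equal in distribution to first sampling $\tZ \sim p(\cdot\,|\,\adjwild)$ and then setting $\tZ' = \phi\circ\tau\circ \tZ$. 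A pushforward-of-measure step therefore gives
$$\mathbb{E}_{p(\tZ'|\phi\circ\tau\circ\adjwild)}[\tZ'] = \mathbb{E}_{p(\tZ|\adjwild)}[\phi\circ\tau\circ \tZ].$$

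Second, I would observe that the permutation action on the positional embedding tensor is entry-wise relabeling in the sense of \Cref{def:rel-graph-perm}: $(\phi\circ\tau\circ \tZ)_{\phi\circ i,\tau\circ k,\phi\circ j} = \tZ_{i,k,j}$, so each entry of $\phi\circ\tau\circ \tZ$ is a single entry of $\tZ$, making the action linear in $\tZ$. Linearity of expectation then commutes the action past the integral:
$$\mathbb{E}_{p(\tZ|\adjwild)}[\phi\circ\tau\circ \tZ] = \phi\circ\tau\circ \mathbb{E}_{p(\tZ|\adjwild)}[\tZ] = \phi\circ\tau\circ \grafunc{\Gamma}(\adjwild).$$
Chaining the two displayed equalities yields the claimed identity. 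As a final check I would note that $\grafunc{\Gamma}(\adjwild)$ has the same number of nodes and relations as $\adjwild$ because $\tZ$ is indexed by $[\nodesizewild]\times[\relsizewild]\times[\nodesizewild]$ by definition, satisfying the domain/image size constraint in \Cref{def:rel-graph-repr}.

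The main (and only minor) obstacle I anticipate is being measure-theoretically careful about the change of variables: the symbolic identity $p(\tZ|\adjwild) = p(\phi\circ\tau\circ\tZ|\phi\circ\tau\circ\adjwild)$ must be read as an equality of distributions (i.e., a pushforward identity) so that expectations of the identity map coincide. Once this is made explicit, the argument is a two-line computation; no further technical machinery is required.
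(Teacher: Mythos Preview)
Your proposal is correct and follows essentially the same approach as the paper's proof: define $\grafunc{\Gamma}(\adjwild) := \mathbb{E}_{p(\tZ|\adjwild)}[\tZ]$, then combine the distributional equivariance of \Cref{def:pos-trip-repr} (as a pushforward/change-of-variables) with the linearity of the permutation action to pull $\phi\circ\tau$ through the expectation. The paper performs the same two steps in the opposite order (linearity first, then the density substitution via the integral), and your extra remark about the domain/image size constraint is a welcome explicit check that the paper leaves implicit.
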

\update{
Intuitively, a distributionally double equivariant model will produce different embeddings for triplet (Red Bull Racing, racing driver, Sergio Pérez) and (Google, employee, Jeff Dean) in \Cref{fig:example-link}(a). However, if the model is run multiple times (say each time with a different random seed), then the averaged embeddings for the two triplets averaged across runs will be very similar.
}
Later in \Cref{subsec:de-ingram}, we use the result in \Cref{thm:equivalence} to introduce DEq-InGram, a double equivariant representation that builds upon  InGram's distributionally double equivariant positional embeddings (\Cref{def:pos-trip-repr}) that is shown to significantly outperforms the original InGram in \Cref{sec:exp}.


\section{
Double Equivariance to Guide Model Design}
\label{sec:model}

Ultra~\citep{galkin2023towards} and InGram~\citep{ingram} are two recently proposed methods designed to address the {\ourtask} task. In this section, we demonstrate that both methods can be understood within the framework of double equivariance: Ultra generates double equivariant structural representations, while InGram produces distributionally double equivariant positional embeddings. Leveraging insight from~\Cref{thm:equivalence}, we present a straightforward enhancement to InGram, which we call \OurOtherModel. Next, we describe \OurNewModel, a general framework for converting any equivariant GNNs designed for homogeneous graphs into double equivariant models.

\subsection{Ultra Generates Double Equivariant Structural Representations}
\label{subsec:ultra}
%
\update{Ultra was proposed by~\citet{galkin2023towards} to address the {\ourtask} task.}
It builds upon NBFNet~\citep{zhu2021neural}, a path-based neural model originally designed for link prediction in homogeneous graphs. The key architectural innovation of Ultra is the construction of an {\em unweighted, directed relation graph} $\cG_{\text{rel}}$. In this relation graph, nodes represent the relations $\cR$ of the original graph $\cG$, and directed edge capture interaction between these relations with four distinct types: head-to-head, head-to-tail, tail-to-head, and tail-to-tail. Specifically, for two relations $r_1$ and $r_2$ in $\cR$, if there exists a node $v$ in $\cV$ that serves as the head for both $r_1$ and $r_2$ in the original graph $\cG$, then $r_1$ and $r_2$ are connected by a head-to-head edge in $\cG_{\text{rel}}$. The same logic applies to the other three types of edges, effectively capturing various ways relations interact within the {\ourgraph}.

During the forward propagation, Ultra first runs an instance of NBFNet on the relation graph $\cG_{\text{rel}}$. The resulting node representations from $\cG_{\text{rel}}$ correspond to representations of the relations of $\cR$ and are used as edge representations in the original graph $\cG$. Subsequently, a second instance of NBFNet is applied to produce the final node representation in $\cG$.

Our main result concerning Ultra is that it operates as a double equivariant graph representation model(\Cref{def:rel-graph-repr}). The key intuition is that the relation graph $\cG_{\text{rel}}$ is constructed to be invariant to the permutation of relation identities. Since NBFNet itself is an equivariant GNN, the resulting relation representations are equivariant to permutations of relations. Consequently, the final node representations generated by the second NBFNet instance are invariant under the combined permutations of nodes and relations. For a detailed proof, please refer to~\Cref{appx:extraThmProofs}.
\begin{restatable}{lemma}{lemultra}
    The triplet representations generated by Ultra~\citep{galkin2023towards} are double equivariant structural representations (\Cref{def:rel-graph-repr}).
\end{restatable}

\subsection{InGram Generates Distributionally Double Equivariant Positional Embeddings}
\label{subsec:de-ingram}

InGram, introduced by~\citet{ingram}, is another model designed to address the {\ourtask} task. Like Ultra~\citep{galkin2023towards}, InGram constructs a relation graph to capture the interaction between relations. However, unlike Ultra, InGram's relation graph is undirected but weighted, with edge weights computed heuristically to represent the affinity between relations. Intuitively, a high affinity between two relations indicates that they frequently co-occur in the original graph $\cG$. 

In the forward propagation, Ingram utilizes a GATv2 model~\citep{brody2021attentive} on the relation graph to generate relation embeddings. These relation embeddings are then input into a second GATv2 instance operating on the original graph to produce node embeddings. Finally, InGram applies a variant of DistMult~\citep{yang2015embedding} to compute triplet scores using the generated node and relation embeddings. 

Another key difference from Ultra, as we demonstrate in the following lemma, is that the embeddings produced by InGram are sensitive to permutations of node and relation identities. This sensitivity arises because InGram relies on initial node and relation embeddings before the first neural network layer, which are initialized via Glorot initialization~\citep{glorot2010understanding} and re-initialized at the start of each training epoch and during inference. Since random initialization is inherently sensitive to permutations of identities, the resulting embeddings from InGram are not permutation-equivariant. However, due to the model's architectural design, we show that InGram's embeddings can be considered as distributionally double equivariant positional embeddings (\Cref{def:pos-trip-repr}), meaning they are double equivariant {\em in expectation}. For a detailed proof, please refer to~\Cref{appx:extraThmProofs}.
\begin{restatable}{lemma}{lemfour} \label{lemma:ingram-repr}
The triplet representations generated by InGram~\citep{ingram} output distributionally double equivariant positional embeddings (\Cref{def:pos-trip-repr}). %
\end{restatable}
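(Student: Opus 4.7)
The plan is to decompose InGram's pipeline into a sequence of transformations and verify that each is (distributionally) double equivariant, so that their composition satisfies \Cref{def:pos-trip-repr}. Concretely, I would write the triplet representation as $\tZ = F(\adjwild, \xi)$, where the randomness $\xi$ comes solely from the Glorot initialization of the relation and node embeddings, and $F$ consists of: (i) a deterministic \emph{relation-graph builder} $\Psi$ that produces a weighted graph on $[\relsizewild]$ from relation co-occurrence statistics of $\adjwild$; (ii) a GNN on the relation graph producing relation embeddings $E_R$ from the initial relation embeddings; (iii) a message-passing GNN on $\adjwild$ that uses $E_R$ as relation features and yields node embeddings $E_N$; and (iv) a DistMult-style scoring function of $(E_N, E_R)$ producing $\tZ_{i,k,j}$.

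First I would establish that $\Psi$ is node-invariant and relation-equivariant: since its affinity weights depend on $\adjwild$ only through counts of how pairs of relations share head/tail entities, they are unchanged by any $\phi \in \sS_\nodesizewild$, and applying $\tau \in \sS_\relsizewild$ to $\adjwild$ permutes both the vertices and the edge weights of $\Psi(\adjwild)$ exactly by $\tau$. Hence $\Psi(\phi\circ\tau\circ\adjwild) = \tau\circ\Psi(\adjwild)$. Next, I would observe that Glorot initialization draws each coordinate i.i.d., so the joint law of the initial relation embeddings is exchangeable under $\tau$ and the joint law of the initial node embeddings is exchangeable under $\phi$; equivalently, $\xi \stackrel{d}{=} \phi\circ\tau\circ\xi$ for every $(\phi,\tau)$. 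Finally, I would invoke the standard equivariance of message passing: the relation-graph GNN is $\tau$-equivariant in its inputs, and the KG GNN on $\adjwild$ is jointly equivariant under simultaneous permutation of node features by $\phi$ and of relation features by $\tau$. The DistMult-style read-out is trivially equivariant because it depends on $(i,k,j)$ only through the corresponding embedding triplet.

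Combining these facts, I would conclude $F(\phi\circ\tau\circ\adjwild, \phi\circ\tau\circ\xi) = \phi\circ\tau\circ F(\adjwild, \xi)$ pathwise. Taking distributions and using the exchangeability of $\xi$, this yields
\begin{equation*}
p(\phi\circ\tau\circ\tZ \mid \phi\circ\tau\circ\adjwild) \;=\; p(F(\phi\circ\tau\circ\adjwild,\xi) \mid \phi\circ\tau\circ\adjwild) \;=\; p(\tZ \mid \adjwild),
\end{equation*}
which is exactly the condition in \Cref{def:pos-trip-repr}.

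The main obstacle I anticipate is step (i): carefully verifying that the particular heuristic InGram uses to build the relation graph is both node-invariant and relation-equivariant. Once that is pinned down, the remaining ingredients (exchangeability of i.i.d.\ Glorot draws, standard GNN equivariance under joint node/relation relabelings, and the fact that pathwise equivariance plus exchangeable noise implies distributional equivariance) are routine and compose cleanly to finish the lemma.
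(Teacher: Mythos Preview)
Your proposal is correct and follows essentially the same approach as the paper: decompose InGram into the relation-graph builder, the two GNNs, and the DistMult readout; argue each piece is (jointly) equivariant so the whole map is pathwise equivariant; then use the exchangeability of the i.i.d.\ Glorot initializations to pass from pathwise to distributional equivariance. If anything, your formulation via $\tZ = F(\adjwild,\xi)$ and the explicit identity $F(\phi\circ\tau\circ\adjwild,\phi\circ\tau\circ\xi)=\phi\circ\tau\circ F(\adjwild,\xi)$ is a bit cleaner than the paper's somewhat informal sketch, but the underlying argument is the same.
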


\subsection{\OurOtherModel: Enhancing Double Equivariance via Monte Carlo Sampling}
%

Building on our insight from~\Cref{lemma:ingram-repr} that InGram~\citep{ingram}'s embeddings are double equivariant {\em in expectation}, we propose a straightforward enhancement, which we name {\OurOtherModel}. As we will later demonstrate in~\Cref{sec:exp}, despite its simplicity, this modification significantly enhances the model's performance, showing the strength of our theoretical framework. 

Specifically, our approach uses the insight from~\Cref{thm:equivalence} by employing Monte Carlo sampling to obtain an estimate of a double equivariant graph representation. Let $\mathbf{Z}_{\text{InGram}}((i, k, j), \adj\supte, \mV^{(0)}, \mR^{(0)})$ denote InGram's triplet score function, where $\adj\supte$ is the inference \ourgraphease, $\mV^{(0)} \in \sR^{N\supte \times d}$ represents an initial node embeddings of dimension $d$, and $\mR^{(0)} \in \sR^{R\supte \times d'}$ represents an initial relation embeddings of dimension $d'$. Our {\OurOtherModel} computes its embeddings as follows:
\vspace{-10pt}
\begin{align}
    \Gamma_{\text{\OurOtherModel}}((i, k, j), \adj\supte) = \frac{1}{M} \sum_{m=1}^M \mathbf{Z}_{\text{InGram}}((i, k, j), \adj\supte, \mV^{(0)}_m, \mR^{(0)}_m)
    \label{eq:de-ingram}
\end{align}
\vspace{-10pt}

where $\{\mV^{(0)}_m\}_{m=1}^M$ and $\{\mR_m^{(0)}\}_{m=1}^M$ are $M$ i.i.d. samples drawn from Glorot initialization~\citep{glorot2010understanding}. Notably, this modification is applied at inference time and does not require retraining the InGram model. Hence, one can use a pre-trained InGram checkpoint directly and achieve improved test accuracy using our modification.

\subsection{Inductive Structural Double Equivariant Architecture Plus (\OurNewModel)}
\label{subsec:isdea}
Finally, we propose \OurNewModel, an alternative approach to Ultra~\citep{galkin2023towards} and InGram~\citep{ingram} which allows us to convert any GNNs designed for homogeneous graphs to a double equivariant graph representation model. Different from Ultra and InGram, we do not explicitly construct a relation graph. Rather, we perform an equivariant set aggregation of node representations over the relations using the DSS aggregation layer~\citep{maron2020learning,bromley1993signature}. This guarantees that the resulting representations are equivariant to permutations of relation identities.

Specifically, given a {\ourgraph} $\cG$ with adjacency matrix $\adj$, let $A^{(k)}$ be the relation-induced subgraph of $\adj$ consisting of only edges with relation $k \in \cR$. Then, $\cG$ can be equivalently expressed as a collection of relation-induced subgraphs, $\adj = \bigldcbrace A^{(1)}, \ldots, A^{(\relsize)} \bigrdcbrace$. Since the actions of the two permutation groups $\sS_\nodesize$ and $\sS_\relsize$ commute, the double equivariance of $\adj$ (\Cref{def:rel-graph-iso}) can be described as two (single) equivariance: A (graph) equivariance $\phi \in \sS_\nodesize$ over each relation-induced subgraph $A^{(k)}, k = 1, \ldots, \relsize$, and a (set) equivariance $\tau \in \sS_\relsize$  (over the set of subgraphs). Our double equivariant \OurNewModel model then make use of DSS layer~\citep{maron2020learning,bromley1993signature} to perform aggregation over set of relations. Specifically, the \OurNewModel layer $L :\sA \rightarrow \cup_{N=1}^\infty\cup_{R=2}^\infty \sR^{N \times R \times N \times d}$ is defined as follows. For each $k = 1, ..., \relsize$:
\vspace{-10pt}
\begin{equation}
\label{eq:dsslin}
\left( L \left( \adj \right) \right)_{:, k} = \text{GNN}_1 \left( A^{(k)} \right) + \text{GNN}_2 \left( \sum_{k' \in [\relsize], k \neq k} \left( A^{(k')} \right) \right),
\end{equation}
where $\text{GNN}_1, \text{GNN}_2: \sA \rightarrow \cup_{N=1}^\infty\sR^{N \times N \times d}$ are {\em arbitrary} node-equivariant GNN layers that produce pairwise representations~\citep{zhang2018link,zhu2021neural,zhang2021labeling}. Here, we abuse the notation $\sum$ to denote any valid set aggregation such as sum, mean, max, DepSets~\citep{zaheer2017deep}, etc.. The following lemma shows that \OurNewModel directly produces double equivariant representations. For a detailed proof, please refer to~\Cref{appx:extraThmProofs}.

\begin{restatable}{lemma}{lemthree}
$\Gamma_\text{\OurNewModel}$ in \Cref{eq:dsslin} is a \ourinv triplet representation as per \Cref{def:trip-repr}. %
\end{restatable}

\section{Related Work}\label{sec:related}
A more comprehensive discussion of related work can be found in \Cref{sec:appd-rel}.


\paragraph{Inductive link prediction over new nodes (but not new relations).}
Rule-induction methods~\citep{yang2015embedding,yang2017differentiable,meilicke2018fine,sadeghian2019drum} are inherently node-independent which aim to extract First-order Logical Horn clauses from the attributed multigraph.
Recently, with the advancement of GNNs, %
various works~\citep{schlichtkrull2018modeling,teru2020inductive,galkin2021nodepiece,zhu2021neural,chenrefactor} have applied the idea of GNN in relational prediction to learn structural node/pairwise representation. Although all these methods can be used to perform {\em inductive link prediction over solely new nodes}, they can not handle new relation types in test.
%

\paragraph{Inductive link prediction over both new nodes and new relations.}
Existing methods for querying triplets involving both new nodes and new relations generally assume access to extra context, such as generating language embedding for textual descriptions of unseen relation types~\citep{qin2020generative,geng2021ontozsl,Zha_Chen_Yan_2022,wang2021relational}, 
a shared background graph connecting seen and unseen relations (e.g., test graph has training relations~\citep{csr2022,chen2021topology,MaKEr}), 
or access to graph ontology~\citep{geng2023relational}. Hence, these methods cannot be directly applied to test graphs that neither contain meaningful descriptive information of the unseen relation types (e.g., url links) nor connection with 
nodes and relation types seen in training.
%

%
To the best of our knowledge, InGram~\citep{ingram} and Ultra~\citep{galkin2023towards} are the only existing methods capable of performing \ourtask without additional context data (just the test graph structure is available during inference). InGram and Ultra introduce relation graphs to capture relational representations based on their interactions. The connection between these methods and our work has been described in \Cref{sec:positional,subsec:de-ingram}. 
%

\section{Experiments on How Simple Double Equivariant Models Can Perform \OurTask}
\label{sec:exp}



%
%
\begin{table}[t]
\centering
\captionof{table}{
{\bf Relation \& Node Hits@10 performance on \OurTask over \ourdata.} We report standard deviations over 5 runs. %
A higher value means better \ourtask performance. 
\update{Models labeled with DEq are double equivariant models, and those labeled with d-DEq are distributionally double equivariant ones.}
The dataset name ``$X$-$Y$'' means training on graph $X$ and testing on graph $Y$. The best values are shown in bold font, while the second-best values are underlined. 
The highest standard deviation within each task is highlighted in red color.
``Rand'' row contains unbiased estimations of the performance from a random predictor.
{\bf Double equivariant models consistently achieve better results than non-double-equivariant models with generally smaller standard deviations.}
N/A*: Not available due to constant crashes.
}
{
\subcaption{{\bf {\scriptsize Relation prediction $(i, ?, j)$ performance in \%. Higher $\uparrow$ is better.}}}
\resizebox{0.99\linewidth}{!}{
\begin{tabular}{l r r r r r r r r}
    Models & EN-FR & FR-EN & EN-DE & DE-EN & DB-WD & WD-DB & DB-YG & YG-DB \\
    \midrule
    
    Rand & 19.60{\scriptsize $\pm 00.00$} & 19.60{\scriptsize $\pm 00.00$} & 19.60{\scriptsize $\pm 00.00$} & 19.60{\scriptsize $\pm 00.00$} & 19.60{\scriptsize $\pm 00.00$} & 19.60{\scriptsize $\pm 00.00$} & 19.60{\scriptsize $\pm 00.00$} & 19.60{\scriptsize $\pm 00.00$} \\
    \midrule
    
    GAT & 18.58{\scriptsize $\pm 00.52$} & 18.93{\scriptsize $\pm 00.33$} & 19.40{\scriptsize $\pm 00.28$} & 18.87{\scriptsize $\pm 00.19$} & 18.78{\scriptsize $\pm 00.28$} & 18.76{\scriptsize $\pm 00.33$} & 19.78{\scriptsize $\pm 01.39$} & 19.15{\scriptsize $\pm 00.35$} \\
    GIN & 19.34{\scriptsize $\pm 00.32$} & 19.34{\scriptsize $\pm 00.29$} & 18.98{\scriptsize $\pm 00.27$} & 18.88{\scriptsize $\pm 00.47$} & 19.30{\scriptsize $\pm 00.52$} & 18.86{\scriptsize $\pm 00.35$} & 18.69{\scriptsize $\pm 00.75$} & 18.92{\scriptsize $\pm 00.68$} \\
    GraphConv & 19.18{\scriptsize $\pm 00.27$} & 19.02{\scriptsize $\pm 00.64$} & 19.19{\scriptsize $\pm 00.24$} & 18.93{\scriptsize $\pm 00.60$} & 19.46{\scriptsize $\pm 00.38$} & 19.13{\scriptsize $\pm 00.54$} & 19.13{\scriptsize $\pm 01.24$} & 18.89{\scriptsize $\pm 00.57$} \\
    NBFNet & 21.93{\scriptsize $\pm 02.53$} & 22.20{\scriptsize $\pm 02.92$} & 18.98{\scriptsize $\pm 02.75$} &~~7.01{\scriptsize $\pm 01.43$} & 23.51{\scriptsize $\pm\red 07.06$} & 23.05{\scriptsize $\pm 03.55$} & 31.50{\scriptsize $\pm 04.82$} & 35.17{\scriptsize $\pm 05.13$} \\
    RMPI & 27.91{\scriptsize $\pm 06.48$} & 28.62{\scriptsize $\pm 03.75$} & 27.51{\scriptsize $\pm 06.48$}& 25.59{\scriptsize $\pm 06.48$} & N/A* & 16.76{\scriptsize $\pm 04.03$}& 39.03{\scriptsize $\pm 20.28$} & 11.77{\scriptsize $\pm 07.07$}\\
    \midrule
    InGram \update{(d-DEq)}  & 78.74{\scriptsize $\pm\red 07.48$} & 62.11{\scriptsize $\pm\red 13.60$} & 48.72{\scriptsize $\pm\red 08.94$} & 65.60{\scriptsize $\pm\red 14.42$} & 77.75{\scriptsize $\pm 06.60$} & 63.32{\scriptsize $\pm 02.78$} & 67.98{\scriptsize $\pm\red 25.45$} & 64.98{\scriptsize $\pm\red 26.69$} \\
    Ultra \update{(DEq)}
    &  \textbf{99.96}{\scriptsize $\pm 00.01$} 
    &  \textbf{99.85}{\scriptsize $\pm 00.04$} 
    &  \textbf{99.92}{\scriptsize $\pm 00.02$} 
    &  \textbf{99.69}{\scriptsize $\pm 00.16$} 
    &  \textbf{99.54}{\scriptsize $\pm 00.12$} 
    &  \underline{97.42}{\scriptsize $\pm 00.62$} 
    &  \textbf{95.72}{\scriptsize $\pm 01.67$} 
    &  \textbf{98.15}{\scriptsize $\pm 00.10$} 
    \\
    %
    
    \OurOtherModel \update{~(DEq)} & 87.94{\scriptsize $\pm 05.68$} & 80.47{\scriptsize $\pm 09.90$} & 68.89{\scriptsize $\pm 05.45$} & 80.79{\scriptsize $\pm 10.51$} & 91.47{\scriptsize $\pm 01.53$} & 77.03{\scriptsize $\pm\red 04.09$} & 77.72{\scriptsize $\pm 21.92$} & 89.30{\scriptsize $\pm 05.53$} \\

    \OurNewModel \update{~(DEq)} & \underline{99.12}{\scriptsize $\pm 00.24$} & \underline{98.84}{\scriptsize $\pm 00.06$} & \underline{99.20}{\scriptsize $\pm 00.13$} & \underline{98.99}{\scriptsize $\pm 00.12$} & \underline{98.56}{\scriptsize $\pm 00.12$} & \textbf{98.03}{\scriptsize $\pm 00.17$} & \underline{88.78}{\scriptsize $\pm 03.23$} & \underline{96.45}{\scriptsize $\pm 00.24$} \\
\end{tabular}
\label{tab:pediatypes-relation}
}
\vspace{5pt} %
{
\subcaption{{\bf {\scriptsize Node prediction $(i, k, ?)$ performance in \%. Higher $\uparrow$ is better.}}}
\resizebox{0.99\linewidth}{!}{
\begin{tabular}{l r r r r r r r r}
    Models & EN-FR & FR-EN & EN-DE & DE-EN & DB-WD & WD-DB & DB-YG & YG-DB \\
    \midrule
    
    Rand & 19.60{\scriptsize $\pm 00.00$} & 19.60{\scriptsize $\pm 00.00$} & 19.60{\scriptsize $\pm 00.00$} & 19.60{\scriptsize $\pm 00.00$} & 19.60{\scriptsize $\pm 00.00$} & 19.60{\scriptsize $\pm 00.00$} & 19.60{\scriptsize $\pm 00.00$} & 19.60{\scriptsize $\pm 00.00$} \\
    \midrule
    
    GAT & 89.77{\scriptsize $\pm 00.41$} & 86.83{\scriptsize $\pm 00.41$} & 66.24{\scriptsize $\pm 02.81$} & 69.08{\scriptsize $\pm 00.66$} & 31.08{\scriptsize $\pm 01.07$} & 77.05{\scriptsize $\pm 00.36$} & 53.51{\scriptsize $\pm 00.29$} & 64.13{\scriptsize $\pm 00.31$} \\
    GIN & 90.10{\scriptsize $\pm 00.61$} & 85.32{\scriptsize $\pm 01.18$} & 73.32{\scriptsize $\pm 03.35$}& 75.66{\scriptsize $\pm 04.85$} & 34.87{\scriptsize $\pm 09.12$} & 78.67{\scriptsize $\pm 02.46$} & 56.87{\scriptsize $\pm 00.44$} & 65.27{\scriptsize $\pm 01.14$} \\
    GraphConv & {92.97}{\scriptsize $\pm 00.11$} & \underline{90.56}{\scriptsize $\pm 00.04$} & 83.58{\scriptsize $\pm 00.68$} & {82.64}{\scriptsize $\pm 00.65$} & 40.59{\scriptsize $\pm 01.72$} & {79.28}{\scriptsize $\pm 01.29$} & 68.91{\scriptsize $\pm 00.51$} & 76.50{\scriptsize $\pm 00.14$} \\
    NBFNet & 87.64{\scriptsize $\pm 01.81$}& 89.77{\scriptsize $\pm 00.80$} & {85.56}{\scriptsize $\pm 02.07$} & 59.78{\scriptsize $\pm 03.73$}& {63.23}{\scriptsize $\pm 03.65$} & 78.24{\scriptsize $\pm 00.90$} & 49.97{\scriptsize $\pm 01.44$} & 66.36{\scriptsize $\pm 02.64$} \\
    RMPI & 89.59{\scriptsize $\pm\red 06.61$} & 81.79{\scriptsize $\pm 02.17$} & 82.93{\scriptsize $\pm\red 03.56$}& 81.38{\scriptsize $\pm\red 06.19$} & N/A* & 65.76{\scriptsize $\pm\red 07.45$}& 55.67{\scriptsize $\pm 06.61$} & 71.03{\scriptsize $\pm 02.12$}\\
    \midrule
    InGram \update{(d-DEq)} & {92.32}{\scriptsize $\pm 01.00$} & 83.71{\scriptsize $\pm\red 03.53$} & {90.82}{\scriptsize $\pm 01.84$} & {92.15}{\scriptsize $\pm 00.90$} & {61.44}{\scriptsize $\pm\red 09.84$} & {87.60}{\scriptsize $\pm 01.21$} & 54.79{\scriptsize $\pm\red 08.81$} & 67.84{\scriptsize $\pm 06.38$} \\
    Ultra \update{(DEq)}
    &  \textbf{98.67}{\scriptsize $\pm 00.05$}
    &  \textbf{98.22}{\scriptsize $\pm 00.10$} 
    &  \textbf{98.34}{\scriptsize $\pm 00.07$} 
    &  \textbf{97.00}{\scriptsize $\pm 00.12$} 
    &  \textbf{92.00}{\scriptsize $\pm 00.22$} 
    &  \textbf{95.92}{\scriptsize $\pm 00.13$} 
    &  \textbf{77.22}{\scriptsize $\pm 01.97$} 
    &  \textbf{85.48}{\scriptsize $\pm 01.15$} 
    \\
    %
    
    \OurOtherModel \update{~(DEq)} & 94.47{\scriptsize $\pm 00.60$} & 88.90{\scriptsize $\pm 02.06$} & 93.85{\scriptsize $\pm 00.36$} & 94.02{\scriptsize $\pm 00.74$} & 71.94{\scriptsize $\pm 07.37$} & \underline{91.47}{\scriptsize $\pm 00.62$} & \underline{71.53}{\scriptsize $\pm 04.78$} & \underline{80.53}{\scriptsize $\pm\red 07.96$} \\

    \OurNewModel \update{~(DEq)} & \underline{95.39}{\scriptsize $\pm 00.30$} & {81.57}{\scriptsize $\pm 03.17$} & \underline{97.66}{\scriptsize $\pm 00.19$} & \underline{95.03}{\scriptsize $\pm 00.44$} & \underline{86.60}{\scriptsize $\pm 00.59$} & 90.93{\scriptsize $\pm 00.24$} & 69.62{\scriptsize $\pm 01.10$} & 73.16{\scriptsize $\pm 00.82$} \\
\end{tabular}
\label{tab:pediatypes-node}
}
}
\vspace{-15pt}  %
}
\end{table}%

 In this section, we aim to answer one question: Can the general blueprint proposed in this work guide graph models to perform \ourtask over {\ourgrapheases} accurately? %
Specifically, we want to know {\bf Q1}: Can the proposed \OurNewModel framework successfully transform GNNs designed for homogeneous graphs into double equivariant models to perform \ourtask? {\bf Q2}: Can the proposed \OurOtherModel outperforms InGram on \ourtask in terms of both accuracy and robustness?

{\bf Dataset.}
To fully test the model's capability for \ourtask, we create a new dataset \textbf{\ourdata} (details in \Cref{appx:exp-pediatype}) by sampling from the OpenEA library~\citep{OpenEA}. The OpenEA library~\citep{OpenEA} contains multiple {\ourgrapheases} of relational databases from different domains on similar topics, such as DBPedia~\citep{lehmann2015dbpedia} in different languages (English, French and German), YAGO~\citep{rebele2016yago} and Wikidata~\citep{vrandevcic2014wikidata}. \ourdata includes pairs of {\ourgrapheases} such as English-to-French DBPedia (denoted as EN-FR), DBPedia-to-YAGO (denoted as DB-YG), etc.. 
In each graph, triplets are randomly divided into 80\% training, 10\% validation, and 10\% test. 
We then train and validate the model on one of the graphs (e.g., EN) and directly apply it to another graph (e.g., DE), which has completely new nodes and new relation types.


%
%

{\bf Baselines.} 
To the best of our knowledge, InGram~\citep{ingram} and Ultra~\citep{galkin2023towards} are the only two works capable of performing \ourtask without needing significant modification to the model. 
We also run RMPI~\citep{geng2023relational}, which is capable of performing \ourtask but requires extra context at test time.
%
%
In addition, we consider a variant of NBFNet~\citep{zhu2021neural} in which we modify its architecture following the approach in~\citep{ingram} to enable it on \ourtask. We also compare our models with message-passing GNNs, including GAT~\citep{velivckovic2017graph}, GIN~\citep{xu2018powerful}, GraphConv~\citep{Morris2019WL}, which treats the graph as a homogeneous graph by ignoring the relation types. 
%
For fair comparisons, we add distance features as in \Cref{eq:dssrepr} to increase the expressiveness of these GNNs. 
Additional baseline details are in \Cref{sec:appd-supexp}.

{\bf Evaluation.} We report the Hits@10 performances over 5 runs of different random seeds for all models on both the relation prediction task of $(i, ?, j)$ and the more traditional node prediction task of $(i, k, ?)$. For each task, we sample 50 negative triplets for each ground-truth positive target triplet during test evaluation by corrupting the relation type or the tail node respectively. %
{\em Further experiment details on baseline implementations, ablation studies, and other metrics (e.g., MRR, Hits@1) can be found in \Cref{sec:appd-supexp}.} 
%
%
%

%
{\bf Experiment results.}
\Cref{tab:pediatypes-relation} shows the results on the relation prediction task, and \Cref{tab:pediatypes-node} shows the node prediction task on \ourdata. In all scenarios across both tasks, our model \OurNewModel obtains competitive performance matching the best model, Ultra. The success of \OurNewModel, despite its simplicity, validates the effectiveness of our general blueprint in transforming arbitrary homogeneous GNNs into double equivariant models. 
Furthermore, \OurOtherModel, with its straightforward modification guided by the theory of double equivariance, significantly outperforms the original InGram across all datasets.
This directly corroborates our theoretical predictions in~\Cref{sec:theory} that a model directly producing double equivariant representations is more stable than positional embeddings, which are only double equivariant in expectation. 
\section{Experiment on Negative Transfers in Meta-Learning}  \label{sec:negative-transfer}

As we have empirically demonstrated the effectiveness of the double equivariance framework in~\Cref{sec:exp}, we now turn to a crucial question: {\em are existing double equivariant models ready to be knowledge graph foundation models~\cite{galkin2023towards,mao2024graph}?} Foundation models, as coined by~\citep{bommasani2021opportunities}, are models pre-trained on massive-scale data and are capable of adaption to a wide range of downstream applications.
A fundamental requirement of such models is the ability to effectively learn from pre-training data spanning multiple domains, while overcoming the challenge of negative transfer~\cite{zhang2022survey,wang2022mitigating}. Negative transfer occurs when the inclusion of data from certain domains in the training set somehow impairs the model's transfer learning performance. Another critical requirement is adherence to strong data scaling law~\citep{kaplan2020scaling,chen2024uncovering,huang2024prodigy}, meaning that the model's performance should consistently improve as the training data scale increases. Hence, we investigate the following questions {\bf Q1:} Are there specific {\ourgraph} that consistently cause negative transfer in existing double equivariance models? {\bf Q2:} How well do existing double equivariant models conform to the data scaling law?

\paragraph{Dataset and experiment setup} To this end, we introduce another dataset {\bf \ourotherdata}, and design a meta-learning experiment. \ourotherdata is constructed from WikiData-5M~\citep{wang2021kepler} by grouping the relations into 11 different topics, or domains, such as infrastructure, science, sport, locations, etc.. 
Each group of relations induces a knowledge graph containing only the relations from that group, allowing each knowledge graph to be viewed as representing a specific domain. 
For the experiment, we select a subset of these knowledge graphs for training, while holding out others for testing. Specifically, we train multiple model instances with an increasing number of knowledge graphs mixed into the training process to investigate whether the models adhere to data scaling laws. Additionally, we experiment with different combinations of training knowledge graphs to determine if a certain knowledge graph would consistently trigger negative transfer. \Cref{sec:appd-supexp} contains further dataset and experiment details.

\begin{table}[t]
    \centering
    \caption{Average training and zero-shot Hits@1 performance (in \%) of Ultra, \OurOtherModel, and \OurNewModel on a meta-learning scenario in \ourotherdata. (1) {\bf\em Loc} is one KG that we identified to consistently trigger negative transfer effects among all models. {\bf Mixing {\em Loc} in training data consistently diminishes both the training and test performance across all double equivariant models.} We show in parenthesis the relative performance difference comparing the training combination with {\bf\em Loc} and a similar combination without {\bf\em Loc}. (2) {\bf All models exhibit limited performance improvement as the number of training KG domains increases.}}
    \label{tab:negative-transfer-node}
    {
    \subcaption{Relation prediction $(i, ?, j)$}
    \resizebox{0.99\linewidth}{!}{
    \begin{tabular}{l l l l l l l l}
        \toprule
        \multirow{2.5}{*}{Meta-Learning Training KGs} & \multirow{2.5}{*}{Neg. Transfer?} & \multicolumn{3}{c}{Training Hits@1} & \multicolumn{3}{c}{Average Zero-shot Hits@1} \\
        \cmidrule(lr){3-5} \cmidrule(lr){6-8}
        & & Ultra & \OurOtherModel & \OurNewModel & Ultra & \OurOtherModel & \OurNewModel \\
        \midrule
        Infra & & 91.82 & 73.94 & 75.15 & 77.22 & 62.07 & 80.70 \\
        {\bf Loc} & \checkmark & 28.31 {\scriptsize$(-69.17\%)$} & 18.53 {\scriptsize$(-74.94\%)$} & 81.10 {\scriptsize $(+7.92\%)$} & 13.06 {\scriptsize $(-83.09\%)$} & 19.64 {\scriptsize $(-68.36\%)$} & 85.16 {\scriptsize $(+5.53\%)$}	 \\
        \midrule
        Infra + Sci & & 92.79 & 58.44 & 83.98 & 80.07 & 60.19 & 85.87 \\
        Infra + {\bf Loc} & \checkmark & 82.34 {\scriptsize$(-11.26\%)$} & 59.64 {\scriptsize$(+2.05\%)$} & 73.17 {\scriptsize$(-12.87\%)$} & 48.69 {\scriptsize $(-39.19\%)$} & 47.36 {\scriptsize $(-21.32\%)$} & 74.27 {\scriptsize $(-13.42\%)$} \\
        \midrule
        Infra + Sci + Sport & & 92.03 & 87.03 & 86.62 & 79.28 & 67.48 & 80.88 \\
        Infra + Sci + {\bf Loc} & \checkmark & 83.65 {\scriptsize $(-9.11\%)$} & 36.56 {\scriptsize $(-57.99\%)$} & 84.21 {\scriptsize $(-2.78\%)$} & 22.42 {\scriptsize $(-71.72\%)$} & 12.28 {\scriptsize $(-81.80\%)$} & 75.47 {\scriptsize $(-6.69\%)$} \\
        \midrule
        Infra + Sci + Sport + Tax & & 91.50 & 57.98 & 95.63 & 76.67 & 30.25 & 80.70 \\
        Infra + Sci + Sport + {\bf Loc} & \checkmark & 91.84 {\scriptsize $(+0.37\%)$} & 31.81 {\scriptsize $(-45.14\%)$} & 91.72 {\scriptsize $(-4.08\%)$} & 69.89 {\scriptsize $(-8.84\%)$} & 20.25 {\scriptsize $(-33.06\%)$} & 79.53 {\scriptsize $(-1.44\%$} \\
        \bottomrule
    \end{tabular}
    }
    }
    \vspace{5pt} %
    {
    \subcaption{Node prediction $(i, r, ?)$}
    \resizebox{0.99\linewidth}{!}{
    \begin{tabular}{l l l l l l l l}
        \toprule
        \multirow{2.5}{*}{Meta-Learning Training KGs} & \multirow{2.5}{*}{Neg. Transfer?} & \multicolumn{3}{c}{Training Hits@1} & \multicolumn{3}{c}{Average Zero-shot Hits@1} \\
        \cmidrule(lr){3-5} \cmidrule(lr){6-8}
        & & Ultra & \OurOtherModel & \OurNewModel & Ultra & \OurOtherModel & \OurNewModel \\
        \midrule
        Infra & & 89.93 & 82.65  & 73.53 & 67.19 & 47.22 & 39.79 \\
        {\bf Loc} & \checkmark & 36.92 {\scriptsize $(-58.95\%)$} & 84.93 {\scriptsize $(+2.76\%)$} & 42.31 {\scriptsize $(-42.46\%)$} & 43.04 {\scriptsize $(-35.94\%)$} & 38.47 {\scriptsize $(-18.53\%)$} & 27.15 {\scriptsize $(-31.77\%)$} \\
        \midrule
        Infra + Sci & & 87.44 & 79.80 & 81.73 & 67.49 & 47.67 & 41.92\\
        Infra + {\bf Loc} & \checkmark & 64.19 {\scriptsize $(-26.59\%)$} & 88.35 {\scriptsize $(+10.71\%)$} & 59.60 {\scriptsize $(-27.08\%)$} & 62.98{\scriptsize $(-36.23\%)$} & 42.18 {\scriptsize $(-11.52\%)$} & 38.50{ \scriptsize $(-8.16\%)$} \\
        \midrule
        Infra + Sci + Sport & & 68.40 & 66.82 & 66.23  & 66.97 & 53.92 & 41.34  \\
        Infra + Sci + {\bf Loc} & \checkmark & 64.09 {\scriptsize $(-6.30\%)$} & 87.14 {\scriptsize $(+30.41\%)$} & 66.32 {\scriptsize $(+0.14\%)$} & 63.08 {\scriptsize $(-5.81\%)$} & 41.20 {\scriptsize $(-23.59\%)$} & 37.63 {\scriptsize $(-8.97\%)$}	 \\
        \midrule
        Infra + Sci + Sport + Tax & & 70.26 & 43.87 & 71.51 & 66.90 & 34.16 & 42.45 \\
        Infra + Sci + Sport + {\bf Loc} & \checkmark & 68.71 {\scriptsize $(-2.21\%)$} & 74.28 {\scriptsize $(+69.32\%)$} & 64.55 {\scriptsize $(-9.73\%)$} & 65.88 {\scriptsize $(-1.52\%)$} & 44.33 {\scriptsize $(+29.77\%)$} & 41.34 {\scriptsize $(-2.61\%)$} \\
        \bottomrule
    \end{tabular}
    }}

\end{table}

\paragraph{Experiment results} \Cref{tab:negative-transfer-node} presents the main findings of our experiment. Here we trained the models on a progressively increasing combination of training KGs from the domains of Infrastructure, Science, Sport, Taxonomies, and Locations (abbreviated as Infra, Sci, Sport, Tax, \& Loc respectively). The models were then evaluated on four held-out test domains: art, award, education, and health care, with results averaged across these test domains. Each model was trained for 3 different random seeds for each setting, and we report the Hits@1 performance for both node prediction $(i, k, ?)$ and relation prediction $(i, ?, j)$ tasks, using 50 negative samples. Additionally, we report the models' accuracy on both the training graphs and the inference graphs.

We first note that the {\ourgraph} from the domain Loc is indeed one KG that consistently triggers negative transfer across all models. Specifically, whenever Loc is included in the training data mix (e.g. Infra + Sci + Loc), the models demonstrate worse zero-shot test performance compared to training with other combinations of KGs with the same number of domains (e.g. Infra + Sci + Sport). Additionally, in most cases, the models also exhibit a drop in training accuracy when Loc is included. This suggests that the Loc KG may contain patterns that conflict with those in other KGs, causing models with limited expressivity to struggle with jointly learning these conflicting patterns. Therefore, in response to Q1, all evaluated models are indeed susceptible to negative transfer, with Loc serving as a concrete example.

Our second observation concerns the data scaling law. We find that no model consistently exhibits improved zero-shot test performance as the number of training KG domains increases, whether the Loc KG, which causes negative transfer, is included in the training mix or not. Notably, DEq-InGram unexpectedly shows a sharp decline in performance when the number of training domains increases from 3 to 4, while Ultra and ISDEA+ generally display saturated accuracy. Therefore, in response to Q2, our experiments indicate that all existing double equivariant models have limited data scaling capabilities. Addressing this limitation remains an important direction for future research.


\section{Conclusion}
This work formally introduced the concept of {\em double equivariant structural representations} and {\em distributionally double equivariant positional embedding} as a unifying theoretical framework for fully inductive link prediction in knowledge graphs. We demonstrated that this framework underpins the effectiveness of existing models capable of such a task, and provided a solid blueprint for designing future models. Our empirical studies showed that the proposed framework significantly enhances model performance, as seen with DEq-InGram, and offers a systematic approach to constructing double equivariant models, such as with ISDEA+.

Despite these advancements, our experiments also highlighted that the current double equivariant models are hindered by negative transfer effects and poor data scaling behavior when jointly learning from data over multiple domains in a meta-learning setting. Our proposed benchmark and the experiment results thus presents opportunities for future work to address these gaps to advance toward realizing true knowledge graph foundation models.

\nocite{cloudbank}

\bibliographystyle{unsrtnat}
\bibliography{main}

\newpage
\appendix

\section{Detailed Model Design for \OurNewModel}
\label{appx:detailed_design}
Here we provide further details on the architectural design of \OurNewModel.

\subsection{Implementation Details of \OurNewModel}

We use GNN layers for constructing $L_1,L_2$.
Since most-expressive pairwise representations are computationally expensive, we trade-off expressivity in the implementation of \Cref{eq:dsslin} for speed and memory by using node representation GNN layers \citep{xu2018powerful,velivckovic2017graph,Morris2019WL}. Specifically, for a \ourgraphease $\adj$ with 
number of nodes and relations $\nodesize,\relsize$, 
at each iteration $t = 1, ..., T$, for each relation type $k\in [\relsize]$, all nodes $i \in [\nodesize]$ are associated with two learned vectors $h_{i,k}^{(t)} \in \sR^{d_t}, h_{i,\neg k}^{(t)} \in \sR^{d_t}, d_t \geq 1$.
If there are
no node attributes, we initialize $\forall k \in [\relsize],h_{i,k}^{(0)} = h_{i,\neg k}^{(0)} = \mathbbm{1}$. %
Then we recursively compute the update, 
$\forall i \in [\nodesize], \forall k \in [\relsize]$,
\begin{equation*}
\scriptstyle
    \begin{aligned}
    \scriptstyle
         h_{i, k}^{(t + 1)} & \scriptstyle= \text{GNN}^{(t)}_1 \Big( h_{i, k}^{(t)}, \bigldcbrace h_{j, k}^{(t)} \big\vert j \in \mathcal{N}_k(i) \bigrdcbrace \Big) , \quad \quad\quad \ \ h_{i, \neg k}^{(t + 1)} = \text{GNN}^{(t)}_2 \Big( h_{i, \neg k}^{(t)}, \bigldcbrace h_{j, \neg k}^{(t)} \big\vert j \in \bigcup\limits_{k' \neq k}\mathcal{N}_k(i) \bigrdcbrace \Big),  \quad\ \ \ \text{if } t = 0, \\
        \scriptstyle
         h_{i, k}^{(t + 1)} & \scriptstyle= \text{GNN}^{(t)}_1 \Big( h_{i, k}^{(t)}, \bigldcbrace h_{j, k}^{(t)} \big\vert j \in \bigcup\limits_{k' \in [\relsizewild]}\mathcal{N}_{k'}(i) \bigrdcbrace \Big) , h_{i, \neg k}^{(t + 1)} = \text{GNN}^{(t)}_2 \Big( h_{i, \neg k}^{(t)}, \bigldcbrace h_{j, \neg k}^{(t)} \big\vert j \in \bigcup\limits_{k' \in [\relsizewild]}\mathcal{N}_{k'}(i) \bigrdcbrace \Big) , \ \text{if } t > 0,
    \end{aligned}
\end{equation*}%
where $\text{GNN}^{(t)}_1$ and $\text{GNN}^{(t)}_2$ denote two GNN layers %
and $\cN_k(i) \coloneqq \left\{ j \middle| \adj_{j, k, i} = 1 \right\}$ denotes the neighborhood set of node $i$ with relation $k$ in the unattributed graph $A^{(k)}$.
To get the final representation $X_{i,k}$ for the node $i$ with respect to relation $k$. We define $h_{i,k} = h^{(0)}_{i,k}\Big\|h^{(1)}_{i,k}\Big\|\cdots \Big\|h^{(T)}_{i,k}$, $h_{i,\neg k} = h^{(0)}_{i,\neg k}\Big\|h^{(1)}_{i,\neg k}\Big\|\cdots \Big\|h^{(T)}_{i,\neg k}$, and combine the two embeddings as illustrated in \Cref{eq:dsslin},
\begin{equation}
\label{eq:finalMLP}
    X_{i,k} = \text{MLP}_1(h_{i,k}) + \text{MLP}_2(h_{i,\neg k}), \forall i\in[\nodesize], \forall k\in [\relsize],
\end{equation}
where $\text{MLP}_1, \text{MLP}_2$ are two multi-layer perceptrons, $\|$ as the concatenation operation.

As shown by previous studies, structural node representations %
are not most expressive for link prediction in unattributed graphs~\citep{srinivasan2020on,you2019position}.
Hence, we concatenate $i$ and $j$ (\ourequiv) node representations with the shortest distance between $i$ and $j$ in the observed graph as our triplet representations (appending distances is also adopted in the representations of prior work~\citep{teru2020inductive,galkin2021nodepiece}).
Finally, we obtain the triplet representation,
\begin{equation}
\label{eq:dssrepr}
\Gamma_\text{\OurNewModel}((i,k,j), \adjwild) = \Big( X_{i, k} \Big\| X_{j, k} \Big\| d(i, j) \Big\| d(j, i) \Big), \forall (i, k, j) \in [\nodesize]\times[\relsize]\times[\nodesize],
\end{equation}where we denote $d(i, j)$ as the length of shortest path from $i$ to $j$ without considering $(i,k,j)$.
Since our graph is directed, we concatenate them in both directions. 

%
%
%
 
As in %
\citep{yang2015embedding,schlichtkrull2018modeling,zhu2021neural}, we use negative sampling in our training with the difference that we account for both predicting missing nodes and relation types~(\Cref{def:task-double}).
Specifically, for each positive training triplet $(i,k,j)$ such that $\adj\suptr_{i,k,j}=1$, we first randomly corrupt either the head or the tail $n_\text{nd}$ times to generate the negative (node) %
examples $(i,k,j')$.
Additionally, %
we also want 
our model to learn the correct relation type $(i,?,j)$ between a pair of nodes.  %
Thus, we corrupt relation $n_\text{rl}$ times to generate negative (relation) examples $(i,k',j)$.
In our training, $n_\text{nd} = n_\text{rl} = 2$; while in evaluation, $n_\text{nd} = 50, n_\text{rl} = 0$ for node evaluation, and $n_\text{nd} = 0, n_\text{rl} = 50$ for relation evaluation.
Following \citep{schlichtkrull2018modeling}, we use cross-entropy loss 
to encourage the model to
score positive examples higher than corresponding negative examples:
\begin{align}
\label{eq:loss}
\mathcal{L}\! &=\! - \!\!\sum_{(i, k, j) \in \mathcal{S}} \left( \log \left( \trifunc{\Gamma}((i,k,j), \adj\suptr) \right) \right. \nonumber\\
&\left. + \frac{1}{n_\text{nd} + n_\text{rl}} \sum_{p = 1}^{n_\text{nd} + n_\text{rl}} \log \left( 1 -\trifunc{\Gamma}\left( \left(i'_p, k'_p, j'_p \right), \adj\suptr \right) \right) \right),
\end{align}
where $S = \left\{ (i, k, j) \middle| \adj\suptr_{i, k, j} = 1 \right\}$, and $\left( i'_p, k'_p, j'_p \right)$ are the $p$-th negative node or relation example corresponding to $(i, k, j)$.

We choose GCN~\citep{Kipf2016} as our GNN kernel. In the implementation, we also follow SIGN~\citep{sign_icml_grl2020} to not have activation function between GNN layers to make the method faster and more scalable, and only have activation functions in the final MLPs in \Cref{eq:finalMLP}. In the message passing scheme, we only use DSS-GNN in the first layer, while using the whole graph adjacency matrix in following layer updates. This procedure guarantees the double equivariant property of \OurNewModel and increases the expressiveness of \OurNewModel to capture more diverse relation paths. In training, we carefully design the training batch, so that the each gradient step considers only one relation vs all others. Another way to improve the computation complexity is via parallelization.

\textbf{Time Complexity.} 
For each layer of \OurNewModel, it can be treated as running 2 unattributed GNN times on the \ourgraphease, thus time cost is roughly $2$ times of adopted GNN. In our experiment, we use node representation GNNs (e.g., GIN \citep{xu2018powerful}, GAT \citep{velivckovic2017graph}, GraphConv~\citep{Morris2019WL}) as our GNN architecture, thus the complexity is $\mathcal{O}(L(|\cS|d+|\cV|d^2))$ where $L$ is the number of layers, $d$ is the maximum size of hidden layers, $|\cV|$ is number of nodes and $|\cS|$ is number of fact triplets (number of edges) in the \ourgraphease. Besides, for both positive and negative samples $(i,k,j)$, our method requires the shortest distance between any two nodes without considering $(i,k,j)$, which can be achieved from the Dijkstra or Floyd algorithm.


%
\section{Connection to Double Equivariant Logical Reasoning}
\label{appx:logic}

In what follows, we follow the literature and connect link prediction in \ourgraph to logical induction~\citep{teru2020inductive,zhu2022neural,qiu2023logical}. Existing logical induction requires all involved relations to be observed at least once, thus, such logical reasoning can not generalize to new relation types.
We propose the Universally Quantified Entity and Relation (UQER) Horn clause, a double equivariant extension of conventional logical reasoning, which is capable of generalizing to new relation types, and show that the double invariant triplet representation in \Cref{def:rel-graph-iso} is capable of encoding such set of UQER Horn Clauses.

\begin{definition}[Universally Quantified Entity and Relation (UQER) Horn clause]
\label{def:uqer}
An UQER Horn clause involving $M$ nodes and $K$ relations is defined by an indicator tensor $\tB \in \{0,1\}^{M\times K \times M}$:
\begin{equation}
\label{eq:uqer}
\begin{gathered}
    \forall E_1 \in \cV, \left( \forall E_u \in \cV \setminus \left\{ E_1, \ldots, E_{u-1} \right\} \right)_{u=2}^M \,,
    \forall C_1 \in \cR, \left( \forall C_c \in \cR \setminus \left\{ C_1, \ldots, C_{c-1} \right\} \right)_{c=2}^K \,, \\
    \bigwedge_{\substack{u,u' = 1, \ldots, M, c = 1, \ldots, K, \\ \tB_{u,c,u'} = 1 }} (E_u,C_c,E_{u'}) \implies (E_1,C_1,E_h) ,\:
\end{gathered}
\end{equation}
for any node set $\cV$ and relation set $\cR$ with number of nodes and relations $\nodesize,\relsize$ s.t. $\nodesize \geq M, \relsize \geq K$,
$h \in \{1,2\}$ (where $h=1$ indicates a self-loop relation or a relational node attribute), where if $M > h$, $\forall u \in \{h + 1, \ldots, M \}$,
$\sum_{u' = 1}^M \sum_{c = 1}^K \tB_{u,c,u'} + \tB_{u',c,u} \geq 1$, and if $K \geq 2$, $\forall c \in \{2, \ldots, K \}$, $\sum_{u = 1}^M \sum_{u' = 1}^M \tB_{u,c,u'} + \tB_{u',c,u} \geq 1$ (every variable should appear at least once in the formula).
\end{definition}

Note that our definition of UQER Horn clauses (\Cref{def:uqer}) is a generalization of the First Order Logic (FOL) clauses in \cite{yang2017differentiable,meilicke2018fine,sadeghian2019drum,teru2020inductive} such that the relations in the Horn clauses are also universally quantified rather than predefined constants.
UQER can be used to predict new relations in the test \ourgraph with {\em pattern matching}, i.e., if the left-hand-side (condition) of a UQER can be satisfied in the test \ourgraph, then the right-hand-side (implication) triplet should be present.
In \Cref{fig:example-uqer}, we illustrate two examples using UQER to predict new relations at test time.

We now connect our \ourequiv representations (\Cref{def:trip-repr}) with the UQER Horn clauses.
\begin{restatable}{theorem}{thmtwo}
\label{thm:repr-and-logic} 
For any UQER Horn clause defined by $\tB \in \{ 0, 1 \}^{M \times K \times M}$ (\Cref{def:uqer}), there exists a \ourinv triplet predictor $\trifunc{\Gamma}: \cup_{N=1}^\infty\cup_{R=2}^\infty([N]\times[R]\times[N]) \times \sA \rightarrow \{ 0, 1 \} $ (\Cref{def:trip-repr}), such that %
for any set of truth statements $\cS \subseteq \cV \times \cR \times \cV$ and their equivalent tensor representation $\adj \in \sA$ (where $\adj_{i,k,j} = 1 \text{ iff } (v_i,r_k,v_j \in \cS$), it satisfies $\trifunc{\Gamma}((i,k,j), \adj) = 1 \text{ iff } (i,k,j) \in \cS'$, where $\cS' = \big\{ \left( i,k,j \right) \,\big\vert\, \forall \left( i,k,j \right), %
\text{such that} \left( E_1, C_1, E_2 \right) = \left(v_i,r_k,v_j \right) \in \cV \times \cR \times \cV, \ \exists^{M-2} E_3,...,E_M \in \cV \setminus \left\{ E_1, E_2 \right\},  \exists^{K-1} C_2,...,C_K \in \cR \setminus \left\{ C_1 \right\}, \text{ where } \forall (u,c,u')\in [M]\times [K]\times [M], \tB_{u,c,u'} = 1 \Rightarrow \left(E_u, C_c, E_{u'} \right) \in \cS \big\}$ is the set of true statements induced by modus ponens by the truth statements $\cS$ and the UQER Horn clause, where the existential quantifier $\exists^k$ means exists at least $k$ distinct values.
\end{restatable}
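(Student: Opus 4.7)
My plan is to construct $\trifunc{\Gamma}$ explicitly as a Boolean pattern-matching predicate that existentially quantifies over admissible assignments of auxiliary node and relation indices in $\adjwild$, and then verify (i) double invariance and (ii) logical equivalence with $\cS'$ separately.

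\emph{Construction.} Given a query triplet $(i,k,j)\in[\nodesizewild]\times[\relsizewild]\times[\nodesizewild]$, I define $\trifunc{\Gamma}((i,k,j),\adjwild) := 1$ iff there exist distinct indices $i_3,\ldots,i_M \in [\nodesizewild]\setminus\{i,j\}$ and distinct indices $k_2,\ldots,k_K \in [\relsizewild]\setminus\{k\}$ such that, setting $(i_1,i_2,k_1):=(i,j,k)$, every position $(u,c,u')$ with $\tB_{u,c,u'}=1$ satisfies $\adjwild_{i_u,k_c,i_{u'}}=1$; otherwise $\trifunc{\Gamma}=0$. The side condition that every variable appears at least once in $\tB$ (from \Cref{def:uqer}) ensures each quantified index is genuinely constrained, so the predicate is well-posed.

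\emph{Step 1 (Double invariance).} I would show $\trifunc{\Gamma}$ satisfies \Cref{def:trip-repr} by transporting witnesses: fix $\phi\in\sS_\nodesizewild$ and $\tau\in\sS_\relsizewild$, and suppose $(i_3,\ldots,i_M,k_2,\ldots,k_K)$ is a witness for $\trifunc{\Gamma}((i,k,j),\adjwild)=1$. Because $\phi,\tau$ are bijections, $(\phi(i_3),\ldots,\phi(i_M))$ are distinct members of $[\nodesizewild]\setminus\{\phi(i),\phi(j)\}$ and similarly for the $\tau(k_c)$. By \Cref{def:rel-graph-perm}, $(\phi\circ\tau\circ\adjwild)_{\phi(i_u),\tau(k_c),\phi(i_{u'})} = \adjwild_{i_u,k_c,i_{u'}} = 1$, so the transported tuple witnesses $\trifunc{\Gamma}((\phi\circ i,\tau\circ k,\phi\circ j),\phi\circ\tau\circ\adjwild)=1$. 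The converse is obtained by applying $\phi^{-1},\tau^{-1}$.

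\emph{Step 2 (Logical equivalence).} Using the bijections $v_\cdot^{(*)}$ and $r_\cdot^{(*)}$, I translate between index-witnesses and entity/relation witnesses in \Cref{def:uqer}. Unpacking $\cS'$, membership $(i,k,j)\in\cS'$ is equivalent to the existence of at least $M-2$ distinct $E_3,\ldots,E_M \in \cV^{(*)}\setminus\{v_i^{(*)},v_j^{(*)}\}$ and at least $K-1$ distinct $C_2,\ldots,C_K \in \cR^{(*)}\setminus\{r_k^{(*)}\}$ making the conjunction true in $\cS$. Writing $E_u = v_{i_u}^{(*)}$, $C_c = r_{k_c}^{(*)}$ and invoking the identification $\adjwild_{i_u,k_c,i_{u'}}=1 \Leftrightarrow (v_{i_u}^{(*)},r_{k_c}^{(*)},v_{i_{u'}}^{(*)})\in\cS$, this is literally the defining condition of $\trifunc{\Gamma}$, giving the desired biconditional.

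\emph{Main obstacle.} The genuine work is bookkeeping: verifying that the distinctness constraints ($i_u\neq i_{u'}$, $k_c\neq k_{c'}$) and the ``every variable appears in the formula'' side conditions in \Cref{def:uqer} line up exactly with the existential witnesses used by $\trifunc{\Gamma}$, so that no spurious witnesses inflate either $\cS'$ or the indicator. I would also need to handle the degenerate cases $M\in\{1,2\}$ or $K=1$ (where some existential quantifiers vanish) uniformly, and confirm that the assumed lower bounds $\nodesizewild\geq M$, $\relsizewild\geq K$ suffice for witnesses to potentially exist; everything else reduces to unwinding definitions.
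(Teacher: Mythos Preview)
Your proposal is correct and follows essentially the same approach as the paper's proof. Both define $\trifunc{\Gamma}$ as the indicator of the existential pattern-match against $\tB$ and verify double invariance by pushing a witnessing assignment through the bijections $\phi,\tau$ (and back via their inverses); the only cosmetic difference is that the paper first writes $\trifunc{\Gamma}$ as the indicator of $\cS'$ and then checks $(i,k,j)\in\cS' \Leftrightarrow (\phi\circ i,\tau\circ k,\phi\circ j)\in\phi\circ\tau\circ\cS'$, whereas you write the same indicator directly in index form and separate the invariance and $\cS'$-equivalence checks.
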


\begin{figure}
\begin{minipage}{0.49\textwidth}
\centering
\includegraphics[width=\linewidth]{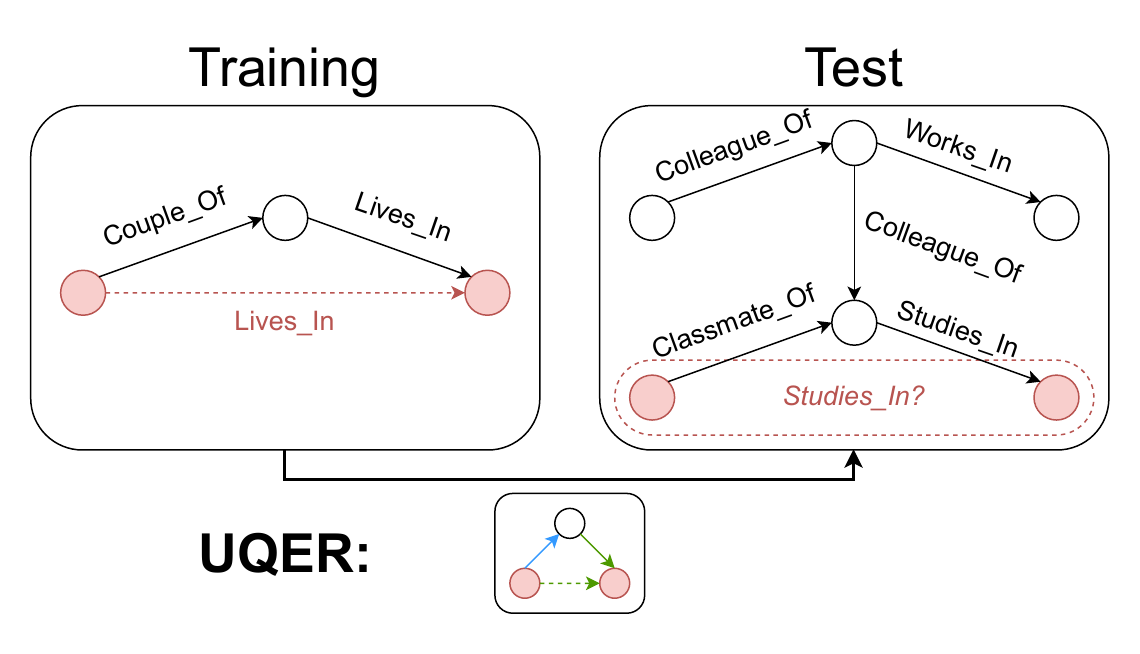}
\subcaption{{\bf A Simple UQER Application}}
\label{fig:uqer-from-cond}
\end{minipage}
\hfill
\begin{minipage}{0.49\textwidth}
\centering
\includegraphics[width=\linewidth]{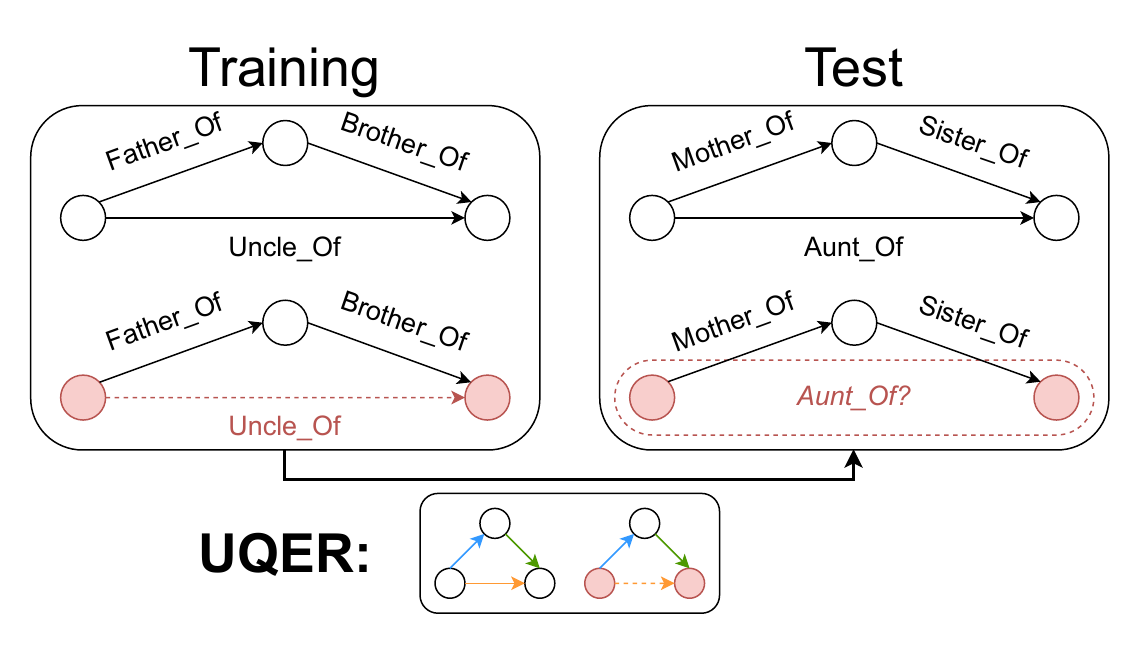}
\subcaption{{\bf A Complex UQER Application}}
\end{minipage}
\caption{
(a) The UQER (bottom) learned from training can be used to predict missing new relation ``Studies{\_}In'' in red since an assignment of left-hand-side of the UQER $\left( E_1, \text{Classmate\_Of}, E_3 \right) \land \left( E_3, \text{Studies\_In}, E_2 \right)$ is satisfied in test.
(b) UQER can contain disconnected components, giving more freedom to its application. For example, the UQER (bottom) can be learned from training to repeat arbitrary logical chain, which makes it possible to deal with new female relations at test time and will predict ``Aunt{\_}Of'' in test just as ``Uncle{\_}Of'' (red) in training.
}
\label{fig:example-uqer}
\end{figure}

The full proof is in \Cref{sec:proof}, showing how the universal quantification in \Cref{def:uqer} is a double invariant predictor.  
\section{Proofs} \label{appx:extraThmProofs}
\label{sec:proof}

\thmone*
\begin{proof}
    ($\Rightarrow$) For any \ourgraph $\adj\in\sA$ with number of nodes and relations $\nodesize, \relsize$, $\Gamma_\text{triplet}:  \cup_{N=1}^\infty\cup_{R=2}^\infty([N]\times[R]\times [N]) \times \sA\rightarrow \mathbb{R}^{d}, d\geq 1$ is a double invariant triplet representation as in \Cref{def:trip-repr}. 
    Using the double invariant triplet representation, we can define a function $\Gamma_\text{graph}:\sA\rightarrow \cup_{N=1}^\infty\cup_{R=2}^\infty\sR^{N\times R\times N\times d}$ such that $\forall (i,k,j)\in [\nodesize]\times[\relsize]\times[\nodesize]$, $(\Gamma_\text{graph}(\adj))_{i,k,j,:} = \Gamma_\text{triplet}((i,k,j),\adj)$. Then $\forall\phi\in \sS_\nodesize,\forall\tau\in\sS_\relsize$, $(\Gamma_\text{graph}(\phi\circ\tau\circ\adj))_{\phi\circ i, \tau\circ k,\phi\circ j,:} = \Gamma_\text{triplet}((\phi\circ i,\tau\circ k,\phi \circ j),\phi\circ\tau\circ\adj)$. We know $\Gamma_\text{triplet}((i,k,j),\adj) = \Gamma_\text{triplet}((\phi\circ i,\tau\circ k,\phi \circ j),\phi\circ\tau\circ\adj)$. Thus we conclude, $\forall \phi\in \sS_\nodesize,\forall \tau\in \sS_\relsize,\forall (i,k,j)\in [\nodesize]\times[\relsize]\times[\nodesize]$, $(\phi\circ\tau\circ \Gamma_\text{graph}(\adj))_{\phi\circ i,\tau\circ k,\phi\circ j,:} = (\Gamma_\text{graph}(\adj))_{i,k,j,:}= \Gamma_\text{triplet}((i,k,j),\adj)=\Gamma_\text{triplet}((\phi\circ i,\tau\circ k,\phi \circ j),\phi\circ\tau\circ\adj)=(\Gamma_\text{graph}(\phi\circ\tau\circ\adj))_{\phi\circ i, \tau\circ k,\phi\circ j,:}$. In conclusion, we show that $\phi\circ\tau\circ\Gamma_\text{graph}(\adj) = \Gamma_\text{graph}(\phi\circ\tau\circ\adj)$, which proves the constructed $\Gamma_\text{graph}$ is a double equivariant representation as in \Cref{def:rel-graph-repr}.

    ($\Leftarrow$) For any \ourgraph $\adj\in\sA$ with number of nodes and relations $\nodesize,\relsize$, assume $\Gamma_\text{graph}:\sA\rightarrow \cup_{N=1}^\infty\cup_{R=2}^\infty \sR^{N\times R\times N\times d}$ is a double equivariant representation as \Cref{def:rel-graph-repr}. Since $\Gamma_\text{graph}(\phi\circ\tau\circ \adj) = \phi\circ\tau\circ \Gamma_\text{graph}(\adj)$, then $\forall (i,k,j)\in [\nodesize]\times[\relsize]\times[\nodesize]$,  $(\Gamma_\text{graph}(\phi\circ\tau\circ \adj))_{\phi\circ i, \tau\circ k, \phi\circ j} = (\phi\circ\tau\circ \Gamma_\text{graph}(\adj))_{\phi\circ i, \tau\circ k, \phi\circ j} = (\Gamma_\text{graph}(\adj))_{i,k,j}$. Then we can define $\Gamma_\text{triplet}:  \cup_{N=1}^\infty\cup_{R=2}^\infty([N]\times[R]\times[N]) \times \sA\rightarrow \mathbb{R}^{d}, d\geq 1$, such that $\forall (i,k,j)\in [\nodesize]\times[\relsize]\times[\nodesize]$,  $\Gamma_\text{triplet}((i,k,j),\adj) = (\Gamma_\text{graph}(\adj))_{i,k,j}$. It is clear that $\Gamma_\text{triplet}((i,k,j),\adj) = (\Gamma_\text{graph}(\adj))_{i,k,j} = (\Gamma_\text{graph}(\phi\circ\tau\circ \adj))_{\phi\circ i, \tau\circ k, \phi\circ j} = \Gamma_\text{triplet}((\phi\circ i,\tau \circ k,\phi \circ j),\phi\circ\tau\circ\adj)$. Thus, we show $\Gamma_\text{triplet}$ is a double invariant triplet representation as in \Cref{def:trip-repr}.
\end{proof}

\thmthree*

\begin{proof}
Based on \Cref{def:pos-trip-repr}, for any \ourgraphease $\adj \in \sA$ with number of nodes and relations $\nodesize,\relsize$, the distributionally double equivariant positional embeddings of $\adj$ are defined as joint samples of random variables $\tZ|\adj \sim p(\tZ|\adj)$,  where the tensor $\tZ $ is defined as $\tZ_{i,k,j}\in\sR^d, d\geq 1, \forall (i,k,j) \in [\nodesize]\times[\relsize]\times[\nodesize]$, where we say $p(\tZ|\adj)$ is a double equivariant probability distribution on $\adj$ defined as $\forall \phi\in \sS_\nodesize,\forall \tau\in \sS_\relsize,  p(\tZ |\adj) = p(\phi \circ \tau \circ  \tZ| \phi \circ \tau \circ \adj)$. 

The tensor $\tZ$ is defined as $\tZ_{i,k,j}\in \sR^d, \forall (i,k,j)\in [\nodesize]\times[\relsize]\times[\nodesize]$, thus $\tZ\in \sR^{\nodesize\times\relsize\times\nodesize\times d}$. So we can consider $\mathbb{E}_{p(\tZ |\adj)}[\tZ |\adj]$ as a function on $\adj$, and output a representation in $\sR^{\nodesize\times\relsize\times\nodesize\times d}$. 
Since $\forall \phi\in \sS_\nodesize,\forall \tau\in \sS_\relsize,  p(\tZ |\adj) = p(\phi \circ \tau \circ  \tZ| \phi \circ \tau \circ \adj)$, it is clear to have $\forall \phi\in \sS_\nodesize,\forall \tau\in \sS_\relsize, \phi \circ \tau \circ\mathbb{E}_{p(\tZ |\adj)}[\tZ |\adj] = \phi \circ \tau \circ \int \vz p(\tZ=\vz|\adj) d\vz =  \int \phi \circ \tau \circ \vz p(\tZ=\vz|\adj) d\vz = \int \phi \circ \tau \circ \vz p(\phi\circ\tau\circ \tZ=\phi \circ \tau \circ \vz|\phi \circ \tau \circ \adj) d(\phi \circ \tau \circ\vz) = \mathbb{E}_{p(\phi \circ \tau \circ  \tZ| \phi \circ \tau \circ \adj)}[\phi \circ \tau \circ  \tZ| \phi \circ \tau \circ \adj]$. Since the permutation $\phi,\tau$ only changes the ordering of the output representation element-wise, we can interchange the permutations with the integral. 

Finally, for any \ourgraphease $\adj \in \sA$ with  number of nodes and relations $\nodesize, \relsize$, we can define $\grafunc{\Gamma}(\adj) : \sA \rightarrow \cup_{N=1}^\infty\cup_{R=2}^\infty \sR^{N \times R \times N \times d}, d \geq 1$ such that $\grafunc{\Gamma}(\adj) := \mathbb{E}_{p(\tZ |\adj)}[\tZ |\adj]$. And we can derive $\phi \circ \tau \circ \grafunc{\Gamma}(\adj) = \phi \circ \tau \circ\mathbb{E}_{p(\tZ |\adj)}[\tZ |\adj] 
 =  \mathbb{E}_{p(\phi \circ \tau \circ\tZ |\phi \circ \tau \circ\adj)}[\phi \circ \tau \circ\tZ |\phi \circ \tau \circ\adj] = \grafunc{\Gamma}(\phi \circ \tau \circ \adj)$. Thus, $\grafunc{\Gamma}(\adj) := \mathbb{E}_{p(\tZ |\adj)}[\tZ |\adj]$ is a \ourequiv \ \ourgraphease representation as per \Cref{def:rel-graph-repr}. 
\end{proof}


\lemultra*
\begin{proof}
Consider an arbitrary knowledge graph $\gG = (\gV, \gR, \gV)$ with the adjacency matrix $\adj$, and its isomorphic graph with adjacency matrix $\adj'$ induced by some arbitrary entity and relation permutations $\phi \in S_N$ and $\tau \in S_R$, i.e., $\adj' = \phi \circ \tau \circ \adj$. Fix a triplet $(h, r, t) \in \gV \times \gR \times \gV$, and let $(h', r', t') = (\phi \circ h, \tau \circ r, \phi \circ t)$.

\begin{enumerate}
\item 
Given any graph $\adj^{*}$, 
an NBFNet produces \textit{pairwise structural node representations} equivariant to node permutations if it initializes the node features by setting all-ones vector to the head node $h^{*}$ of the query triplet $(h^{*}, r^{*}, ?)$, and zero vectors to every other node~\citep{zhu2021neural}. That is, for any query $(h^{*}, r^{*}, ?)$ and any valid node permutation $\phi^{*} \in S_N$, we have 
\begin{align*}
    \Gamma\trip_{\text{NBFNet}} ((h^{*}, r^{*}, t^{*}), \adj^{*}) = \Gamma\trip_{\text{NBFNet}} ((\phi^{*} \circ h^{*}, r^{*}, \phi^{*} \circ t^{*}), \phi^{*} \circ \adj^{*}) , \forall t^{*} \in \gV^{*}.
\end{align*}

\item
The relation graph $\gG\rel$ constructed by ULTRA is invariant to the node permutation $\phi$. This is because changing entity labels does not affect the $\texttt{t2h}, \texttt{h2h}, \texttt{h2t}, \texttt{t2t}$ interactions between relations.

\item
ULTRA invokes the first instance of NBFNet on the relation graph $\gG\rel$. Hence, following 1., we can conclude that the output triplet representation of this NBFNet instance on $\gG\rel$ is invariant to the relation permutation $\tau$, because each node in $\gG\rel$ corresponds to a relation in $\gG$, and so the relation permutation $\tau$ corresponds to some node permutation on $\gG\rel$. Namely, $\forall \hat{r} \in \gR, k \in \{ \texttt{t2h}, \texttt{h2h}, \texttt{h2t}, \texttt{t2t} \}$,
\begin{align*}
    \Gamma\trip_{\text{NBFNet}} ((r, k, \hat{r}), \adj\rel) = \Gamma\trip_{\text{NBFNet}} ((\tau \circ r, k, \tau \circ \hat{r}), \tau \circ \adj\rel) .
\end{align*}
Consequently, the output relation embedding $\mR_r$ is equivariant to the relation permutation:
\begin{align*}
    \mR_r[\hat{r}] = \mR_{\tau \circ r}[\tau \circ \hat{r}], \forall \hat{r} \in \gR .
\end{align*}
Together with 2., we know that $\mR_r$ is \textit{invariant} to node permutations, i.e., $\phi \circ \mR_r = \mR_r$. Hence, $\mR_r$ is double equivariant: $\phi \circ \tau \circ \mR_r [\hat{r}] = \mR_{\tau \circ r} [\tau \circ \hat{r}]$.

\item 
ULTRA then invokes the second instance of NBFNet on the original graph $\gG$, by setting the initial node and edge features from $\mR_r$. Again, following 1., we know that the output triplet representations of ULTRA are invariant to the node permutations:
\begin{align*}
    \Gamma\trip_{\text{ULTRA}} ((h, r, t), \adj) = \Gamma\trip_{\text{ULTRA}} ((\phi \circ h, r, \phi \circ t), \phi \circ \adj) , \forall t \in \gV.
\end{align*}

\item
Then, since the relation representations $\mR_r$ are double equivariant, and they are used as edge features in the original graph, this means that:
\begin{enumerate}
    \item The initial node embeddings set for the head node is the same: $\mR_{r}[r] = \mR_{\tau \circ r}[\tau \circ r] = \mR_{r'}[r']$.
    \item The edge type features are equivariant to $\tau$: $\mR_{r}[\hat{r}] = \mR_{\tau \circ r}[\tau \circ \hat{r}] = \mR_{r'}[\tau \circ \hat{r}]$ for any other relation/edge type $\hat{r} \in \gR$.
\end{enumerate}

\item
Hence, the final triplet representations produced by ULTRA is double invariant:
\begin{align*}
    &\Gamma\trip_{\text{ULTRA}} ((h, r, t), \adj) \\
    & = \Gamma\trip_{\text{ULTRA}} ((\phi \circ h, r, \phi \circ t), \phi \circ \adj) \\
    & = \Gamma\trip_{\text{ULTRA}} ((\phi \circ h, \tau \circ r, \phi \circ t), \tau \circ \phi \circ \adj) \\
    & = \Gamma\trip_{\text{ULTRA}} ((h', r', t'), \adj') .
\end{align*}

\item 
As a result, the graph representations produced by ULTRA is double equivariant. Namely,
\begin{align*}
    \phi \circ \tau \circ \Gamma_{\text{ULTRA}}(\adj) = \Gamma_{\text{ULTRA}}(\adj') = \Gamma_{\text{ULTRA}}(\phi \circ \tau \circ \adj) .
\end{align*}
\end{enumerate}
\end{proof}

\lemfour*
\begin{proof}
To solve \ourtask, InGram~\citep{ingram} first constructs a \textit{relation graph}, in which the relation types are treated as nodes, and the edges between them are weighted by the affinity scores, a measure of co-occurrence between relation types in the original \ourgraphease. It then employs a variant of the GATv2~\citep{velivckovic2017graph,brody2021attentive} on the relation graph to propagate and generate embeddings for the relation types. These relation embeddings, together with another GATv2, are applied to the original \ourgraphease to generate embeddings for the nodes. Finally, a variant of DistMult~\citep{yang2015embedding} is used to compute the scores for individual triplets from the embeddings of the head and tail nodes and the embedding of the relation. 

If the input node and relation embeddings to the InGram model were to be the same across all nodes and across all relation types respectively (such as vectors of all ones), then InGram would have produced double structural representations for the triplets~(\cref{def:trip-repr}). Simply put, this is because the relation graphs of InGram~\citep{ingram} encode only the structural features of the relation types (their mutual structural affinity), which is double equivariant to the permutation of relation type and node indices. Since the same initial embeddings for all nodes and relations are naively double equivariant, and the GATv2~\citep{velivckovic2017graph,brody2021attentive} is a message-passing neural network~\citep{gilmer17a} that also produces equivariant representations, the final relation embeddings would be double equivariant. Same analysis will also show the final node embeddings are double equivariant. 

However, to improve the expressivity of the model, InGram~\citep{ingram} randomly re-initializes the input embeddings for all node and relation types using Glorot initialization~\citep{glorot2010understanding} \textit{for each epoch during training}, a technique inspired by recent studies on the expressive power of GNNs~\citep{abboud2020surprising,sato2021random,murphy2019relational}. Unfortunately, random initial features break the double equivariance of the generated representations, making them sensitive to the permutation of node and relation type indices. However, since the initial node $\mV^{(0)}$ and relation embeddings $\mR^{(0)}$ are randomly initialized, and by design of InGram architecture, we have $\forall (i,k,j)\in [\nodesize]\times[\relsize]\times[\nodesize], \forall \phi\in \sS_\nodesize, \tau\in \sS_\relsize, \mathbf{Z}_{\text{InGram}}((i, k, j), \adj, \mV^{(0)}, \mR^{(0)}) = \mathbf{Z}_{\text{InGram}}((\phi\circ i, \tau\circ k, \phi\circ j), \phi\circ \adj, \mV^{(0)}, \mR^{(0)})$ for any random samples of node and relation embeddings $v^{(0)}, r^{(0)}$. We define $\tZ_{\text{InGram}}|\adj = [\mathbf{Z}_{\text{InGram}}((i,k,j),\adj, \mV^{(0)}, \mR^{(0)}))]_{(i,k,j)\in [\nodesize]\times[\relsize]\times[\nodesize]}$, and $\phi\circ\tau\circ \tZ_{\text{InGram}}|\phi\circ\tau\circ\adj = [\mathbf{Z}_{\text{InGram}}((\phi\circ i,\tau\circ k,\phi\circ j),\phi\circ\tau\circ\adj, \mV^{(0)}, \mR^{(0)}))]_{(\phi\circ i,\tau\circ k,\phi\circ j)\in [\nodesize]\times[\relsize]\times[\nodesize]}$. Since $\mV^{(0)}, \mR^{(0)}$ random variables that do not change with permutations, we can easily derive $p(\phi\circ\tau\circ \tZ_{\text{InGram}}|\phi\circ\tau\circ\adj) = p(\tZ_{\text{InGram}}|\adj)$. Thus, InGram is a distributionally double equivariant positional graph embedding of $\adj$ as per \Cref{def:pos-trip-repr}.
\end{proof}

\lemthree*
\begin{proof}
    From the \OurNewModel model architecture (\Cref{eq:dssrepr}), $\Gamma_\text{\OurNewModel}((i,k,j),\adj) = ( X_{i, k} \mathbin\Vert X_{j, k}\mathbin\Vert d(i, j) \mathbin\Vert d(j, i) )$. Using DSS layers, we can guarantee the node representations $X_{i,k}$ we learn are double invariant under the node and relation permutations, where $X_{i,k}$ in $\adj$ is equal to $X_{\phi\circ i,\tau\circ k}$ in $\phi\circ\tau\circ \adj$. It is also clear that the distance function is invariant to node and relation permutations, i.e. $\forall i,j\in [\nodesize]$, $d(i,j)$ in $\adj$ is the same as $d(\phi\circ i, \phi\circ j)$ in $\phi\circ\tau\circ \adj$. Thus $\Gamma_\text{\OurNewModel}((i,k,j),\adj) = \Gamma_\text{\OurNewModel}((\phi \circ i,\tau\circ k,\phi\circ j),\phi\circ\tau\circ \adj)$ is a double invariant triplet representation as in \Cref{def:trip-repr}. 
\end{proof}

\thmtwo*
\begin{proof} 
Recall that we have two different cases $h = 1$ and $h = 2$ for \Cref{eq:uqer} in \Cref{def:uqer} of UQER.
For the ease of proof, we will focus on the case where $h = 2$ in the following content, and for the case $h = 1$, the proof will be the same.

Given $h = 2$, any UQER is defined by $\tB\in\{0,1\}^{M\times K\times M}$ as 
\begin{equation}
\begin{gathered}
 \forall E_1 \in \cV, \left( \forall E_u \in \cV \setminus \left\{ E_1, \ldots, E_{u-1} \right\} \right)_{u=2}^M \,,
    \forall C_1 \in \cR, \left( \forall C_c \in \cR \setminus \left\{ C_1, \ldots, C_{c-1} \right\} \right)_{c=2}^K \,, \\
    \bigwedge_{\substack{u,u' = 1, \ldots, M, c = 1, \ldots, K, \\ \tB_{u,c,u'} = 1 }} (E_u,C_c,E_{u'}) \implies (E_1,C_1,E_h) ,\:
\end{gathered}
\end{equation}
for any node set $\cV$ and relation set $\cR$ with number of nodes and relations $\nodesize,\relsize$ s.t. $\nodesize \geq M, \relsize \geq K$, where if $M > 2$, $\forall u \in \{3, \ldots, M \}$,
$\sum_{u' = 1}^M \sum_{c = 1}^K \tB_{u,c,u'} + \tB_{u',c,u} \geq 1$, and if $K \geq 2$, $\forall c \in \{2, \ldots, K \}$, $\sum_{u = 1}^M \sum_{u' = 1}^M \tB_{u,c,u'} + \tB_{u',c,u} \geq 1$ (every variable should appear at least once in the formula).

For all sets of truth statements $\forall \cS\subseteq \cup_{N=1}^\infty\cup_{R=2}^\infty \cV\times\cR\times\cV$, it has an equivalent tensor representation $\adj\in \{0,1\}^{\nodesize\times \relsize\times \nodesize}$ such that $\adj_{i,k,j}=1 \iff (v_i,r_k,v_j \in \cS$.
We can then define a triplet representation $\Gamma_\text{triplet}$ based on the given UQER as, $\forall (i,k,j) \in [\nodesize]\times[\relsize]\times[\nodesize] $,
\begin{equation}
\label{eq:myGammatri}
\Gamma_\text{triplet}((i,k,j), \adj) = \begin{cases}
1 & \text{if } (i,k,j)\in \cS'  \\
0 & \text{otherwise}, \\
\end{cases}
\end{equation}
where we define $\cS' = \big\{ \left( i,k,j \right) \,\big\vert\, \forall \left( i,k,j \right) \in [\nodesize]\times [\relsize]\times[\nodesize], 
\text{such that} \left( E_1, C_1, E_2 \right) = \left(v_i,r_k,v_j \right) \in \cV \times \cR \times \cV, \ \exists^{M-2} E_3,...,E_M \in \cV \setminus \left\{ E_1, E_2 \right\},  \exists^{K-1} C_2,...,C_K \in \cR \setminus \left\{ C_1 \right\}, \text{ where } \forall (u,c,u')\in [M]\times [K]\times [M], \tB_{u,c,u'} = 1 \Rightarrow \left(E_u, C_c, E_{u'} \right) \in \cS \big\}$ %
is the set of true statements induced by modus ponens from the truth statements $\cS$ and the UQER Horn Clause, where the existential quantifier $\exists^k$ means exists at least $k$ distinct values.

All we need to show is that \Cref{eq:myGammatri} is a double invariant triplet representation. 
For any node permutation $\phi\in \sS_\nodesize$ and relation permutation $\tau\in\sS_\relsize$ of $\adj$, we define $\phi\circ \tau\circ \cS = \{(v_{\phi\circ i},r_{\tau\circ k},v_{\phi\circ i}|(v_i,r_k,v_j\in \cS\}$ which corresponds to their equivalent tensor representation 
$\phi\circ \tau\circ \adj$, where $(\phi\circ \tau\circ \adj)_{\phi\circ i,\tau\circ k,\phi\circ j}=1 \iff (v_i,r_k,v_j )\in \cS$ otherwise $0$. Similarly, we have
$\phi\circ\tau\circ\cS' = \big\{ \left( \phi\circ i,\tau\circ k,\phi\circ j \right) \,\big\vert\, \forall \left( i,k,j \right) \in [\nodesize]\times [\relsize]\times[\nodesize], 
\text{such that} \left( E_1, C_1, E_2 \right) = \left(v_{\phi\circ i},r_{\tau \circ k},v_{\phi \circ j} \right) \in \cV \times \cR \times \cV, \ \exists^{M-2} E_3,...,E_M \in \cV \setminus \left\{ E_1, E_2 \right\},  \exists^{K-1} C_2,...,C_K \in \cR \setminus \left\{ C_1 \right\}, \text{ where } \forall (u,c,u')\in [M]\times [K]\times [M], \tB_{u,c,u'} = 1 \Rightarrow \left(\phi\circ E_u, \tau\circ C_c, \phi\circ E_{u'} \right) \in \phi\circ\tau\circ \cS \big\}$.

By definition, we have that for any $(i,k,j) \in \cS'$, 
$$\Gamma_\text{triplet}((\phi\circ i,\tau\circ k,\phi\circ j), \phi\circ \tau\circ \adj) = \begin{cases}
1 & \text{if } (\phi\circ i,\tau\circ k,\phi\circ j) \in \phi\circ\tau\circ\cS'  \\
0 & \text{otherwise}, \\
\end{cases}.$$ 
Now we show that $(i,k,j)\in \cS'$ if and only if $(\phi\circ i, \tau\circ k, \phi\circ j)\in \phi\circ\tau\circ\cS'$. If $(i,k,j)\in \cS'$, then $E_1=v_{i},E_2=v_{j}, C_1=r_{k}, \exists^{M-2} E_3,...,E_M \in \cV\setminus \{E_1,E_2\},  \exists^{K-1} C_2,...,C_K \in \cR\setminus \{C_1\},  \text{such that } \tB_{u,c,u'}=1 \implies (E_u,C_c,E_{u'})\in \cS $ . Since $(E_u,C_c,E_{u'})\in \cS$ if and only if $(\phi\circ E_u,\tau\circ C_c,\phi\circ E_{u'})\in \phi\circ\tau\circ \cS$ by definition, we have $(\phi\circ i,\tau\circ k,\phi\circ j)\in \phi\circ\tau\circ\cS'$. Similarly we can prove if  $(\phi\circ i,\tau\circ k,\phi\circ j)\in \phi\circ\tau\circ\cS'$, then $(i,k,j)\in \cS'$ with the same reasoning.

In conclusion, for any $\adj\in \sA$ with number of nodes and relations $\nodesize,\relsize$, since $(i,k,j)\in \cS'$ if and only if $(\phi\circ i, \tau\circ k, \phi\circ j)\in \phi\circ\tau\circ\cS'$, then by definition $\Gamma_\text{triplet}((\phi\circ i,\tau\circ k,\phi\circ j), \phi\circ \tau\circ \adj)=\Gamma_\text{triplet}((i,k,j), \adj)$ holds $\forall (i,k,j) \in %
[\nodesize]\times[\relsize]\times[\nodesize]$, which proves $\Gamma_\text{triplet}$ is a double invariant triplet representation (\Cref{def:trip-repr}).

\end{proof}

\section{Additional Related Work}
\label{sec:appd-rel}

Link prediction in {\ourgraphs}, which are commonly used to represent relational data in a structured way by indicating different types of relations between pairs of nodes in the graph, involves predicting not only the existence of missing edges but also the associated relation types.
\paragraph{Transductive link prediction.}
In transductive link prediction, missing links are predicted over a fixed set of nodes and relation types as in training.
Traditionally, factorization-based methods~\citep{sutskever2009modelling,nickel2011three,bordes2013translating,wang2014knowledge,yang2015embedding,trouillon2016complex,nickel2016holographic,trouillon2017knowledge,dettmers2018convolutional,sun2018rotate} have been proposed to obtain latent embedding of nodes and relation types to capture their relative information in the graph. These models try to score all combinations of nodes and relations with embeddings as factors, similar to tensor factorization. Although excellence in transductive tasks, these positional embeddings~\citep{srinivasan2020on} (a.k.a. permutation-sensitive embeddings) require extensive retraining to perform inductive tasks over new nodes or relations~\citep{teru2020inductive}.
However, in real-world applications, relational data is often evolving, requiring link prediction over new nodes and new relation types, or even entirely new graphs.

\paragraph{Inductive link prediction over new nodes (but not new relations) with GNN-based model.}
In recent years, with the advancement of graph neural networks (GNNs)~\citep{defferrard2016convolutional,Kipf2016,hamilton2017inductive,velivckovic2017graph,bronstein2017geometric,murphy2019relational}, in graph machine learning fields, various works has applied the idea of GNN in relational prediction to ensure the inductive capability of the model, including RGCN~\citep{schlichtkrull2018modeling}, 
CompGCN~\citep{vashishth2019composition}, GraIL~\citep{teru2020inductive}, NodePiece~\citep{galkin2021nodepiece}, NBFNet~\citep{zhu2021neural}, ReFactorGNNs~\citep{chenrefactor} etc.. Specifically, RGCN~\citep{schlichtkrull2018modeling} and CompGCN~\citep{vashishth2019composition} were initially designed for transductive link prediction tasks.  As GNNs are node permutation equivariant~\citep{xu2018powerful,srinivasan2020on}, these models learn structural node/pairwise representation, which can be used to perform {\em inductive link prediction over solely new nodes}, 
while most of the GNN performance are worse than FM-based methods~\citep{Ruffinelli2020You,chenrefactor}. 
Specifically, GraiL~\citep{teru2020inductive} extends the idea from \citep{zhang2018link} to use local subgraph representations for \ourgraph link prediction. Refactor GNNs~\citep{chenrefactor} aims to build the connection between FM and GNNs, where they propose an architecture to cast FMs as GNNs. NodePiece~\citep{galkin2021nodepiece} uses anchor-nodes for parameter-efficient architecture for \ourgraph completion. NBFNet~\citep{zhu2021neural} extends the Bellman-Ford algorithm, which learns pairwise representations by all the path representations between nodes. \citep{barcelo2022weisfeiler} analyzes \ourgraph-GNNs expressiveness by connecting it with the Weisfeiler-Leman test in \ourgraph. %

\paragraph{Inductive link prediction over new nodes (but not new relations) with logical induction.} The relation prediction problem in relational data represented by \ourgraph can also be considered as the problem of learning first-order logical Horn clauses \citep{yang2015embedding,yang2017differentiable,sadeghian2019drum,teru2020inductive} from the relational data, where one aims to extract logical rules on binary predicates. These methods are inherently node-independent and are only able to perform {\em inductive link prediction over new nodes}. \citep{barcelo2020logical} discusses the connection between the expressiveness of GNNs and first-order logical induction, but only on node GNN representation and logical node classifier. \citep{qiu2023logical} further analyzes the logical expressiveness of GNNs for \ourgraphease by showing GNNs are able to capture logical rules from graded modal logic and provides a logical explanation of why pairwise GNNs~\citep{zhang2021labeling,zhu2021neural} can achieve SOTA results. 
In our paper, we try to build the connection between triplet representation and logical Horn clauses. Traditionally, logical rules are learned through statistically enumerating patterns observed in \ourgraph~\citep{lao2010relational,galarraga2013amie}. Neural LP~\citep{yang2017differentiable} and DRUM~\citep{sadeghian2019drum} learn logical rules in an end-to-end differentiable manner using the set of logic paths between two nodes with sequence models. RLogic~\citep{cheng2022rlogic} follows a similar manner, which breaks a big sequential model into small atomic models in a recursive way. \citep{galkin2022inductive} aims to inductively extract logical rules by devising NodePiece~\citep{galkin2021nodepiece} and NBFNet~\citep{zhu2021neural}. However, all these methods are not able to deal with new relation types in test.

\paragraph{Inductive link prediction over both new nodes and new relations (with extra context)} 
Few-shot and zero-shot relational reasoning~\citep{xiong2018one,lv2019adapting,qin2020generative,zhao2020attention,geng2021ontozsl,wang2021relational,csr2022,chen2023generalizing,geng2023relational} aim to query triplets involving unseen relation types with access to few or zero support triplets of these unseen relation types at test time. Recent methods~\citep{qin2020generative,zhao2020attention,csr2022,geng2023relational} can even query over unseen nodes. Yet, they often need extra context in the test graph, such as textual descriptions and/or ontological information of the unseen relation types or a shared background graph between the training and test graph, i.e., the test nodes and relation types are connected to the training ones. %
For instance, zero-shot link prediction methods such as~\citep{qin2020generative} employ a generative adversarial network~\citep{goodfellow2014adv} to utilize the additional textual information to bridge the semantic gap between seen and unseen relations. 
Later, OntoZSL~\citep{geng2021ontozsl} presented an ontology-enhanced zero-shot learning approach that incorporates both ontology structural and textural information. 
Similarly, TACT~\citep{chen2021topology} aims to model the topological correlations between the target relations and their adjacent relations (assumes there are relations that are seen in train) using a relational correlation network to learn more expressive representations of the target relations. 
A recent work is RMPI~\citep{geng2023relational} that extracts enclosing subgraphs around the target triplet, which are assumed to contain triplets of some relation types seen in training and uses graph ontology to bridge the unseen relation types to the seen ones. 
Another one~\citep{zhao2020attention} uses attention-based GNNs and convolutional transition for link prediction over new nodes and new relations assuming a shared background graph between training and test (i.e., new relations in test are connected with existing nodes and relations in training).
MaKEr~\citep{MaKEr} also uses the local graph structure to handle new nodes and new relation types using a meta-learning framework, assuming the test graph has overlapping relations and entities with the training graph. 
On the other hand, few-shot relational reasoning methods learn representations of the unseen relation types from the few support triplets, which are generally assumed to connect to existing nodes and relations seen in training~\citep{xiong2018one,chen2019meta,zhang2020few}. 
For example, \citep{xiong2018one} was the first to solve the one-shot task by proposing to compute matching scores between the new relation types observed in the support set to those training relation types. Later, FSRL~\citep{zhang2020few} extends \citep{xiong2018one} by using an attention-based aggregation to take advantage of information from all support triplets. 
\update{ 
Another line of work is called subgraph reasoning, i.e., predict $(i, r?, j)$ by extracting subgraphs surrounding $i$ and $j$ and subgraphs surrounding another triplet $(i', r, j')$ with the same target relation type $r$ but different entities $i'$ and $j'$. As long as these two set of subgraphs are similar, one can proclaim that the target triplet $(i, r, j)$ should exist. GraphANGEL~\citep{jin2022inductive} connects this subgraph reasoning idea to first-order logical (FOL) reasoning to tackle unseen relation types at test time, by extracting subgraphs whose structural patterns conform to a set of pre-defined FOL rules. Concurrently, CSR~\citep{csr2022} employs a similar method but grounds the subgraph reasoning procedure in the language of statistical hypothesis testing. In comparison to our work, all existing subgraph reasoning methods require that the target triplet $(i, r, j)$ with the unseen relation type $r$ must be surrounded by subgraphs consisting solely of triplets with relation types seen in training. In other words, $(i, r, j)$ must be connected to the training graph. Hence, they are not applicable when the test KG comes from another domain that is disconnected to the training KG, which is the most general inductive scenario studied by this work.
}
Finally, one other line of research is to solve few-shot relational reasoning via meta-learning. For instance, MetaR~\citep{chen2019meta} updates a meta representation over the relation types, and MetaKGR~\citep{lv2019adapting} adopts MAML~\citep{finn2017model} to learn meta parameters for frequently occurring relations, which can then be adapted to few-shot relations.
\update{
Similar to subgraph reasoning methods,
}
all of these few-shot learning methods require that the few-shot triplets are connected to a background graph observed during training in order to learn about the relationship between new relation types and existing ones. 
Hence, all these methods cannot be directly applied to test graphs that neither contain textual descriptions of the unseen relation types nor triplets involving those relation types seen in training. 

\paragraph{Inductive link prediction over both new nodes and new relations (no extra context)} In this paper, we focus on the most general task, i.e., inductive link prediction over both new nodes and new relations on entirely new test graphs without textual descriptions, which we call {\em \ourtask}. To the best of our knowledge, InGram~\citep{ingram} and ULTRA~\citep{galkin2023towards} are the only existing methods capable of performing this task. InGram and ULTRA introduce relation graphs to capture relational representations based on their interactions.
In contrast to InGram and ULTRA that designed a specific architecture, our work proposes a general theoretical framework for designing an entire class of models capable of solving the \ourtask task, which encompasses them as a specific instantiation. Modeling details of InGram and ULTRA have been substantially discussed in the main paper.

\paragraph{Equivariance of graph neural networks}
\update{
A substantial body of literature has focused on restricting neural networks to a class of functions relevant to the task by exploiting the symmetry of problems and enforcing equivariance with respect to a certain symmetry group of transformations. Earlier examples such as translation equivariance in Convolutional Neural Networks have shown the effectiveness of such methodology for computer vision tasks~\citep{Bruna2013SpectralNA,defferrard2016convolutional}. In the field of graph machine learning, Graph Neural Networks possess the equivariance to permutation to node identities~\citep{Kipf2016}. Our work on double equivariance belongs to this category of research and is a generalization of traditional GNN equivariance by extending it to the Cartesian product of permutation groups of both node and relation identity permutations. One parallel line of research is to exploit the geometric symmetry in the input node features on tasks such as point cloud data, molecular structure representation, and particle simulation~\citep{ramakrishnan2014quantum,kipf2018neural,uy2019revisiting,Han2022GeometricallyEG,satorras2021n,liu2021pre,Zhou2023UniMolAU}. In particular, several works have studied how to achieve equivariance to rotations and translations for 3D vector input features, i.e., E(3) equivariance, as well as SE(3) equivariance when reflections are also considered~\citep{Thomas2018TensorFN,fuchs2020se,finzi2020generalizing,kohler2020equivariant}. \citet{satorras2021n} then proposes an E($n$) equivariance architecture that can efficiently generalize to translation, rotation, and reflection in larger than 3 dimensional spaces. Follow-up work further applies these geometry equivariance to the task of learning molecular representations~\citep{liu2021pre,Zhou2023UniMolAU}. In comparison to our work, these studies focus on symmetry in the input features, whereas the theory of double equivariance is concerned with the structure of knowledge graphs. Hence, they are orthogonal to our contribution.

\paragraph{Out-of-Distribution (OOD) generalization of graph neural networks}
Many different types of OOD scenarios on graphs have been studied by the literature and each type of OOD scenario potentially requires a different kind of invariance assumption, among which causality has been found to be one of the best framework to model the invariances\citep{bevilacqua2021size,chen2022learning,zhao2022learning}. For example, \citet{bevilacqua2021size} first proposed to tackle OOD graph classification tasks, where test graphs have larger size than those seen in training, by understanding the problem from a causal perspective and capture the invariance via the structural causal models (SCMs)~\citep{pearl2009causality}. \citet{chen2022learning} then extended this methodology to a wider class of OOD and distribution shift scenarios on homogeneous graphs. \citet{zhao2022learning} similarly proposed to understand the problem of link prediction from a causal perspective and achieve superior performance on a wide range of link prediction tasks, although they did not directly the OOD problems. In connection to our work, the causality and invariance assumptions studied by all these works are largely orthogonal to our invariance assumption of double equivariance. In particular, we posit that double equivariance is a necessary condition for effective solution to the zero-shot fully inductive task on knowledge graphs. Without guaranteeing double equivariance, applying the methods from~\citep{bevilacqua2021size,chen2022learning,zhao2022learning} would not solve our task. Nevertheless, the interesting question remains as to whether a double equivariant link prediction model can benefit from these causality assumptions, and whether doing so can better alleviate the negative transfer problem that we observe in the meta-learning setting (\Cref{sec:negative-transfer}). We leave this interesting and important investigation to future work.
}

%
\section{Experiments}
\label{sec:appd-supexp}

%

%
\subsection{\Ourtask task over both new nodes and new relation types}
In this section, we provide more detailed experiment results and analysis for our method on inductively \ourtask on both new nodes and new relation types. 
%



%
%

\subsubsection{Experiment Setup}

\paragraph{Evaluation Metrics.} We sample $50$ negative triplets for each test positive triplet during test evaluation by corrupting either nodes or relation types (\Cref{eq:loss}), and use Nodes Hits@$k$ and Relation Hits@$k$ separately which counts the ratio of positive triplets ranked at or above the $k$-th place against the $50$ negative samples as evaluation metric over $5$ runs. Specifically, for Node prediction evaluation, we sample without replacement $50$ negative tail (or head) nodes, and for Relation prediction evaluation, we sample with replacement $50$ negative relation types (can also handle cases where the number of test relations is less than $50$). We also report other widely used metrics such as MRR.%

\paragraph{Hyperparameters and Implementation Details.}
For homogeneous GNN methods, NBFNet and \OurNewModel, We follow the same configuration as \citep{teru2020inductive} such that the hidden layers have 32 neurons. We use Adam optimizer with grid search over learning rate $\alpha\in \{0.01,0.001,0.0001\}$, and over weight decay $\beta\in\{0.0005,0\}$. For all datasets, we train these models for 10 epochs with a mini-batch size of 16. %
For the GNN kernel we choose GCN~\citep{Kipf2016} of \OurNewModel. For these models, the number of hops and number of layers are 3 on all datasets to ensure fair comparison.

Since NBFNet is designed to only perform inductive link prediction with solely new nodes and utilizes trained relation embeddings, we use randomly initialized embeddings for the unseen relation types at test time to enable it for performing \ourtask.

To run InGram~\citep{ingram} on \ourdata and \ourotherdata, we conduct hyperparameter search over the configurations of ranking loss margin $\gamma \in \{ 1.0, 2.0 \}$, learning rate $\alpha \in \{ 0.0005, 0.001 \}$, number of entity layers $L \in \{ 2, 3, 4 \}$, and number of entity layers $\hat{L} \in \{ 2, 3, 4 \}$. For other hyperparameters, we use the suggested values from InGram~\citep{ingram} and their codebase, such as the number of bins $B = 10$ and the number of attention heads $K = 8$. We then use the overall best-performing hyperparameters on \ourdata and the best-performing hyperparameters on \ourotherdata to run InGram on all tasks in \ourdata and all tasks in \ourotherdata respectively.


To run \OurOtherModel, we use the same trained checkpoints of InGram. The difference is at inference time, where instead of a single forward pass with one sample of randomly initialized entity and relation embeddings for InGram, we draw 10 samples of initial entity and relation embeddings and run 10 forward passes. This yields 10 Monte Carlo samples of the triplet scores, which we then use to compute the \OurOtherModel triplet scores according to~\Cref{eq:de-ingram}.

For RMPI~\citep{geng2023relational}, we use the provided hyperparameters from the codebase and run the RMPI-NE version of the model with a concatenation-based fusion function, which generally has the best performance reported in RMPI~\citep{geng2023relational}. We note that, since our \ourgraphease does not contain ontological information over the unseen relation types of the test graphs, we instead provide the model with randomly initialized embeddings for the unseen relation types to perform \ourtask. 

Training was performed on NVidia A100s, L4s, GeForce RTX 2080 Ti, and TITAN V GPUs.

\subsubsection{\Ourtask over \ourdata}
\label{appx:exp-pediatype}

As discussed in \Cref{sec:exp}, we create our own \ourtask benchmark dataset \ourdata. Each graph in \ourdata is sampled from a graph in the OpenEA library~\citep{OpenEA} (under GPL-3.0 license).
OpenEA~\citep{OpenEA} library provides multiple pairs of \ourgraphease, each pair of which is a database containing similar topics.
Each node of a graph corresponds to the Universal Resource Identifier (URI) of an entity in the database, e.g., ``{\em http://dbpedia.org/resource/E399772}'' from English DBPedia.
Each relation type of a graph corresponds to the URI of a relation in the database, e.g., ``{\em http://dbpedia.org/ontology/award}'' from English DBPedia.
Moreover, since each pair of graphs describes similar topics, most entities and relations are highly related, e.g., ``{\em http://dbpedia.org/resource/E678522}'' from English and ``{\em http://fr.dbpedia.org/resource/E415873}'' from French are indeed the same thing, except that the labeling is different. These multilingual KGs predominantly use English for relation labels, which causes an overlap in relations. However, in our experimental setup, we treat relations as if they were in different languages and do not leverage this overlapping information during model training. Thus, we would expect a powerful model that is insensitive to node and relation type labelings to be able to learn on one graph of the pair and perform well on the other graph of the same pair.

To control the size under a feasible limitation, we use the same subgraph sampling algorithm as GraIL~\citep{teru2020inductive}, which proposes link prediction benchmarks over solely new nodes. Details are provided in \Cref{alg:sample-deg}. For each pair of graphs from the OpenEA library, e.g., English-to-French DBPedia, we first apply the sampling algorithm as in \Cref{alg:sample-deg} on each graph to reduce the size of each graph. Then we randomly split querying triplets given by the \Cref{alg:sample-deg} into 80\% training, 10\% validation, and 10\% test for each graph. Finally, to construct the task where we learn on English DBPedia but test on French DBPedia (denoted as EN-FR), we pick training and validation triplets from the English graph for model tuning, and only use test triplets from the French graph for model evaluation; Similarly, for task from French to English (FR-EN), we pick training and validation triplets from French graph for model tuning, and only use test triplets from English graph for model evaluation. The {\bf dataset statistics} for \ourdata are summarized in \Cref{fig:stats-pediatype}.

\begin{figure}
\centering
\begin{minipage}{0.24\textwidth}
\centering
\resizebox{\linewidth}{!}{
\begin{tabular}{l|cc}
\hline
& DBPedia & Wikidata \\
\hline
\#Nodes & 4906 & 4948\\
\#Relations & 144 & 102 \\
\#Triplets (Obv.) & 17593 & 23888 \\
\#Triplets (Qry.) & 1666 & 2456 \\
\#Avg. Deg. & 7.85 & 10.65 \\
\hline
\end{tabular}
}
\includegraphics[width=\linewidth]{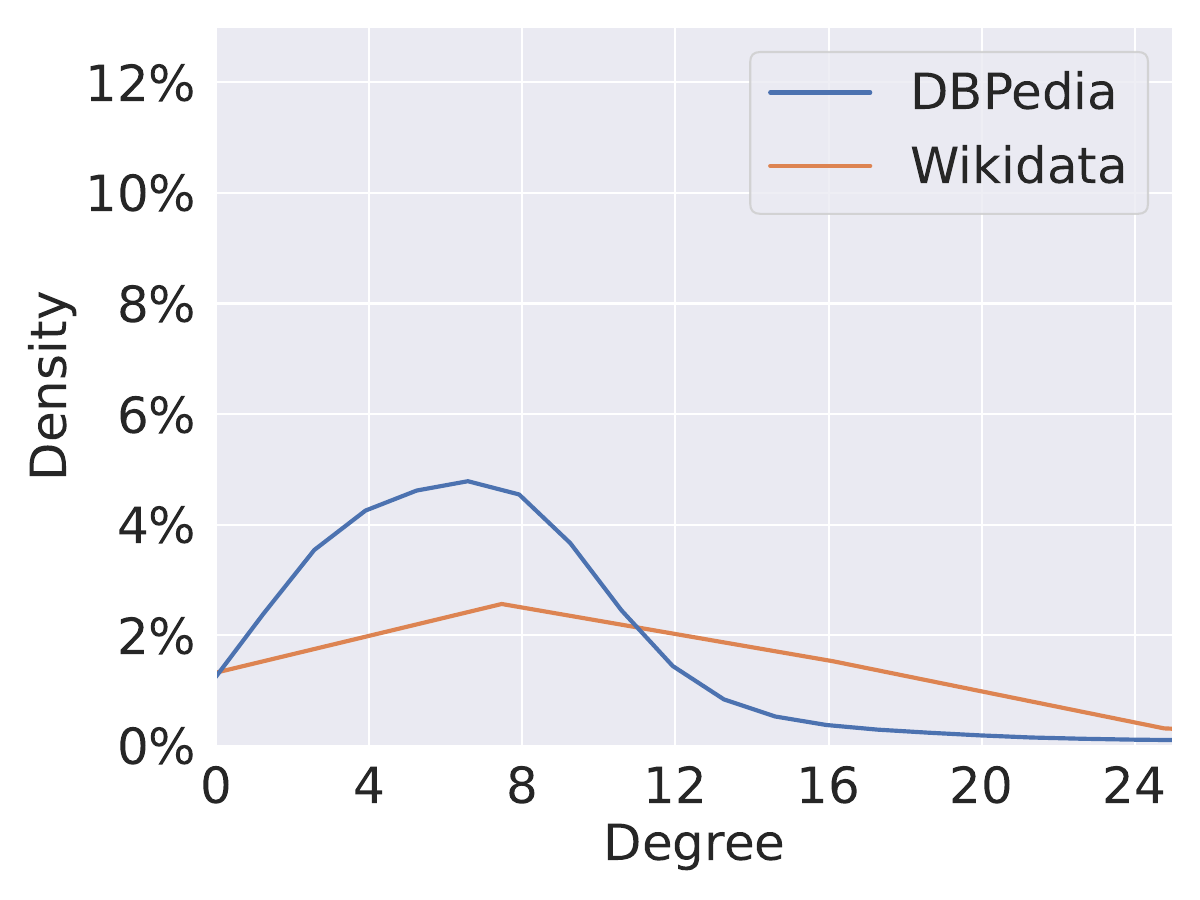}
\subcaption{DB$\longleftrightarrow$WD}
\end{minipage}
\hfill
\begin{minipage}{0.24\textwidth}
\centering
\resizebox{\linewidth}{!}{
\begin{tabular}{l|cc}
\hline
& DBPedia & YAGO \\
\hline
\#Nodes & 4795 & 4751 \\
\#Relations & 64 & 17 \\
\#Triplets (Obv.) & 13248 & 11327 \\
\#Triplets (Qry.) & 1177 & 973 \\
\#Avg. Deg. & 6.02 & 5.18 \\
\hline
\end{tabular}
}
\includegraphics[width=\linewidth]{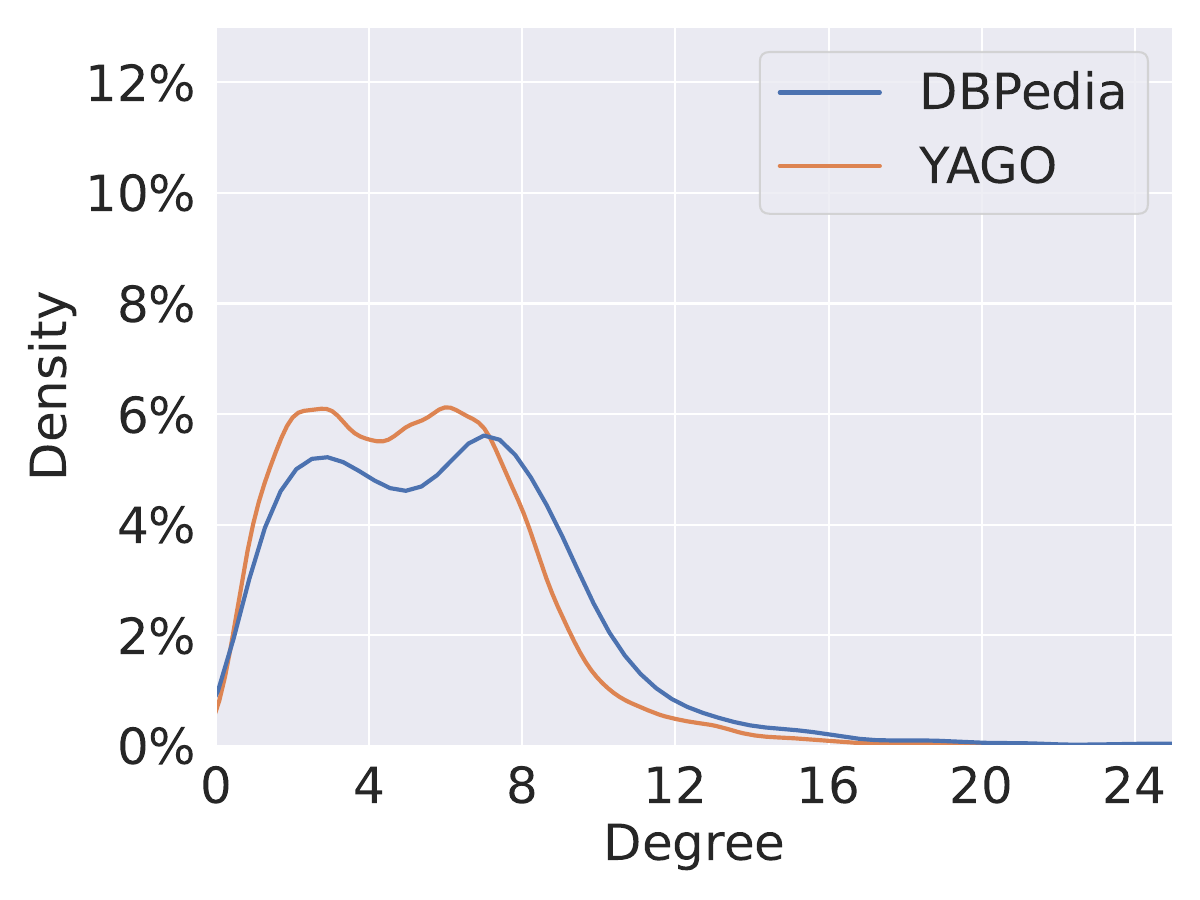}
\subcaption{DB$\longleftrightarrow$YG}
\end{minipage}
\hfill
\begin{minipage}{0.24\textwidth}
\centering
\resizebox{\linewidth}{!}{
\begin{tabular}{l|cc}
\hline
& English & French \\
\hline
\#Nodes & 4962 & 4933 \\
\#Relations & 122 & 101 \\
\#Triplets (Obv.) & 30876 & 24165 \\
\#Triplets (Qry.) & 3326 & 2485 \\
\#Avg. Deg. & 13.79 & 10.81 \\
\hline
\end{tabular}
}
\includegraphics[width=\linewidth]{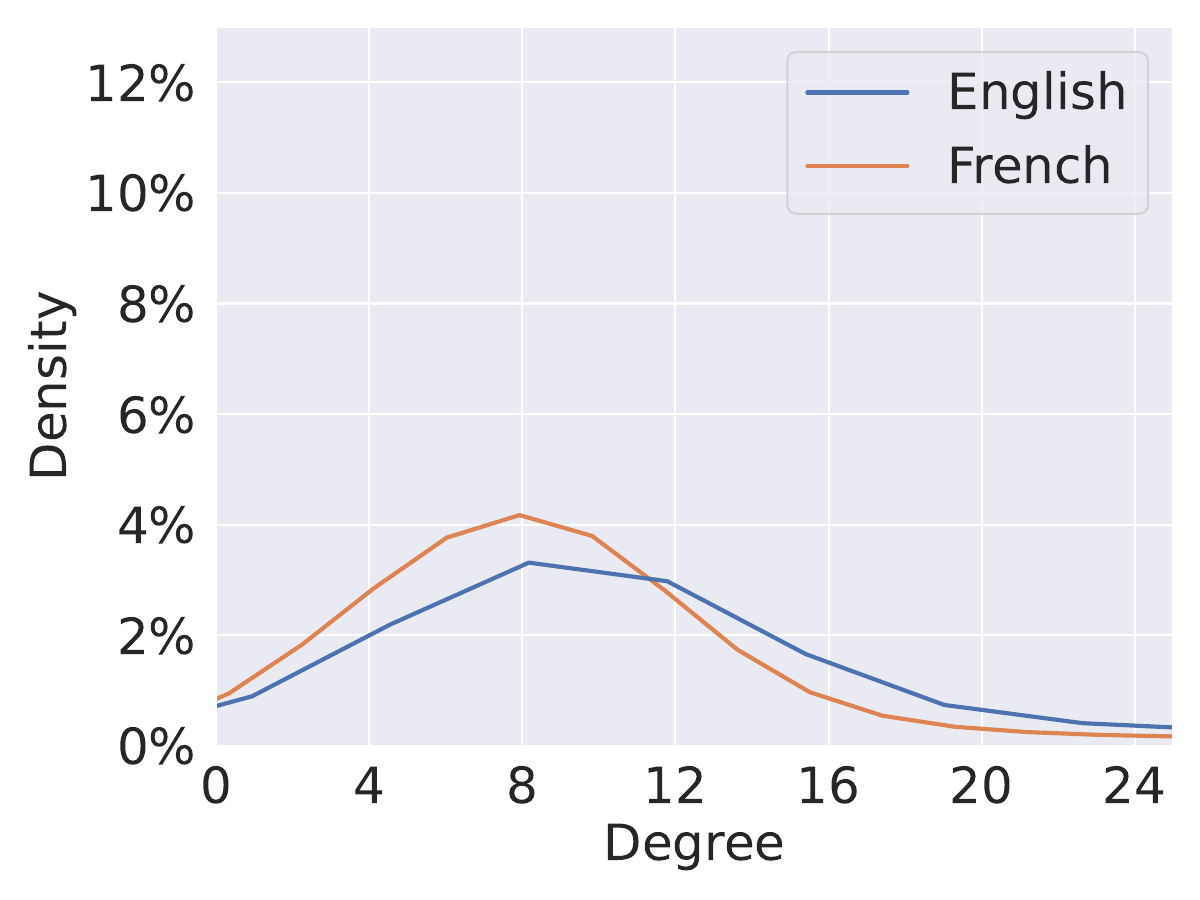}
\subcaption{EN$\longleftrightarrow$FR}
\end{minipage}
\hfill
\begin{minipage}{0.24\textwidth}
\centering
\resizebox{\linewidth}{!}{
\begin{tabular}{l|cc}
\hline
& English & German \\
\hline
\#Nodes & 4890 & 4915 \\
\#Relations & 121 & 67 \\
\#Triplets (Obv.) & 25177 & 29011 \\
\#Triplets (Qry.) & 2626 & 3100 \\
\#Avg. Deg. & 11.37 & 13.07 \\
\hline
\end{tabular}
}
\includegraphics[width=\linewidth]{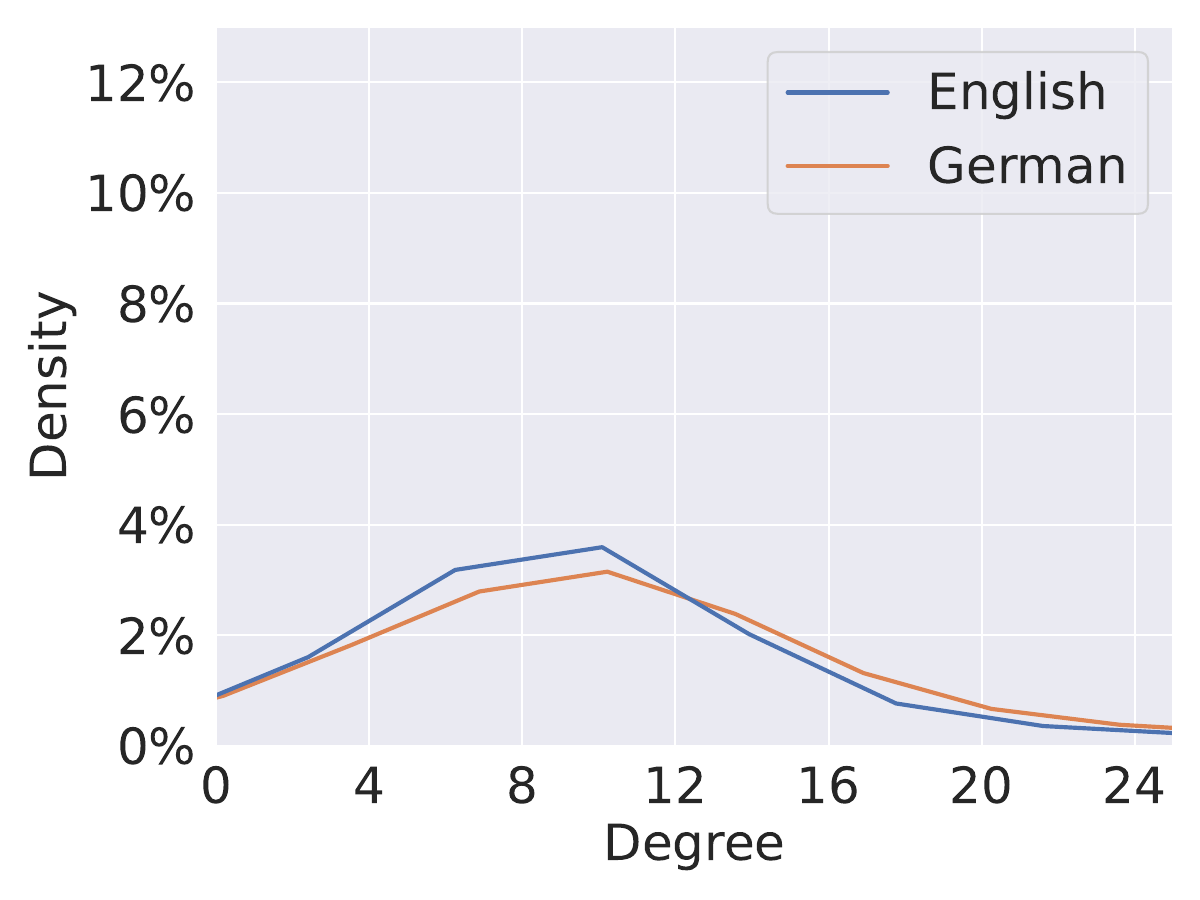}
\subcaption{EN$\longleftrightarrow$DE}
\end{minipage}
\caption{{\bf Statistics of \ourdata:} We report graph statistics including the number of nodes, number of relations, observed (obv.) triplets, querying (qry.) triplets, and average degree for each graph pair, e.g., (a) corresponds to DBPedia-and-Wikidata pair, and will be used to construct DB2WD and WD2DB tasks. We also report (in \& out) degree distribution on each graph at the bottom. We omit tail distribution larger than 25 since they are too small and almost flat.}
\label{fig:stats-pediatype}
\end{figure}

\begin{algorithm}
\caption{{\bf Sampling Algorithm for \ourdata.} This is a subgraph sampling code for a single graph (either training or test). It will reduce the large original graph into a connected graph of the required size.}
\label{alg:sample-deg}
\begin{algorithmic}[1]
\Require Raw graph triplets $\cS^\text{raw}$, Raw graph node set $\cV^\text{raw}$, Raw graph relation set $\cR^\text{raw}$, Maximum number of nodes $N$, Maximum number of edges $M$, Maximum node degree $D$.
\Ensure Subgraph triplets $\cS^\text{sub}$
\State $\cS^\text{sub} \leftarrow \emptyset$
\State $\cV^\text{sub} \leftarrow \emptyset$
\State $\cR^\text{sub} \leftarrow \emptyset$
\State Create an empty queue $Q$.
\State Get the node $v_0$ with the highest degree in the raw graph.
\State $Q\text{.add}(v_0)$
\State $\cV^\text{sub} \leftarrow \cV^\text{sub} \cup \{ v_0 \}$
\While{$|Q| > 0$}
    \State $u \leftarrow Q\text{.pop}()$
    \If{$|\cV^\text{sub}| \geq N$ or $|\cV^\text{sub}| \geq M$}
        \State \textbf{continue}
    \EndIf
    \State $\cB = \{ (v, r, u) | (r, v) \in \cR^\text{raw} \times \cV^\text{raw} \} \cup \{ (u, r, v) | (r, v) \in \cR^\text{raw} \times \cV^\text{raw} \}$
    \If{$|\cB| > D$}
        \State Uniformly select $D$ triplets from $\cB$ as $\cB'$
    \Else
        \State $\cB' \leftarrow \cB$
    \EndIf
    \For{$(i, r, j) \in \cB'$}
        \If{$i = u$}
            \State $Q\text{.add}(j)$
            \State $\cV^\text{sub} \leftarrow \cV^\text{sub} \cup \{ j \}$
        \Else
            \State $Q\text{.add}(i)$
            \State $\cV^\text{sub} \leftarrow \cV^\text{sub} \cup \{ i \}$
        \EndIf
        \State $\cS^\text{sub} \leftarrow \cS^\text{sub} \cup \{ (i, r, j) \}$
    \EndFor
\EndWhile
\end{algorithmic}
\end{algorithm}

\paragraph{Additional Results} We present the Node \& Relation Hits@10 performance in the main paper. We provide more results including MRR, Hits@1, Hits@5 in \Cref{tab:pediatypes-mrr,tab:pediatypes-hit1,tab:pediatypes-hit5}. We can see that our proposed \OurNewModel and \OurOtherModel achieve competent performance with the best baseline in the much harder relation prediction task, showing their power to generalize to both new nodes and new relations. The structural double equivariant model \OurNewModel performs worse on node prediction over some datasets, which might be due to the node GNN implementation of \OurNewModel. These tasks do not care much about the actual relation type as we can see from the superior performance of homogeneous GNNs on node prediction. So the additional equivariance over relations and the training loss over both negative nodes and negative relations might cause the model to focus more on the relation prediction task, while the double equivariant structural representation might hurt the performance of missing node prediction~\citep{srinivasan2020on}. %

But it is important to note that the structural double equivariant models excel on relation prediction and achieves much better results on Hits@1 and Hits@5 as shown in \Cref{tab:pediatypes-hit1,tab:pediatypes-hit5}. We also note that in the Hits@1 and Hits@5~\Cref{tab:pediatypes-hit1,tab:pediatypes-hit5}, there are cases where \OurOtherModel has higher variances than the original InGram while achieving much better average performance. This is because due to the random initialization, InGram performs poorly on the much harder Hits@1 and Hits@5 performance compared to Hits@10. In some seeds of the runs, \OurOtherModel successfully improves the performance of InGram, but there are still seeds of runs that \OurOtherModel still performs similar to InGram. Thus, it results in \OurOtherModel having much better average results while also with higher standard deviations.

\begin{table}[t]
\centering
\captionof{table}{
{\bf Relation \& Node MRR performance on \OurTask over \ourdata.} We report standard deviations over 5 runs. %
A higher value means better \ourtask performance. 
\update{Models labeled with DEq are double equivariant models, and those labeled with d-DEq are distributionally double equivariant ones.}
``Rand'' column contains unbiased estimations of the performance from a random predictor.
\update{\bf Double equivariant models consistently achieve better results than non-double-equivariant models with generally smaller standard deviations.}
N/A*: Not available due to constant crashes.
}
{
\subcaption{{\bf {\scriptsize Relation prediction $(i, ?, j)$ performance in \%. Higher $\uparrow$ is better.}}}
\resizebox{\linewidth}{!}{
\begin{tabular}{l r r r r r r r r}
    Models & EN-FR & FR-EN & EN-DE & DE-EN & DB-WD & WD-DB & DB-YG & YG-DB \\
    \midrule
    
    Rand & 8.86{\scriptsize $\pm 00.00$} & 8.86{\scriptsize $\pm 00.00$} & 8.86{\scriptsize $\pm 00.00$} & 8.86{\scriptsize $\pm 00.00$} & 8.86{\scriptsize $\pm 00.00$} & 8.86{\scriptsize $\pm 00.00$} & 8.86{\scriptsize $\pm 00.00$} & 8.86{\scriptsize $\pm 00.00$}\\
    \midrule
    
    GAT & 8.04{\scriptsize $\pm 00.25$} & 7.93{\scriptsize $\pm 00.04$} & 8.17{\scriptsize $\pm 00.08$} & 8.12{\scriptsize $\pm 00.09$} & 8.06{\scriptsize $\pm 00.15$} & 7.90{\scriptsize $\pm 00.12$} & 8.12{\scriptsize $\pm 00.21$} & 8.17{\scriptsize $\pm 00.16$} \\
    \text{{GIN}} & 8.07{{\scriptsize $\pm 00.09$}} & 8.09{{\scriptsize $\pm 00.05$}} & 8.07{{\scriptsize $\pm 00.13$}} & 8.07{{\scriptsize $\pm 00.11$}} & 8.03{{\scriptsize $\pm 00.20$}} & 7.97{{\scriptsize $\pm 00.30$}} & 7.82{{\scriptsize $\pm 00.27$}} & 7.84{{\scriptsize $\pm 00.14$}} \\
    GraphConv & 7.92{\scriptsize $\pm 00.16$} & 7.97{\scriptsize $\pm 00.12$} & 8.07{\scriptsize $\pm 00.15$} & 8.03{\scriptsize $\pm 00.05$} & 8.14{\scriptsize $\pm 00.04$} & 7.98{\scriptsize $\pm 00.18$} & 8.04{\scriptsize $\pm 00.24$} & 7.84{\scriptsize $\pm 00.13$} \\
    NBFNet &  10.25{\scriptsize $\pm 01.24$} & 9.53{\scriptsize $\pm 00.85$} & 8.15{\scriptsize $\pm 01.21$} & 4.32{\scriptsize $\pm 00.26$} & 10.33{\scriptsize $\pm 02.45$} & 8.97{\scriptsize $\pm 01.24$} & 9.29{\scriptsize $\pm 01.38$} & 14.54{\scriptsize $\pm 04.76$} \\
    RMPI & 12.45{\scriptsize $\pm 01.90$} & 12.10{\scriptsize $\pm 02.71$} & 11.69{\scriptsize $\pm 04.37$} & 10.28{\scriptsize $\pm 01.28$} & N/A* & 8.54{\scriptsize $\pm 02.70$} & 17.89{\scriptsize $\pm 12.22$} & 6.53{\scriptsize $\pm 02.16$} \\
    \midrule
    InGram \update{(d-DEq)} & 50.03{\scriptsize $\pm 05.32$} & 26.31{\scriptsize $\pm 08.27$} & 21.32{\scriptsize $\pm 07.84$} & 29.81{\scriptsize $\pm 14.21$} & 48.70{\scriptsize $\pm 10.06$} & 38.81{\scriptsize $\pm 03.10$} & 29.94{\scriptsize $\pm 13.28$} & 32.26{\scriptsize $\pm 13.97$} \\
    Ultra \update{(DEq)}
    &  \textbf{82.50}{\scriptsize $\pm 02.44$} 
    &  \textbf{72.99}{\scriptsize $\pm 07.41$} 
    &  \textbf{76.16}{\scriptsize $\pm 07.00$} 
    &  \textbf{91.05}{\scriptsize $\pm 01.53$} 
    &  \textbf{88.52}{\scriptsize $\pm 03.82$} 
    &  \textbf{75.30}{\scriptsize $\pm 04.87$} 
    &  \underline{47.42}{\scriptsize $\pm 12.54$} 
    &  \textbf{80.38}{\scriptsize $\pm 01.85$} 
    \\
    %
    
    \OurOtherModel \update{(DEq)} & \underline{73.38}{\scriptsize $\pm 05.77$} & 41.61{\scriptsize $\pm 10.12$} & 46.86{\scriptsize $\pm 09.11$} & 40.56{\scriptsize $\pm 14.80$} & \underline{80.74}{\scriptsize $\pm 04.47$} & 66.06{\scriptsize $\pm 02.91$} & 39.51{\scriptsize $\pm 16.76$} & 49.10{\scriptsize $\pm 05.43$} \\

    \OurNewModel \update{(DEq)} & 72.96{\scriptsize $\pm 00.77$} & \underline{65.73}{\scriptsize $\pm 00.58$} & \underline{59.95}{\scriptsize $\pm 03.91$} & \underline{84.71}{\scriptsize $\pm 01.11$} & 71.47{\scriptsize $\pm 00.31$} & \underline{71.47}{\scriptsize $\pm 00.69$} & \textbf{66.48}{\scriptsize $\pm 06.75$} & \underline{67.36}{\scriptsize $\pm 00.43$}
\end{tabular}
\label{tab:pediatypes-relation-mrr}
}
\vspace{5pt} %
{
\subcaption{{\bf {\scriptsize Node prediction $(i, k, ?)$ performance in \%. Higher $\uparrow$ is better.}}}
\resizebox{\linewidth}{!}{
\begin{tabular}{l r r r r r r r r}
    Models & EN-FR & FR-EN & EN-DE & DE-EN & DB-WD & WD-DB & DB-YG & YG-DB \\
    \midrule
    
    Rand & 8.86{\scriptsize $\pm 00.00$} & 8.86{\scriptsize $\pm 00.00$} & 8.86{\scriptsize $\pm 00.00$} & 8.86{\scriptsize $\pm 00.00$} & 8.86{\scriptsize $\pm 00.00$} & 8.86{\scriptsize $\pm 00.00$} & 8.86{\scriptsize $\pm 00.00$} & 8.86{\scriptsize $\pm 00.00$}\\
    \midrule
    
    GAT & 51.43{\scriptsize $\pm 00.25$} & 49.48{\scriptsize $\pm 01.51$} & 26.22{\scriptsize $\pm 00.44$} & 25.45{\scriptsize $\pm 01.23$} & 16.87{\scriptsize $\pm 00.59$} & 34.66{\scriptsize $\pm 00.33$} & 37.22{\scriptsize $\pm 00.29$} & 45.96{\scriptsize $\pm 00.29$} \\
    GIN & 53.72{\scriptsize $\pm 03.45$} & 52.03{\scriptsize $\pm 03.38$} & 34.60{\scriptsize $\pm 07.43$} & 37.27{\scriptsize $\pm 09.42$} & 20.75{\scriptsize $\pm 07.22$} & 40.37{\scriptsize $\pm 08.20$} & 35.80{\scriptsize $\pm 01.36$} & 44.77{\scriptsize $\pm 00.92$} \\
    GraphConv & 63.72{\scriptsize $\pm 01.76$} & 57.77{\scriptsize $\pm 01.09$} & 48.18{\scriptsize $\pm 00.96$} & 45.18{\scriptsize $\pm 00.15$} & 22.49{\scriptsize $\pm 00.76$} & 50.30{\scriptsize $\pm 02.80$} & 38.71{\scriptsize $\pm 00.55$} & 50.54{\scriptsize $\pm 00.42$} \\
    NBFNet & 69.22{\scriptsize $\pm 02.44$} & \underline{74.01}{\scriptsize $\pm 01.41$} & {63.49}{\scriptsize $\pm 02.44$} & 38.86{\scriptsize $\pm 02.55$} & 41.26{\scriptsize $\pm 02.58$} & {64.02}{\scriptsize $\pm 01.25$} & 38.13{\scriptsize $\pm 01.11$} & 52.30{\scriptsize $\pm 02.09$}  \\
    RMPI & 63.02{\scriptsize $\pm 02.94$} & 43.72{\scriptsize $\pm 05.65$} & 44.82{\scriptsize $\pm 02.93$} & 46.84{\scriptsize $\pm 05.36$} & N/A* & 46.33{\scriptsize $\pm 08.76$} & \underline{43.00}{\scriptsize $\pm 03.70$} & 53.72{\scriptsize $\pm 01.84$}  \\
    \midrule
    InGram \update{(d-DEq)} & 71.23{\scriptsize $\pm 01.73$} & 55.67{\scriptsize $\pm 05.65$} & 55.94{\scriptsize $\pm 02.76$} & {61.15}{\scriptsize $\pm 01.42$} & 34.50{\scriptsize $\pm 08.47$} & 57.05{\scriptsize $\pm 03.73$} & 26.36{\scriptsize $\pm 04.73$} & \underline{56.23}{\scriptsize $\pm 01.56$} \\
    Ultra \update{(DEq)}
    &  \textbf{84.97}{\scriptsize $\pm 00.09$} 
    &  \textbf{88.20}{\scriptsize $\pm 00.25$} 
    &  \textbf{81.03}{\scriptsize $\pm 00.10$} 
    &  \textbf{72.80}{\scriptsize $\pm 00.26$} 
    &  \textbf{69.04}{\scriptsize $\pm 00.18$} 
    &  \textbf{81.43}{\scriptsize $\pm 00.16$} 
    &  \textbf{49.19}{\scriptsize $\pm 01.35$} 
    &  \textbf{60.24}{\scriptsize $\pm 00.74$} 
    \\
    %
    
    \OurOtherModel \update{(DEq)} & \underline{78.45}{\scriptsize $\pm 00.89$} & 68.59{\scriptsize $\pm 04.30$} & 66.13{\scriptsize $\pm 01.48$} & \underline{70.32}{\scriptsize $\pm 01.58$} & 44.71{\scriptsize $\pm 08.98$} & 69.23{\scriptsize $\pm 02.53$} & 35.67{\scriptsize $\pm 03.92$} & 48.07{\scriptsize $\pm 08.76$} \\

    \OurNewModel \update{(DEq)} & 74.95{\scriptsize $\pm 01.56$} & 57.17{\scriptsize $\pm 01.70$} & \underline{74.38}{\scriptsize $\pm 00.66$} & 62.62{\scriptsize $\pm 00.22$} & \underline{63.21}{\scriptsize $\pm 00.69$} & \underline{70.58}{\scriptsize $\pm 00.42$} & 34.79{\scriptsize $\pm 00.49$} & 40.71{\scriptsize $\pm 01.75$}
\end{tabular}
\label{tab:pediatypes-node-mrr}
}
}
}
\label{tab:pediatypes-mrr}
\end{table}

\begin{table}[t]
\centering
\captionof{table}{
{\bf Relation \& Node Hits@1 performance on \OurTask over \ourdata.} We report standard deviations over 5 runs. %
A higher value means better \ourtask performance.
\update{Models labeled with DEq are double equivariant models, and those labeled with d-DEq are distributionally double equivariant ones.}
``Rand'' column contains unbiased estimations of the performance from a random predictor.
\update{\bf Double equivariant models consistently achieve better results than non-double-equivariant models with generally smaller standard deviations.}
N/A*: Not available due to constant crashes.
}
{
\subcaption{{\bf {\scriptsize Relation prediction $(i, ?, j)$ performance in \%. Higher $\uparrow$ is better.}}}
\resizebox{\linewidth}{!}{
\begin{tabular}{l r r r r r r r r}
    Models & EN-FR & FR-EN & EN-DE & DE-EN & DB-WD & WD-DB & DB-YG & YG-DB \\
    \midrule
    
    Rand & 1.96{\scriptsize $\pm 00.00$} & 1.96{\scriptsize $\pm 00.00$} & 1.96{\scriptsize $\pm 00.00$} & 1.96{\scriptsize $\pm 00.00$} & 1.96{\scriptsize $\pm 00.00$} & 1.96{\scriptsize $\pm 00.00$} & 1.96{\scriptsize $\pm 00.00$} & 1.96{\scriptsize $\pm 00.00$}\\
    \midrule
    
    GAT & 1.07{\scriptsize $\pm 00.14$} & 1.01{\scriptsize $\pm 00.01$} & 1.03{\scriptsize $\pm 00.03$} & 1.11{\scriptsize $\pm 00.09$} & 1.07{\scriptsize $\pm 00.14$} & 0.99{\scriptsize $\pm 00.21$} & 0.96{\scriptsize $\pm 00.16$} & 1.09{\scriptsize $\pm 00.25$} \\
    GIN & 1.01{\scriptsize $\pm 00.03$} & 0.95{\scriptsize $\pm 00.08$} & 1.03{\scriptsize $\pm 00.06$} & 1.10{\scriptsize $\pm 00.06$} & 0.96{\scriptsize $\pm 00.15$} & 1.00{\scriptsize $\pm 00.15$} & 0.92{\scriptsize $\pm 00.15$} & 0.83{\scriptsize $\pm 00.17$} \\
    GraphConv & 0.91{\scriptsize $\pm 00.03$} & 0.97{\scriptsize $\pm 00.06$} & 1.05{\scriptsize $\pm 00.14$} & 1.01{\scriptsize $\pm 00.03$} & 1.09{\scriptsize $\pm 00.07$} & 0.91{\scriptsize $\pm 00.04$} & 0.94{\scriptsize $\pm 00.22$} & 0.88{\scriptsize $\pm 00.20$} \\
    NBFNet & 4.43{\scriptsize $\pm 01.24$} & 3.62{\scriptsize $\pm 01.01$} & 2.49{\scriptsize $\pm 01.23$} & 0.51{\scriptsize $\pm 00.18$} & 4.18{\scriptsize $\pm 02.17$} & 2.80{\scriptsize $\pm 00.83$} & 1.63{\scriptsize $\pm 00.89$} & 7.30{\scriptsize $\pm 05.01$} \\
    RMPI & 3.92{\scriptsize $\pm 02.08$} & 4.04{\scriptsize $\pm 01.83$} & 3.37{\scriptsize $\pm 02.20$} & 2.13{\scriptsize $\pm 00.79$} & N/A* & 2.39{\scriptsize $\pm 02.35$} & 7.36{\scriptsize $\pm 09.03$} & 0.91{\scriptsize $\pm 00.92$} \\
    \midrule
    InGram \update{(d-DEq)} & 35.19{\scriptsize $\pm 07.73$} & 12.40{\scriptsize $\pm 07.55$} & 8.45{\scriptsize $\pm 06.57$} & 16.46{\scriptsize $\pm 16.33$} & 33.66{\scriptsize $\pm 12.09$} & 25.69{\scriptsize $\pm 03.88$} & 14.24{\scriptsize $\pm 12.00$} & 15.83{\scriptsize $\pm 12.59$} \\
    Ultra \update{(DEq)}
    &  \textbf{78.64}{\scriptsize $\pm 03.16$} 
    &  \textbf{66.58}{\scriptsize $\pm 09.50$} 
    &  \textbf{70.53}{\scriptsize $\pm 08.99$} 
    &  \textbf{89.25}{\scriptsize $\pm 01.90$} 
    &  \textbf{86.07}{\scriptsize $\pm 04.78$} 
    &  \textbf{70.47}{\scriptsize $\pm 05.95$} 
    &  \underline{35.73}{\scriptsize $\pm 15.06$} 
    &  \textbf{76.42}{\scriptsize $\pm 02.37$} 
    \\
    %
    
    \OurOtherModel \update{(DEq)} & \underline{65.26}{\scriptsize $\pm 10.23$} & 26.90{\scriptsize $\pm 12.97$} & 36.80{\scriptsize $\pm 11.16$} & 25.34{\scriptsize $\pm 18.48$} & \underline{75.00}{\scriptsize $\pm 06.42$} & \underline{60.35}{\scriptsize $\pm 02.56$} & 24.28{\scriptsize $\pm 14.29$} & 30.82{\scriptsize $\pm 10.43$} \\

    \OurNewModel \update{(DEq)} & 58.43{\scriptsize $\pm 01.29$} & \underline{48.68}{\scriptsize $\pm 00.96$} & \underline{37.29}{\scriptsize $\pm 05.11$} & \underline{75.08}{\scriptsize $\pm 01.99$} & 57.05{\scriptsize $\pm 00.92$} & 56.00{\scriptsize $\pm 01.17$} & \textbf{59.36}{\scriptsize $\pm 07.96$} & \underline{49.41}{\scriptsize $\pm 00.85$}
\end{tabular}
\label{tab:pediatypes-relation-hit1}
}
\vspace{5pt} %
{
\subcaption{{\bf {\scriptsize Node prediction $(i, k, ?)$ performance in \%. Higher $\uparrow$ is better.}}}
\resizebox{\linewidth}{!}{
\begin{tabular}{l r r r r r r r r}
    Models & EN-FR & FR-EN & EN-DE & DE-EN & DB-WD & WD-DB & DB-YG & YG-DB \\
    \midrule
    
   Rand & 1.96{\scriptsize $\pm 00.00$} & 1.96{\scriptsize $\pm 00.00$} & 1.96{\scriptsize $\pm 00.00$} & 1.96{\scriptsize $\pm 00.00$} & 1.96{\scriptsize $\pm 00.00$} & 1.96{\scriptsize $\pm 00.00$} & 1.96{\scriptsize $\pm 00.00$} & 1.96{\scriptsize $\pm 00.00$}\\
    \midrule
    
    GAT & 31.80{\scriptsize $\pm 00.64$} & 30.19{\scriptsize $\pm 02.30$} & 10.23{\scriptsize $\pm 00.96$} & 8.68{\scriptsize $\pm 01.69$} & 7.98{\scriptsize $\pm 00.89$} & 16.26{\scriptsize $\pm 00.34$} & 26.09{\scriptsize $\pm 00.47$} & 33.06{\scriptsize $\pm 00.29$} \\
    GIN & 34.59{\scriptsize $\pm 04.64$} & 34.57{\scriptsize $\pm 05.26$} & 17.69{\scriptsize $\pm 07.91$} & 20.74{\scriptsize $\pm 10.01$} & 12.42{\scriptsize $\pm 06.59$} & 23.10{\scriptsize $\pm 09.67$} & 23.72{\scriptsize $\pm 01.62$} & 32.26{\scriptsize $\pm 01.89$} \\
    GraphConv & 47.48{\scriptsize $\pm 02.60$} & 40.37{\scriptsize $\pm 01.52$} & 31.96{\scriptsize $\pm 01.02$} & 28.46{\scriptsize $\pm 00.13$} & 12.53{\scriptsize $\pm 00.34$} & 35.82{\scriptsize $\pm 03.54$} & 24.12{\scriptsize $\pm 00.80$} & 37.05{\scriptsize $\pm 00.51$} \\
    NBFNet & 64.17{\scriptsize $\pm 02.68$} & \underline{69.68}{\scriptsize $\pm 01.63$} & 57.50{\scriptsize $\pm 02.66$} & 32.26{\scriptsize $\pm 02.81$} & 34.56{\scriptsize $\pm 02.54$} & 59.70{\scriptsize $\pm 01.38$} & \underline{33.32}{\scriptsize $\pm 01.11$} & \underline{47.47}{\scriptsize $\pm 02.08$} \\
    RMPI & 48.27{\scriptsize $\pm 03.74$} & 26.92{\scriptsize $\pm 04.87$} & 27.38{\scriptsize $\pm 03.09$} & 29.60{\scriptsize $\pm 04.77$} & N/A* & 34.81{\scriptsize $\pm 08.97$} & {33.29}{\scriptsize $\pm 03.20$} & {42.14}{\scriptsize $\pm 02.87$} \\
    \midrule
    InGram \update{(d-DEq)} & 60.00{\scriptsize $\pm 02.06$} & 41.59{\scriptsize $\pm 06.37$} & 39.05{\scriptsize $\pm 02.99$} & {45.44}{\scriptsize $\pm 01.69$} & 22.06{\scriptsize $\pm 08.10$} & 42.54{\scriptsize $\pm 04.50$} & 13.47{\scriptsize $\pm 03.50$} & 20.09{\scriptsize $\pm 04.96$} \\
    Ultra \update{(DEq)}
    &  \textbf{82.21}{\scriptsize $\pm 00.12$} 
    &  \textbf{86.39}{\scriptsize $\pm 00.30$} 
    &  \textbf{77.61}{\scriptsize $\pm 00.13$} 
    &  \textbf{67.74}{\scriptsize $\pm 00.32$} 
    &  \textbf{63.87}{\scriptsize $\pm 00.19$} 
    &  \textbf{78.43}{\scriptsize $\pm 00.20$} 
    &  \textbf{42.13}{\scriptsize $\pm 01.31$} 
    &  \textbf{54.25}{\scriptsize $\pm 00.70$} 
    \\
    %
    
    \OurOtherModel \update{(DEq)} & \underline{69.46}{\scriptsize $\pm 01.12$} & 57.65{\scriptsize $\pm 05.54$} & {51.93}{\scriptsize $\pm 01.88$} & \underline{57.06}{\scriptsize $\pm 01.96$} & 32.12{\scriptsize $\pm 09.51$} & {57.84}{\scriptsize $\pm 03.28$} & 20.49{\scriptsize $\pm 03.35$} & 33.01{\scriptsize $\pm 08.87$} \\

    \OurNewModel \update{(DEq)} & 62.17{\scriptsize $\pm 02.38$} & 44.67{\scriptsize $\pm 01.92$} & \underline{63.36}{\scriptsize $\pm 00.78$} & {47.78}{\scriptsize $\pm 00.48$} & \underline{51.76}{\scriptsize $\pm 00.86$} & \underline{60.15}{\scriptsize $\pm 00.69$} & 20.74{\scriptsize $\pm 00.28$} & 26.24{\scriptsize $\pm 02.03$}
\end{tabular}
\label{tab:pediatypes-node-hit1}
}
}
}
\label{tab:pediatypes-hit1}
\end{table}

\begin{table}[t]
\centering
\captionof{table}{
{\bf Relation \& Node Hits@5 performance on \OurTask over \ourdata.} We report standard deviations over 5 runs. %
A higher value means better \ourtask performance. 
\update{Models labeled with DEq are double equivariant models, and those labeled with d-DEq are distributionally double equivariant ones.}
``Rand'' column contains unbiased estimations of the performance from a random predictor.
\update{\bf Double equivariant models consistently achieve better results than non-double-equivariant models with generally smaller standard deviations.}
N/A*: Not available due to constant crashes.
}
{
\subcaption{{\bf {\scriptsize Relation prediction $(i, ?, j)$ performance in \%. Higher $\uparrow$ is better.}}}
\resizebox{\linewidth}{!}{
\begin{tabular}{l r r r r r r r r}
    Models & EN-FR & FR-EN & EN-DE & DE-EN & DB-WD & WD-DB & DB-YG & YG-DB \\
    \midrule
    
    Rand & 9.80{\scriptsize $\pm 00.00$} & 9.80{\scriptsize $\pm 00.00$} & 9.80{\scriptsize $\pm 00.00$} & 9.80{\scriptsize $\pm 00.00$} & 9.80{\scriptsize $\pm 00.00$} & 9.80{\scriptsize $\pm 00.00$} & 9.80{\scriptsize $\pm 00.00$} & 9.80{\scriptsize $\pm 00.00$}\\
    \midrule
    
    GAT & 9.08{\scriptsize $\pm 00.39$} & 8.63{\scriptsize $\pm 00.25$} & 9.47{\scriptsize $\pm 00.18$} & 9.20{\scriptsize $\pm 00.24$} & 8.95{\scriptsize $\pm 00.36$} & 8.63{\scriptsize $\pm 00.29$} & 9.58{\scriptsize $\pm 00.50$} & 9.16{\scriptsize $\pm 00.23$} \\
    GIN & 9.09{\scriptsize $\pm 00.16$} & 9.31{\scriptsize $\pm 00.15$} & 9.18{\scriptsize $\pm 00.28$} & 9.23{\scriptsize $\pm 00.34$} & 9.12{\scriptsize $\pm 00.12$} & 8.85{\scriptsize $\pm 00.56$} & 8.53{\scriptsize $\pm 00.66$} & 8.61{\scriptsize $\pm 00.34$} \\
    GraphConv & 8.97{\scriptsize $\pm 00.66$} & 8.74{\scriptsize $\pm 00.26$} & 9.23{\scriptsize $\pm 00.11$} & 8.82{\scriptsize $\pm 00.10$} & 9.17{\scriptsize $\pm 00.29$} & 9.11{\scriptsize $\pm 00.50$} & 9.01{\scriptsize $\pm 00.72$} & 8.73{\scriptsize $\pm 00.15$} \\
    NBFNet &  12.94{\scriptsize $\pm 01.77$} & 12.46{\scriptsize $\pm 01.40$} & 8.56{\scriptsize $\pm 01.67$} & 2.68{\scriptsize $\pm 00.72$} & 13.44{\scriptsize $\pm 04.02$} & 11.74{\scriptsize $\pm 03.02$} & 11.95{\scriptsize $\pm 03.78$} & 20.37{\scriptsize $\pm 05.90$} \\
    RMPI & 16.39{\scriptsize $\pm 04.15$} & 15.76{\scriptsize $\pm 04.58$} & 15.86{\scriptsize $\pm 08.05$} & 12.56{\scriptsize $\pm 02.70$} & N/A* & 8.91{\scriptsize $\pm 03.51$}  & 24.25{\scriptsize $\pm 19.24$} & 4.98{\scriptsize $\pm 03.08$} \\
    \midrule
    InGram \update{(d-DEq)} & 67.15{\scriptsize $\pm 05.04$} & 37.86{\scriptsize $\pm 14.41$} & 30.99{\scriptsize $\pm 11.82$} & 40.00{\scriptsize $\pm 13.02$} & 65.80{\scriptsize $\pm 09.59$} & 51.66{\scriptsize $\pm 03.57$} & 43.27{\scriptsize $\pm 19.30$} & 51.54{\scriptsize $\pm 26.09$}  \\
    Ultra \update{(DEq)}
    &  \textbf{98.32}{\scriptsize $\pm 00.07$} 
    &  \textbf{97.01}{\scriptsize $\pm 01.24$} 
    &  \textbf{98.78}{\scriptsize $\pm 00.35$} 
    &  \textbf{98.41}{\scriptsize $\pm 00.50$} 
    &  \textbf{97.83}{\scriptsize $\pm 00.61$} 
    &  \underline{92.22}{\scriptsize $\pm 01.77$} 
    &  \textbf{75.95}{\scriptsize $\pm 07.70$} 
    &  \textbf{95.85}{\scriptsize $\pm 00.03$} 
    \\
    %
    
    \OurOtherModel \update{(DEq)} & 83.23{\scriptsize $\pm 05.64$} & {59.83}{\scriptsize $\pm 11.57$} & {54.30}{\scriptsize $\pm 08.25$} & {57.65}{\scriptsize $\pm 15.74$} & {87.08}{\scriptsize $\pm 02.55$} & {70.79}{\scriptsize $\pm 03.80$} & {51.45}{\scriptsize $\pm 29.14$} & {75.85}{\scriptsize $\pm 07.26$} \\

    \OurNewModel \update{(DEq)} & \underline{93.79}{\scriptsize $\pm 00.20$} & \underline{91.21}{\scriptsize $\pm 00.25$} & \underline{94.04}{\scriptsize $\pm 01.21$} & \underline{96.64}{\scriptsize $\pm 00.13$} & \underline{90.83}{\scriptsize $\pm 02.29$} & \textbf{93.97}{\scriptsize $\pm 00.45$} & \underline{74.27}{\scriptsize $\pm 06.29$} & \underline{92.75}{\scriptsize $\pm 00.40$}
\end{tabular}
\label{tab:pediatypes-relation-hit5}
}
\vspace{5pt} %
{
\subcaption{{\bf {\scriptsize Node prediction $(i, k, ?)$ performance in \%. Higher $\uparrow$ is better.}}}
\resizebox{\linewidth}{!}{
\begin{tabular}{l r r r r r r r r}
    Models & EN-FR & FR-EN & EN-DE & DE-EN & DB-WD & WD-DB & DB-YG & YG-DB \\
    \midrule
    
    Rand & 9.80{\scriptsize $\pm 00.00$} & 9.80{\scriptsize $\pm 00.00$} & 9.80{\scriptsize $\pm 00.00$} & 9.80{\scriptsize $\pm 00.00$} & 9.80{\scriptsize $\pm 00.00$} & 9.80{\scriptsize $\pm 00.00$} & 9.80{\scriptsize $\pm 00.00$} & 9.80{\scriptsize $\pm 00.00$}\\
    \midrule
    
    GAT & 78.49{\scriptsize $\pm 00.44$} & 74.70{\scriptsize $\pm 00.68$} & 42.17{\scriptsize $\pm 00.91$} & 42.39{\scriptsize $\pm 00.52$} & 20.96{\scriptsize $\pm 00.65$} & 57.26{\scriptsize $\pm 00.89$} & 46.92{\scriptsize $\pm 00.37$} & 59.20{\scriptsize $\pm 00.41$} \\
    GIN & 79.96{\scriptsize $\pm 01.88$} & 74.33{\scriptsize $\pm 01.16$} & 53.97{\scriptsize $\pm 07.61$} & 55.89{\scriptsize $\pm 10.06$} & 25.05{\scriptsize $\pm 09.23$} & 61.94{\scriptsize $\pm 06.71$} & 46.56{\scriptsize $\pm 01.37$} & 57.48{\scriptsize $\pm 00.35$} \\
    GraphConv & 85.21{\scriptsize $\pm 00.63$} & 80.67{\scriptsize $\pm 00.30$} & 67.76{\scriptsize $\pm 01.19$} & 64.97{\scriptsize $\pm 00.43$} & 28.37{\scriptsize $\pm 01.41$} & 67.36{\scriptsize $\pm 02.37$} & \underline{53.79}{\scriptsize $\pm 00.72$} & 64.13{\scriptsize $\pm 00.23$} \\
    NBFNet & 81.48{\scriptsize $\pm 02.24$} & \underline{85.15}{\scriptsize $\pm 01.06$} & 77.62{\scriptsize $\pm 02.41$} & 48.73{\scriptsize $\pm 02.59$} & 51.52{\scriptsize $\pm 03.21$} & {72.18}{\scriptsize $\pm 00.90$} & 44.01{\scriptsize $\pm 01.40$} & 60.34{\scriptsize $\pm 02.28$}  \\
    RMPI & 82.47{\scriptsize $\pm 02.25$} & 64.88{\scriptsize $\pm 07.62$} & 67.24{\scriptsize $\pm 04.38$} & 69.47{\scriptsize $\pm 06.60$} & N/A* & 60.11{\scriptsize $\pm 08.77$}  & 51.57{\scriptsize $\pm 05.03$} & \underline{66.67}{\scriptsize $\pm 01.28$} \\
    \midrule
    InGram \update{(d-DEq)} & {85.15}{\scriptsize $\pm 01.74$} & 72.32{\scriptsize $\pm 05.31$} & {78.84}{\scriptsize $\pm 02.86$} & {81.01}{\scriptsize $\pm 00.97$} & 45.96{\scriptsize $\pm 11.09$} & 74.88{\scriptsize $\pm 03.09$} & 37.49{\scriptsize $\pm 06.84$} & 50.66{\scriptsize $\pm 06.76$} \\
    Ultra \update{(DEq)}
    &  \textbf{96.61}{\scriptsize $\pm 00.04$} 
    &  \textbf{96.34}{\scriptsize $\pm 00.08$} 
    &  \textbf{95.07}{\scriptsize $\pm 00.09$} 
    &  \textbf{91.92}{\scriptsize $\pm 00.13$} 
    &  \textbf{83.82}{\scriptsize $\pm 00.32$} 
    &  \textbf{91.41}{\scriptsize $\pm 00.08$}  
    &  \textbf{66.92}{\scriptsize $\pm 01.61$} 
    &  \textbf{76.66}{\scriptsize $\pm 01.38$} 
    \\
    %
    
    \OurOtherModel \update{(DEq)} & {89.62}{\scriptsize $\pm 00.63$} & {81.54}{\scriptsize $\pm 02.82$} & {84.57}{\scriptsize $\pm 00.95$} & \underline{87.16}{\scriptsize $\pm 01.04$} & {57.44}{\scriptsize $\pm 09.14$} & {83.14}{\scriptsize $\pm 01.64$} & 51.77{\scriptsize $\pm 05.14$} & {65.33}{\scriptsize $\pm 09.57$} \\

    \OurNewModel \update{(DEq)} & \underline{92.45}{\scriptsize $\pm 00.73$} & 71.24{\scriptsize $\pm 02.13$} & \underline{89.98}{\scriptsize $\pm 00.96$} & {82.65}{\scriptsize $\pm 00.79$} & \underline{76.12}{\scriptsize $\pm 00.87$} & \underline{83.33}{\scriptsize $\pm 00.46$} & 48.04{\scriptsize $\pm 01.78$} & 57.92{\scriptsize $\pm 01.47$}
\end{tabular}
\label{tab:pediatypes-node-hit5}
}
}
}
\label{tab:pediatypes-hit5}
\end{table}

\subsubsection{\ourotherdata: Testing meta-learning and zero-shot transfer capabilities}
\label{appx:exp-wikitopics}

The WikiTopics dataset is created from the WikiData-5M~\citep{wang2021kepler} (under CC0 1.0 license).
Each node in the graphs of this dataset represents an entity described by an existing Wikipedia page, and each relation type corresponds to a particular relation between the entities, such as ``director of'' or ``designed by''.
The node and relation type indices are codenames that start with the prefix ``Q'' and ``P'' respectively, which are devoid of semantic meaning.
Nevertheless, WikiData-5M~\citep{wang2021kepler} provides aliases for all nodes and relation types that map their indices to textual descriptions, and we use these textual descriptions to group the relation types into 11 different topic groups, or domains (we do not however provide these textual descriptions to the models per the specification of the \ourtask task).
In total, WikiData-5M~\citep{wang2021kepler} contains 822 relation types. 
We create WikiTopics datasets from all 822 relation types, which comprise graphs with as many as 66 relation types. Each graph has a disjoint set of relation types from all other graphs.
\Cref{tab:wikitopics-topic-desc}~shows the 11 topics/domains of the WikiTopics dataset, each corresponding to a distinct KG with distinct relation types.
\begin{table}[h]
    \centering
    \caption{The 11 different topics/domains of the WikiTopics dataset.}
    \resizebox{0.7\linewidth}{!}{
    \begin{tabular}{cll}
        Domain KG index & Abbreviation & Description \\
        \midrule
        T1 & Art    & Art and Media Representation \\
        T2 & Award  & Award Nomination and Achievement \\
        T3 & Edu    & Education and Academia \\
        T4 & Health & Health, Medicine, and Genetics \\
        T5 & Infra  & Infrastructure and Transportation \\
        T6 & Loc    & Location and Administrative Entity \\
        T7 & Org    & Organization and Membership \\
        T8 & People & People and Social Relationship \\
        T9 & Science    & Science, Technology, and Language \\
        T10 & Sport & Sport, and Game Competition \\
        T11 & Tax   & Taxonomy and Biology \\
    \end{tabular}}
    \label{tab:wikitopics-topic-desc}
\end{table}

To control the overall size of the graphs in WikiTopics, we downsample $10,000$ nodes for each domain from the subgraph consisting of only the triplets with the relation types belonging to that domain.
We adopt the Forest Fire sampling procedure with burning probability $p=0.8$~\citep{SamplingGraph} implemented in the Little Ball of Fur Python package~\citep{rozemberczki2020little}.
We then split the downsampled domain KG into 90\% observable triplets and 10\% querying triplets to be predicted by the models.
When splitting, we ensure that the set of nodes in the querying triplets is a subset of those in the observable triplets.
This way, the model is not tasked with the impossible task of predicting relation types between orphaned nodes previously unseen in the observable part of the graph.
This is implemented via an iterative procedure, where we first sample a batch of missing triplets from the downsampled domain graph, then discard those that contain unseen nodes in the rest of the triplets, and repeat this process until the number of sampled triplets reaches 10\% of total triplets.
\Cref{fig:stats-wikitopics} shows the {\bf data statistics} of WikiTopics dataset. %

\begin{figure}
\centering
\begin{minipage}{0.7\textwidth}
\centering
\resizebox{\linewidth}{!}{
\begin{tabular}{l|ccccc}
\hline
& \#Nodes & \# Relations & \#Triplets (Obv.) & \#Triplets (Qry.) & Avg. Deg. \\
\hline
Art & 10000 & 45 & 28023 & 3113 & ~~6.23 \\
Award & 10000 & 10 & 25056 & 2783 & ~~5.57 \\
Edu & 10000 & 15 & 14193 & 1575 & ~~3.15 \\
Health & 10000 & 20 & 15337 & 1703 & ~~3.41 \\
Infra & 10000 & 27 & 21646 & 2405 & ~~4.81 \\
Loc & 10000 & 35 & 80269 & 8918 & 17.84 \\
Org & 10000 & 18 & 30214 & 3357 & ~~6.71 \\
People & 10000 & 25 & 58530 & 6503 & 13.01 \\
Sci & 10000 & 42 & 12516 & 1388 & ~~2.78 \\
Sport & 10000 & 20 & 46717 & 5190 & 10.38 \\
Tax & 10000 & 31 & 19416 & 2157 & ~~4.32 \\
\hline
\end{tabular}
}
\includegraphics[width=\linewidth]{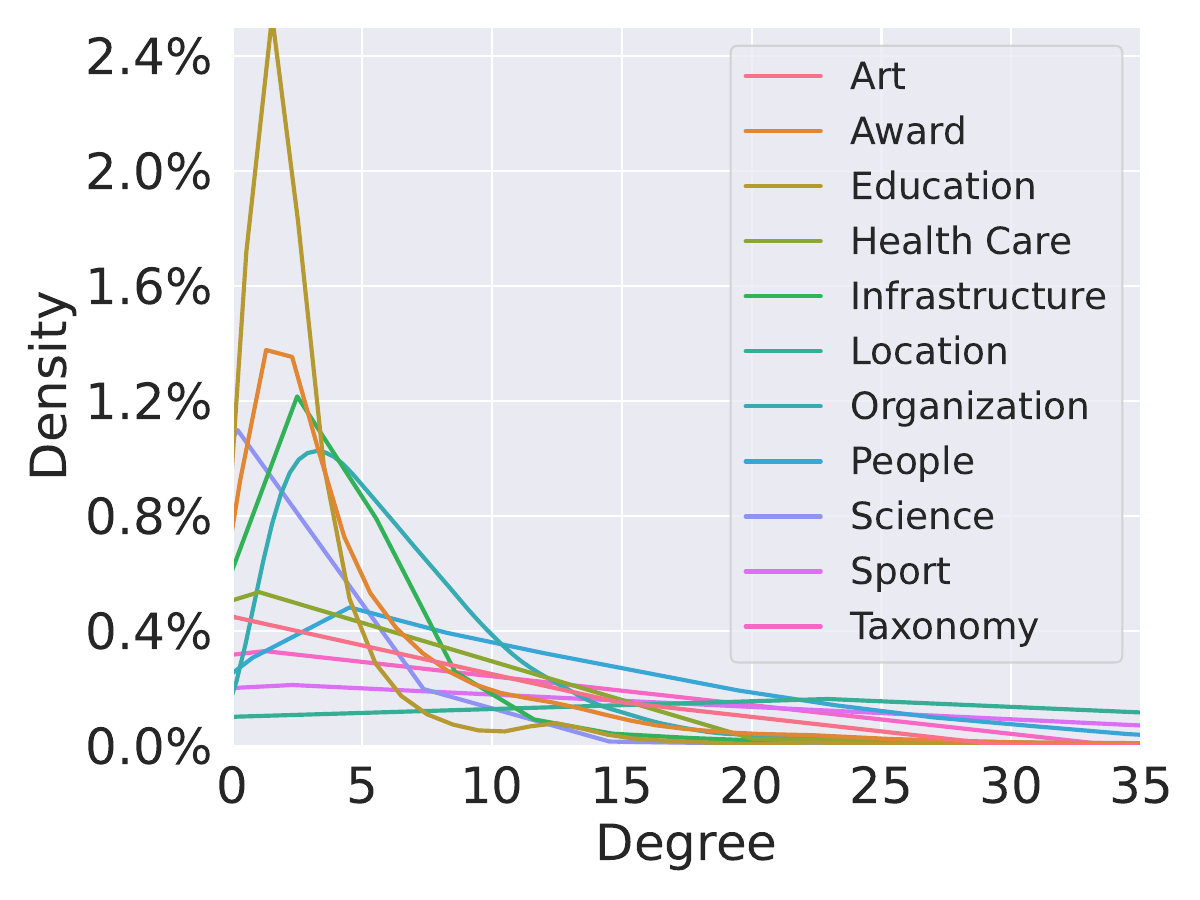}
\end{minipage}
\caption{{\bf Statistics of WikiTopics:} We report graph statistics including the number of nodes, number of relations, observed (obv.) triplets, querying (qry.) triplets, and average degree for each graph. We also report (in \& out) degree distribution on each graph at the bottom. We omit tail distribution larger than 35 since they are fairly small and almost flat.}
\label{fig:stats-wikitopics}
\end{figure}

\update{
\paragraph{Meta-learning on WikiTopics and zero-shot transfer to large-scale KGs}
The experiment described in \Cref{sec:negative-transfer} is conducted by training double equivariance models (Ultra~\citep{galkin2023towards}, DEq-Ingram, ISDEA+) on a mixture of WikiTopics KGs with an increasing number of training domains and test the models' average zero-shot performance on the remaining held-out domains. The results as shown in \Cref{tab:negative-transfer-node} indicate that current double equivariant models suffer from negative transfer (their performance drops when certain ``bad'' KG is included in the training mixture) and they have poor data scaling law (their performance saturates after 3 training domains). To investigate whether this phenomenon is consistent on other KGs that are larger in scale than WikiTopics, we conduct an additional experiment where we test the models' zero-shot performance on 3 commonly used large-scale KGs in the literature: FB15K237~\citep{toutanova2015observed}, NELL995~\citep{Xiong2017DeepPathAR}, and YAGO310~\citep{Mahdisoltani2015YAGO3AK}. \Cref{tab:wikitopics-large-inference-kg-size} shows the sizes of the 3 KGs respectively. In this experiment, we use the same training model checkpoints obtained for \Cref{tab:negative-transfer-node} and evaluate their zero-shot test performance on FB15K237, NELL995, and YAGO310. We report the zero-shot test Hits@1 accuracy for both relation prediction and node prediction tasks averaged across all 3 KGs. Note that this is still the fully inductive setting, since all of the 3 KGs are used for inference purpose only. The set of entities and relation types are distinct from that in the training WikiTOpics KGs.

\Cref{tab:negative-transfer-large-inference} shows the result of this experiment. These new results on large-scale KGs are consistent with what we learned from \Cref{tab:negative-transfer-node} in \Cref{sec:negative-transfer}. To emphasize, our observation is two-fold: First, same as what we observed in previous experiments, the domain {\bf Loc} is one example of a KG that elicits negative transfer effects from all existing double equivariant models. In general, whenever {\bf Loc} is included in the training data mix (E.g. Infra+Sci+Sport vs Infra+Sci+Loc), the model’s zero-shot test performance tends to deteriorate (E.g. a drop of -21.77\% for Ultra). This trend is more prominent on the relation prediction task. Second, all existing models demonstrate relatively poor data scaling law, especially when the number of training data domains increases from 3 (Infra + Sci + Sport) to 4 (Infra + Sci + Sport + Tax). All models show saturated performance, with DEq-InGram in particular having a significant drop in the relation prediction task. 

These new results on large-scale KGs, together with our previous results  from \Cref{tab:negative-transfer-node}, highlight the gap of existing double equivariant models towards achieving a true “graph foundation model (GFM).” As {\bf Loc} appears to be a KG that has different patterns from other domain-specific KGs, an ideal GFM should be able to jointly learn these different patterns and enhance its zero-shot test performance on new domains, instead of being interfered with and forgetting what it has learned, as the current models do. Our experiments here therefore pinpoint a specific scenario in our benchmark in which future work that proposes new GFM architectures can attempt to address and evaluate their methods.

}

\begin{table}[h]
    \centering
    \caption{Sizes of FB15K237~\citep{toutanova2015observed}, NELL995~\citep{Xiong2017DeepPathAR}, and YAGO310~\citep{Mahdisoltani2015YAGO3AK} used as inference graphs in the large-scale WikiTopics Meta-learning experiment. Observable triplets are the number of edges in the input KG to the model. Test triplets are the number of target ground-truth edges to evaluate. The observable triplets are taken from the training split and the test triplets from the test split of the original datasets.}
    \resizebox{0.7\linewidth}{!}{
    \begin{tabular}{crrrr}
        Dataset & Entities & Relations & Observable Triplets & Test Triplets \\
        \midrule
        FB15K237 & 14541  & 237 & 272115 & 20466 \\
        NELL995 & 74536 & 200 & 149678 & 2818 \\
        YAGO310 & 123182 & 37 & 1079040 & 5000 
    \end{tabular}}
    \label{tab:wikitopics-large-inference-kg-size}
\end{table}

\begin{table}
    \centering
    \caption{Average zero-shot Hits@1 performance (in \%) of Ultra, \OurOtherModel, and \OurNewModel by training on increasingly larger mixtures of \ourotherdata KGs and test on FB15K237, NELL995, and YAGO310. (1) {\bf\em Loc} is one \ourotherdata KG that we identified to consistently trigger negative transfer effects among all models. {\bf Mixing {\em Loc} in training data consistently diminishes both the training and test performance across all double equivariant models.} We show in parenthesis the relative performance difference comparing the training combination with {\bf\em Loc} and a similar combination without {\bf\em Loc}. (2) {\bf All models exhibit limited performance improvement as the number of training KG domains increases.}}
    \label{tab:negative-transfer-large-inference}
    {
    \subcaption{Average zero-shot Hits@1 of Relation prediction $(i, ?, j)$ task}
    \resizebox{0.99\linewidth}{!}{
    \begin{tabular}{l l l l l l l l}
        \toprule
        Meta-Learning Training KGs & Neg. Transfer? & Ultra & \OurOtherModel & \OurNewModel \\
        \midrule
        Infra & & 59.64 & 10.30 & 65.73  \\
        {\bf Loc} & \checkmark & 14.90 {\scriptsize$(-75.01\%)$} & 10.79 {\scriptsize$(+4.76\%)$} & 46.02 {\scriptsize $(-29.99\%)$} \\
        \midrule
        Infra + Sci & & 69.08 & 16.06 & 62.47  \\
        Infra + {\bf Loc} & \checkmark & 54.93 {\scriptsize$(-20.48\%)$} & 13.99 {\scriptsize$(-12.89\%)$} & 58.21 {\scriptsize$(-6.82\%)$}  \\
        \midrule
        Infra + Sci + Sport & & 74.55 & 54.17 & 60.82 \\
        Infra + Sci + {\bf Loc} & \checkmark & 58.32 {\scriptsize $(-21.77\%)$} & 19.85 {\scriptsize $(-63.36\%)$} & 56.59 {\scriptsize $(-6.95\%)$}  \\
        \midrule
        Infra + Sci + Sport + Tax & & 65.99 & 25.70 & 61.23 \\
        Infra + Sci + Sport + {\bf Loc} & \checkmark & 49.26 {\scriptsize $(-25.35\%)$} & 19.91 {\scriptsize $(-22.53\%)$} & 60.27 {\scriptsize $(-1.57\%)$} \\
        \bottomrule
    \end{tabular}
    }
    }
    \vspace{5pt} %
    {
    \subcaption{Average zero-shot Hits@1 of Node prediction $(i, r, ?)$ task}
    \resizebox{0.99\linewidth}{!}{
    \begin{tabular}{l l l l l l l l}
        \toprule
        Meta-Learning Training KGs & Neg. Transfer? & Ultra & \OurOtherModel & \OurNewModel \\
        \midrule
        Infra & & 79.31 & 22.12  & 51.84 \\
        {\bf Loc} & \checkmark & 58.45 {\scriptsize $(-26.30\%)$} & 22.64 {\scriptsize $(+2.35\%)$} & 57.24 {\scriptsize $(+10.42\%)$} \\
        \midrule
        Infra + Sci & & 78.08 & 32.24 & 65.52 \\
        Infra + {\bf Loc} & \checkmark & 72.26 {\scriptsize $(-7.45\%)$} & 36.24 {\scriptsize $(+12.41\%)$} & 58.50 {\scriptsize $(-10.71\%)$} \\
        \midrule
        Infra + Sci + Sport & & 77.70 & 39.24 & 65.53  \\
        Infra + Sci + {\bf Loc} & \checkmark & 51.91 {\scriptsize $(-33.19\%)$} & 38.61 {\scriptsize $(-1.61\%)$} & 61.88 {\scriptsize $(-5.57\%)$} \\
        \midrule
        Infra + Sci + Sport + Tax & & 77.90 & 39.41 & 60.04 \\
        Infra + Sci + Sport + {\bf Loc} & \checkmark & 80.22 {\scriptsize $(+2.98\%)$} & 40.26 {\scriptsize $(+2.16\%)$} & 62.53 {\scriptsize $(+4.15\%)$} \\
        \bottomrule
    \end{tabular}
    }}
\end{table}

\paragraph{Cross-domain one-to-one zero-shot transfer over WikiTopics KGs}
\update{
In addition to the meta-learning experiments we reported in \Cref{tab:negative-transfer-node} and \Cref{tab:negative-transfer-large-inference}, we also tested double equivariant model's (Ultra~\citep{galkin2023towards}, DEq-Ingram, ISDEA+) zero-shot performance from each WikiTopic domain to another one domain.
}
In this setting, we train the models on each of the $11$ graphs for 5 random seeds, and for each trained model checkpoint, we cross-test it on all the other 10 graphs, resulting in a total of $550$ statistics. We report the mean results across random seeds in heatmaps.
We present a detailed results (heatmaps with values) of Node and Relation Hits@10, Hits@1, and MRR for WikiTopics in \Cref{fig:wikitopics-full,fig:wikitopics-node-full}. Due to the large number of runs ($11\times 10=110$ different train-test scenarios, each with 5 random seeds, resulting in a total of 550 runs) and the time constraints to run all baseline models, we perform the evaluation over only the three models (\OurNewModel, \OurOtherModel, and InGram) that are designed for our \ourtask task. \Cref{fig:wikitopics-full} shows that for the task of predicting missing relation types $(i, ?, j)$, \OurNewModel and \OurOtherModel are consistently better than InGram across all different metrics. Especially, the structural double equivariant \OurNewModel model exhibits more consistent results across different train-test scenarios than both \OurOtherModel and InGram, and achieves significantly better results in Hits@1 and MRR, showcasing its ability for \ourtask in a much harder evaluation scenario.
For the task of prediction missing nodes $(i, k, ?)$ as shown in~\Cref{fig:wikitopics-node-full}, \OurNewModel, \OurOtherModel, and InGram showcase comparable performance, whereas \OurNewModel exhibits more consistent results across different train-test scenarios than both \OurOtherModel and InGram. We also note that similar to the relation prediction task, \OurNewModel also exhibits the best performance in the Hits@1 metric for the node prediction task.

\begin{figure}[t]
    \centering
    \begin{subfigure}[b]{0.32\textwidth}
        \centering        
        \includegraphics[width=\linewidth,height=0.84\linewidth]{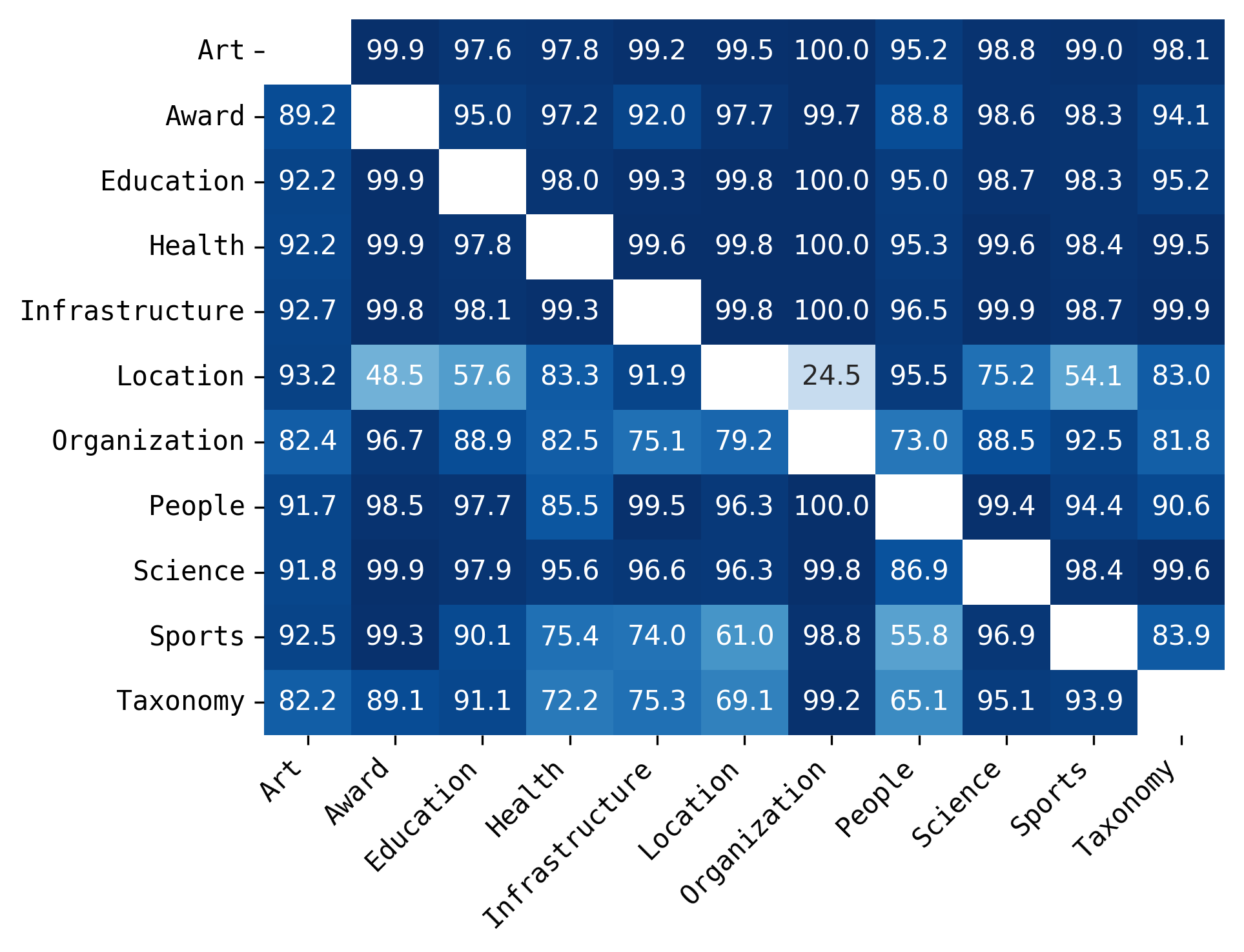}
        %
        \caption{\OurNewModel Hits@10}
    \end{subfigure}
    \hfill
    \begin{subfigure}[b]{0.27\textwidth}
        \centering
        \includegraphics[width=\linewidth,height=\linewidth]{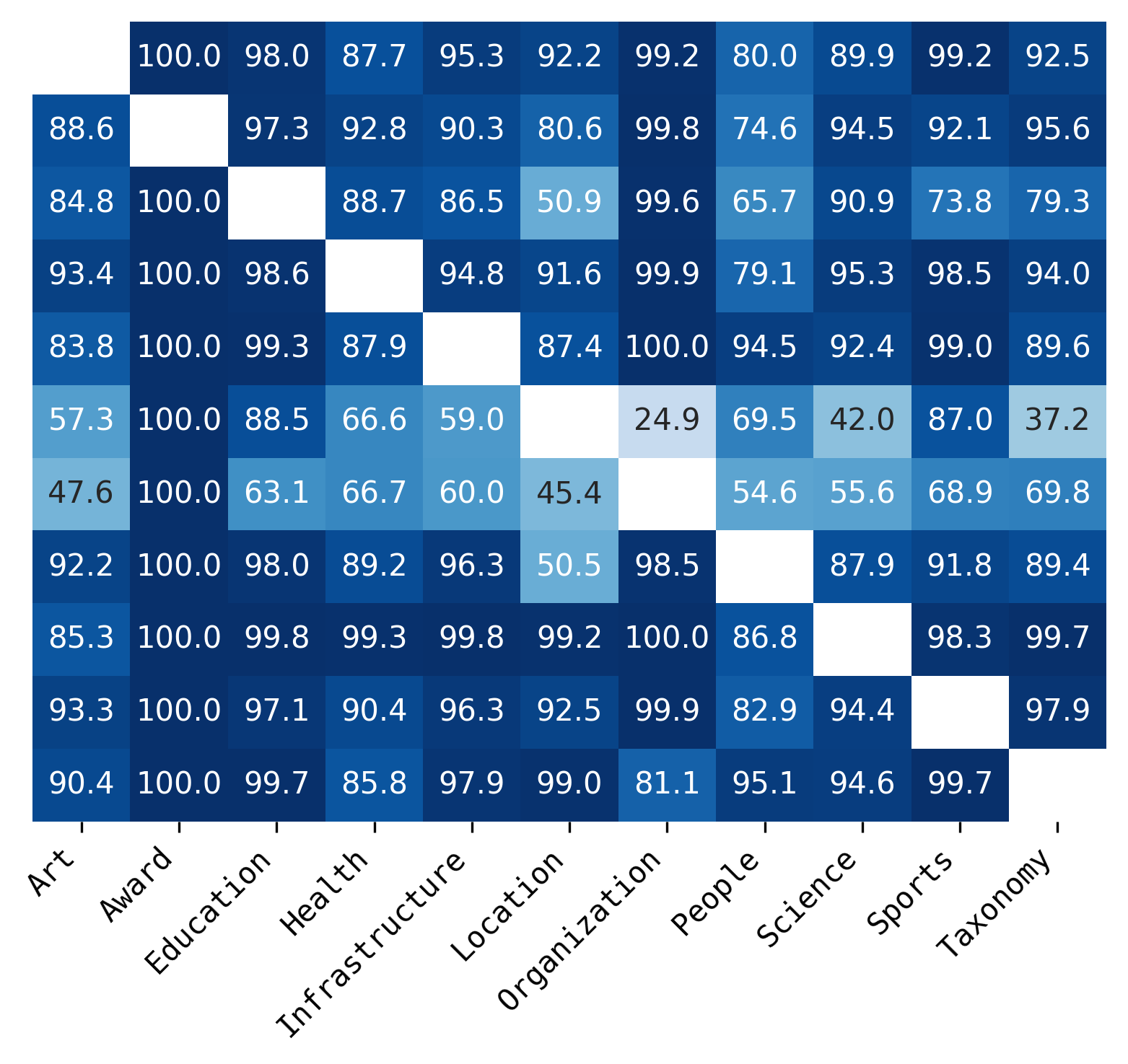}
        \caption{\OurOtherModel Hits@10}
    \end{subfigure}
    \hfill
    \begin{subfigure}[b]{0.31\textwidth}
        \centering
        \includegraphics[width=\linewidth,height=0.88\linewidth]{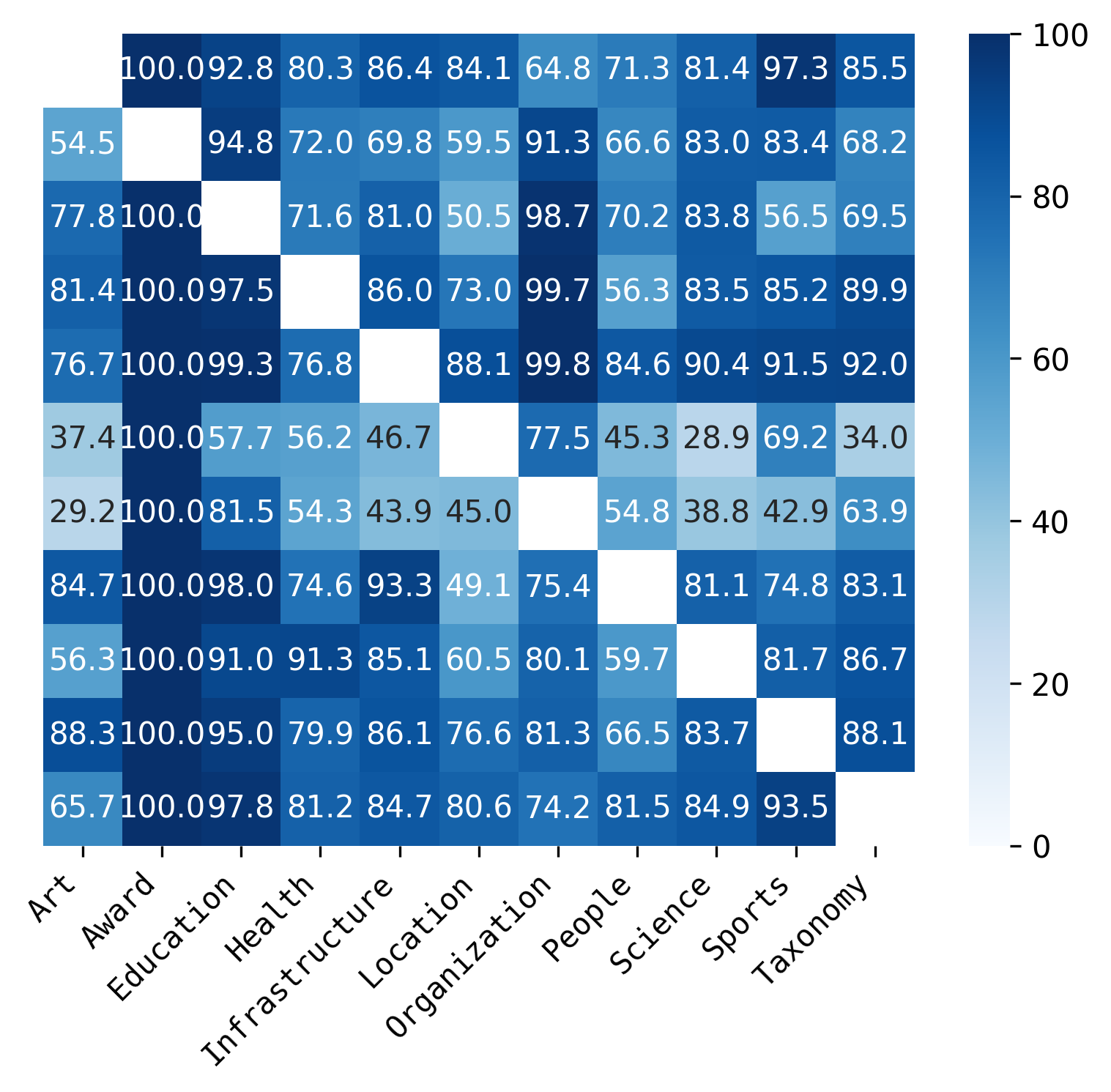}
        \caption{Original InGram Hits@10}
    \end{subfigure}
    \begin{subfigure}[b]{0.32\textwidth}
        \centering        
        \includegraphics[width=\linewidth,height=0.84\linewidth]{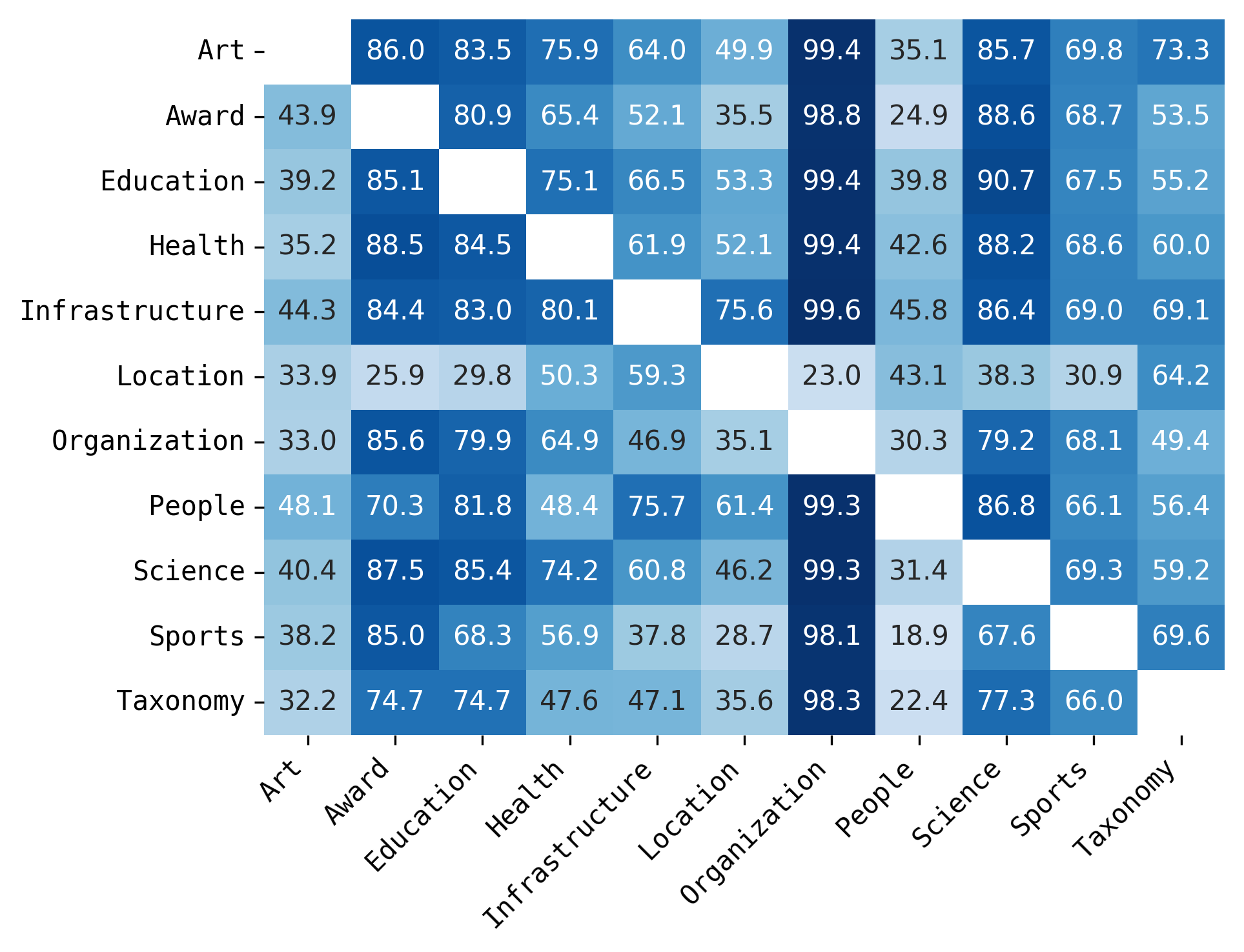}
        %
        \caption{\OurNewModel Hits@1}
    \end{subfigure}
    \hfill
    \begin{subfigure}[b]{0.27\textwidth}
        \centering
        \includegraphics[width=\linewidth,height=\linewidth]{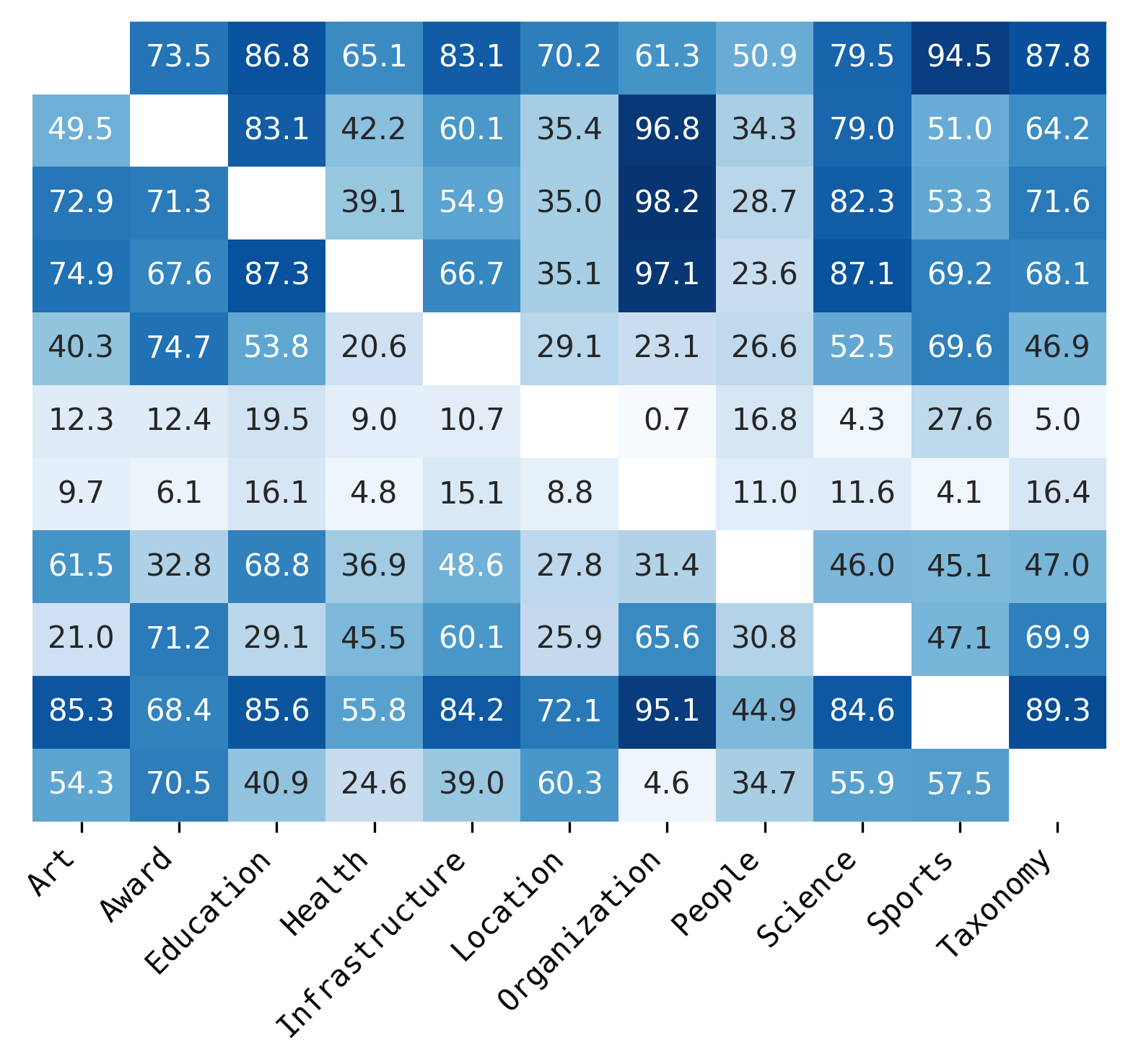}
        \caption{\OurOtherModel Hits@1}
    \end{subfigure}
    \hfill
    \begin{subfigure}[b]{0.31\textwidth}
        \centering
        \includegraphics[width=\linewidth,height=0.88\linewidth]{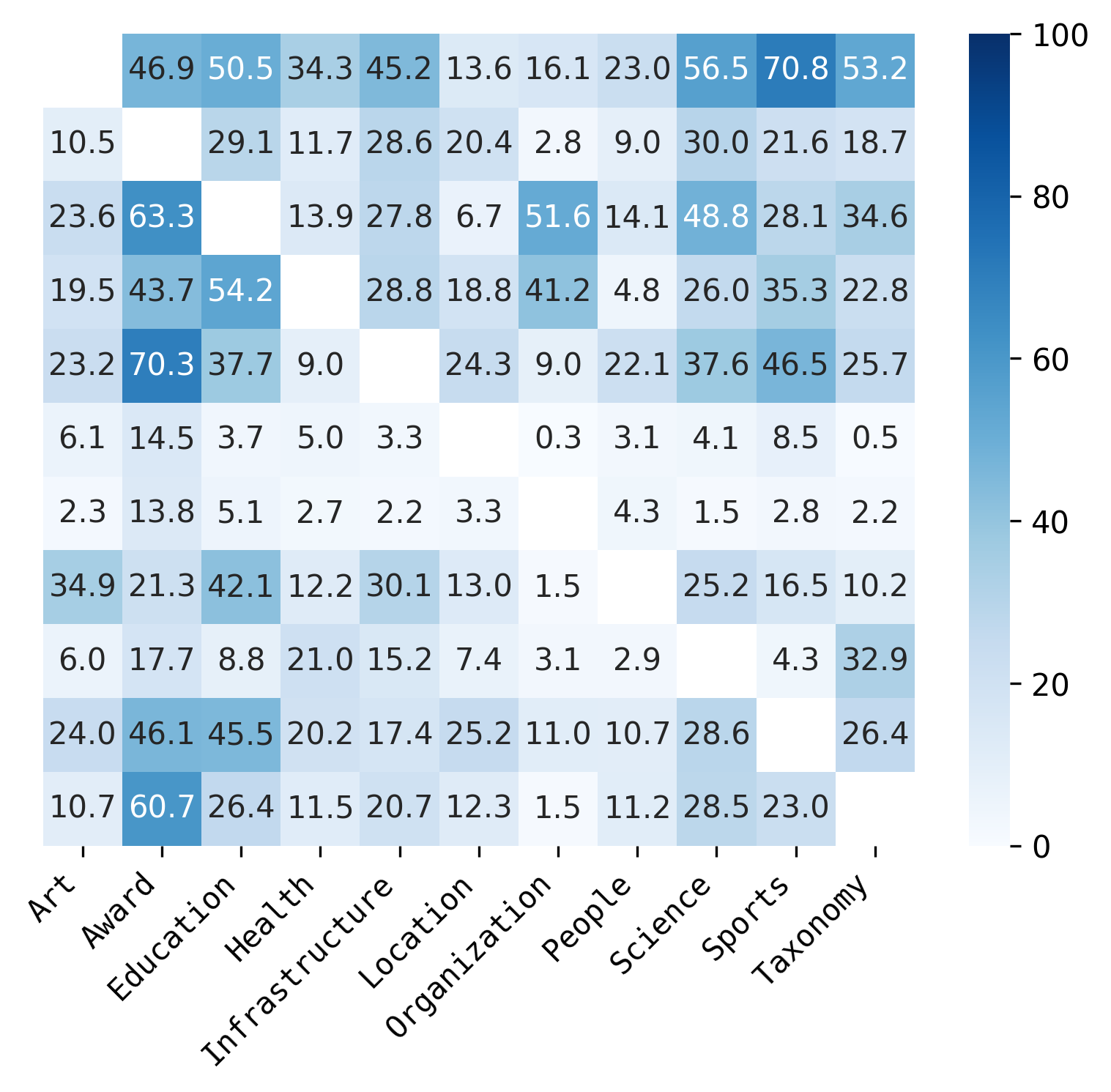}
        \caption{Original InGram Hits@1}
    \end{subfigure}
    \begin{subfigure}[b]{0.32\textwidth}
        \centering        
        \includegraphics[width=\linewidth,height=0.84\linewidth]{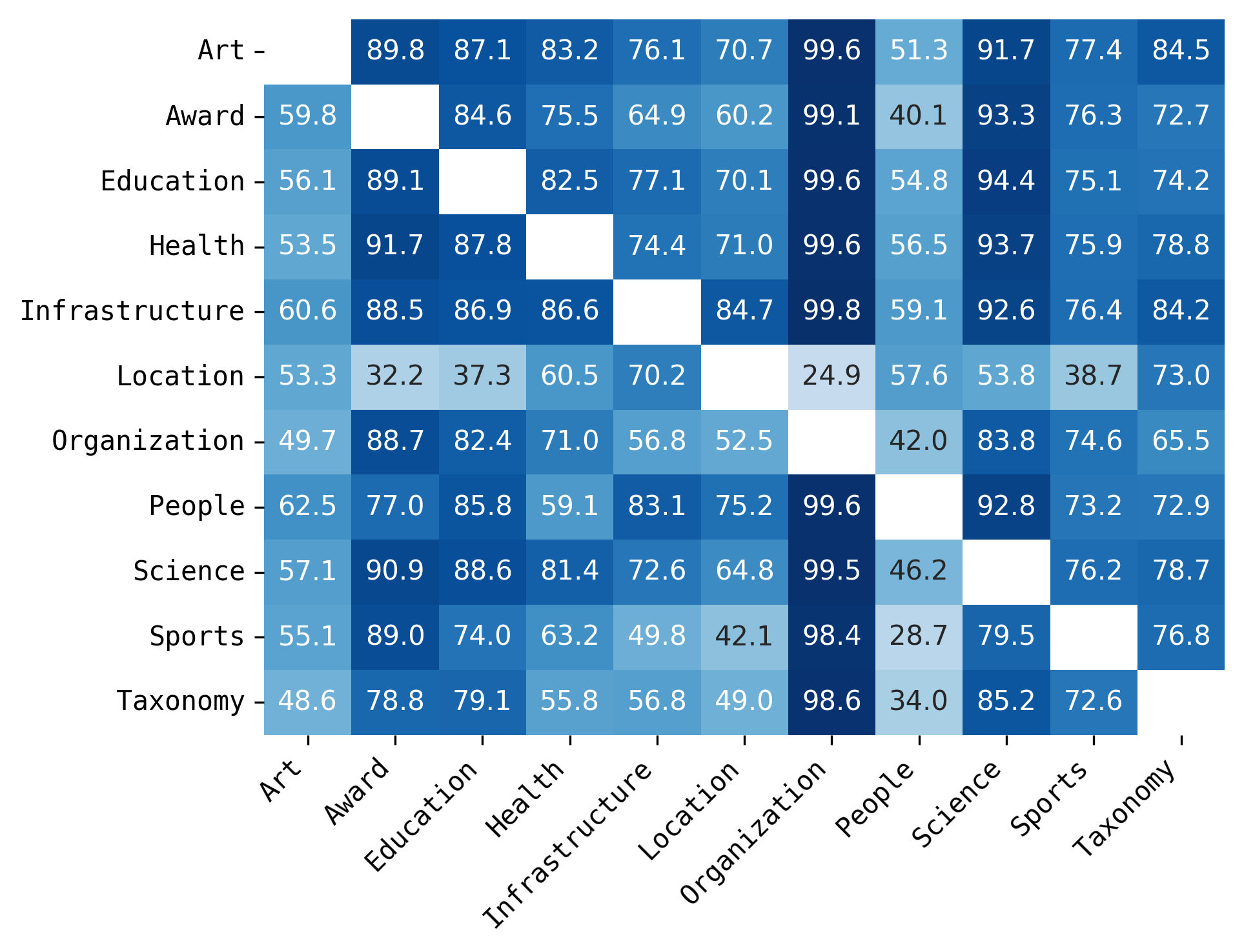}
        %
        \caption{\OurNewModel MRR}
    \end{subfigure}
    \hfill
    \begin{subfigure}[b]{0.27\textwidth}
        \centering
        \includegraphics[width=\linewidth,height=\linewidth]{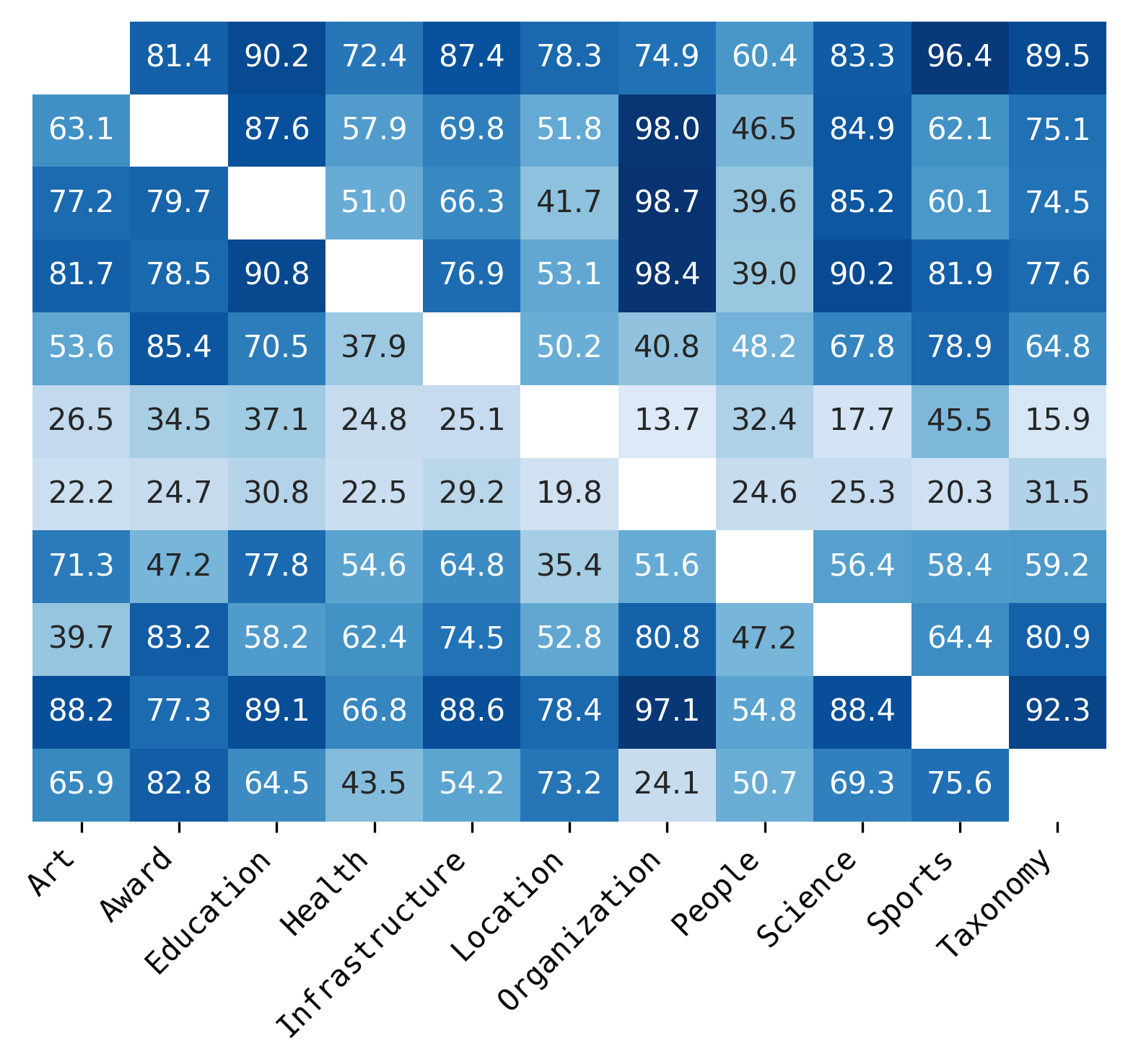}
        \caption{\OurOtherModel MRR}
    \end{subfigure}
    \hfill
    \begin{subfigure}[b]{0.31\textwidth}
        \centering
        \includegraphics[width=\linewidth,height=0.88\linewidth]{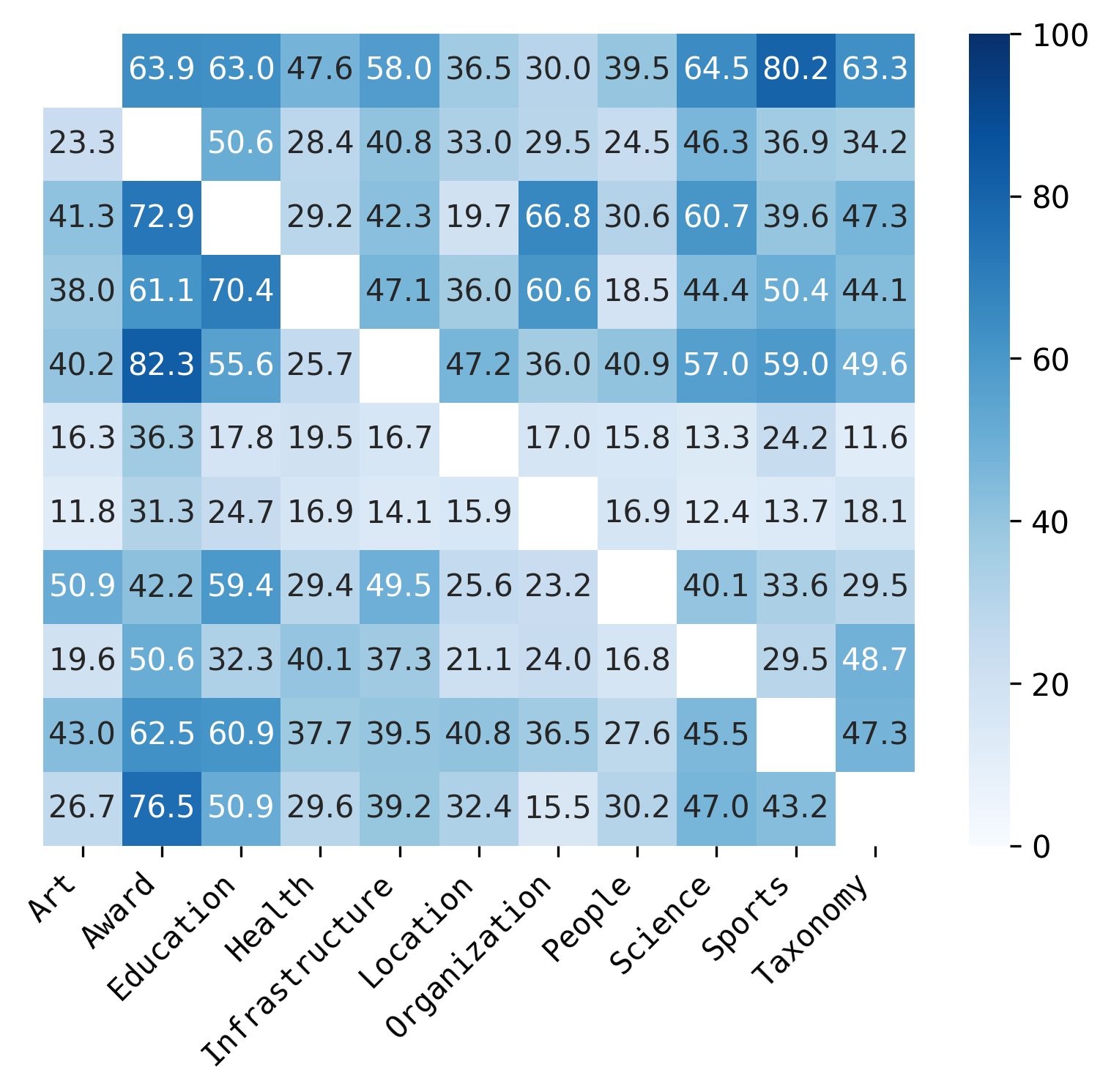}
        \caption{Original InGram MRR}
    \end{subfigure}
    %
    \caption{{\bf Relation prediction $(i, ?, j)$ performance over WikiTopics} for \OurNewModel, \OurOtherModel, and InGram~\citep{ingram}. Each row within each heatmap corresponds to a training graph, and each column within each heatmap corresponds to a test graph. A darker color means better performance. 
    \textbf{Both \OurModel, \OurOtherModel perform significantly better than InGram, especially for Hits@1 and MRR, whereas \OurNewModel exhibits more consistent results across different train-test scenarios than both \OurOtherModel and InGram.}
    }
    \label{fig:wikitopics-full}
\end{figure}

\begin{figure}[t]
    \centering
    \begin{subfigure}[b]{0.32\textwidth}
        \centering        
        \includegraphics[width=\linewidth,height=0.84\linewidth]{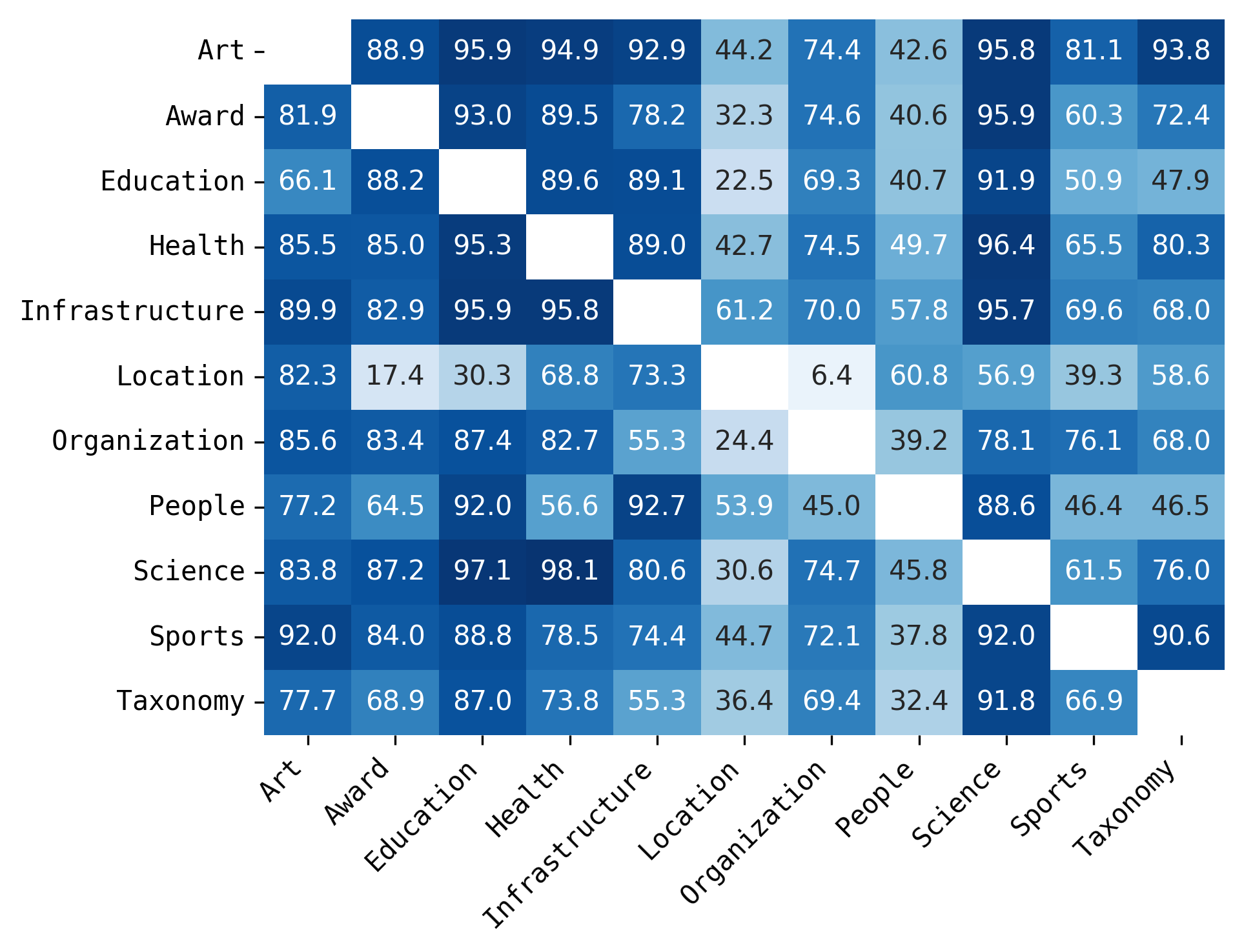}
        %
        \caption{\OurNewModel Hits@10}
    \end{subfigure}
    \hfill
    \begin{subfigure}[b]{0.27\textwidth}
        \centering
        \includegraphics[width=\linewidth,height=\linewidth]{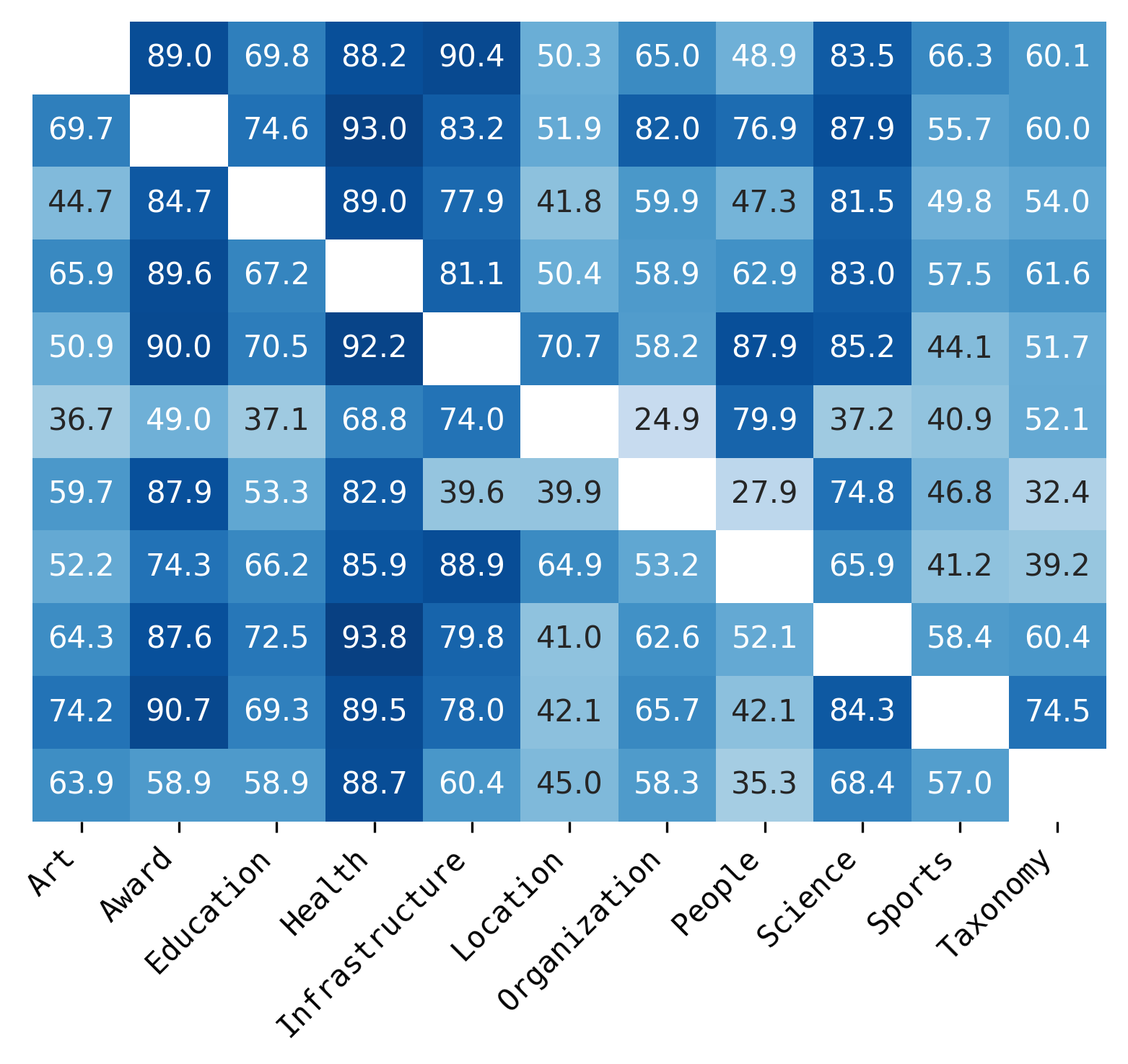}
        \caption{\OurOtherModel Hits@10}
    \end{subfigure}
    \hfill
    \begin{subfigure}[b]{0.31\textwidth}
        \centering
        \includegraphics[width=\linewidth,height=0.88\linewidth]{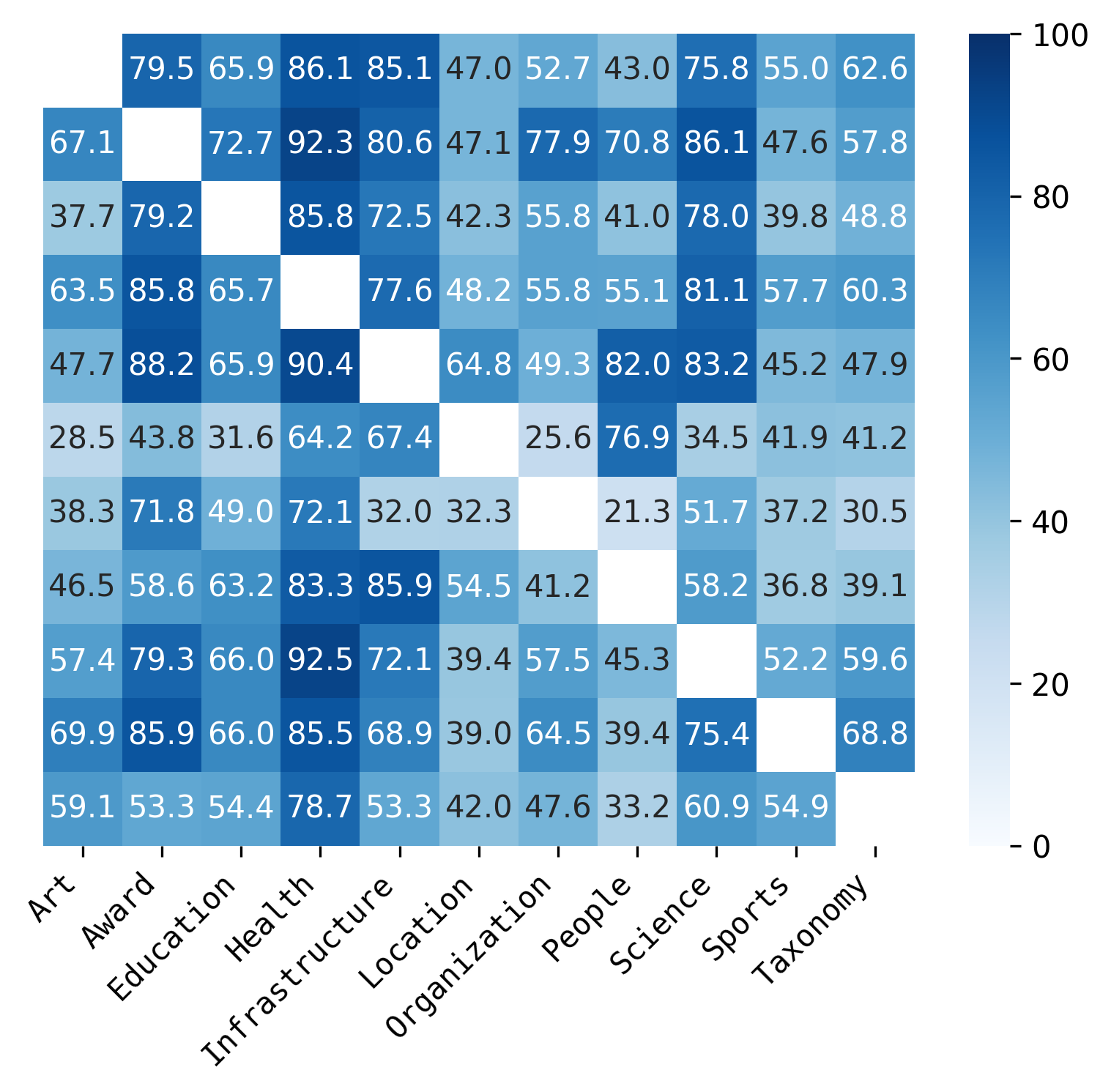}
        \caption{Original InGram Hits@10}
    \end{subfigure}
    \begin{subfigure}[b]{0.32\textwidth}
        \centering        
        \includegraphics[width=\linewidth,height=0.84\linewidth]{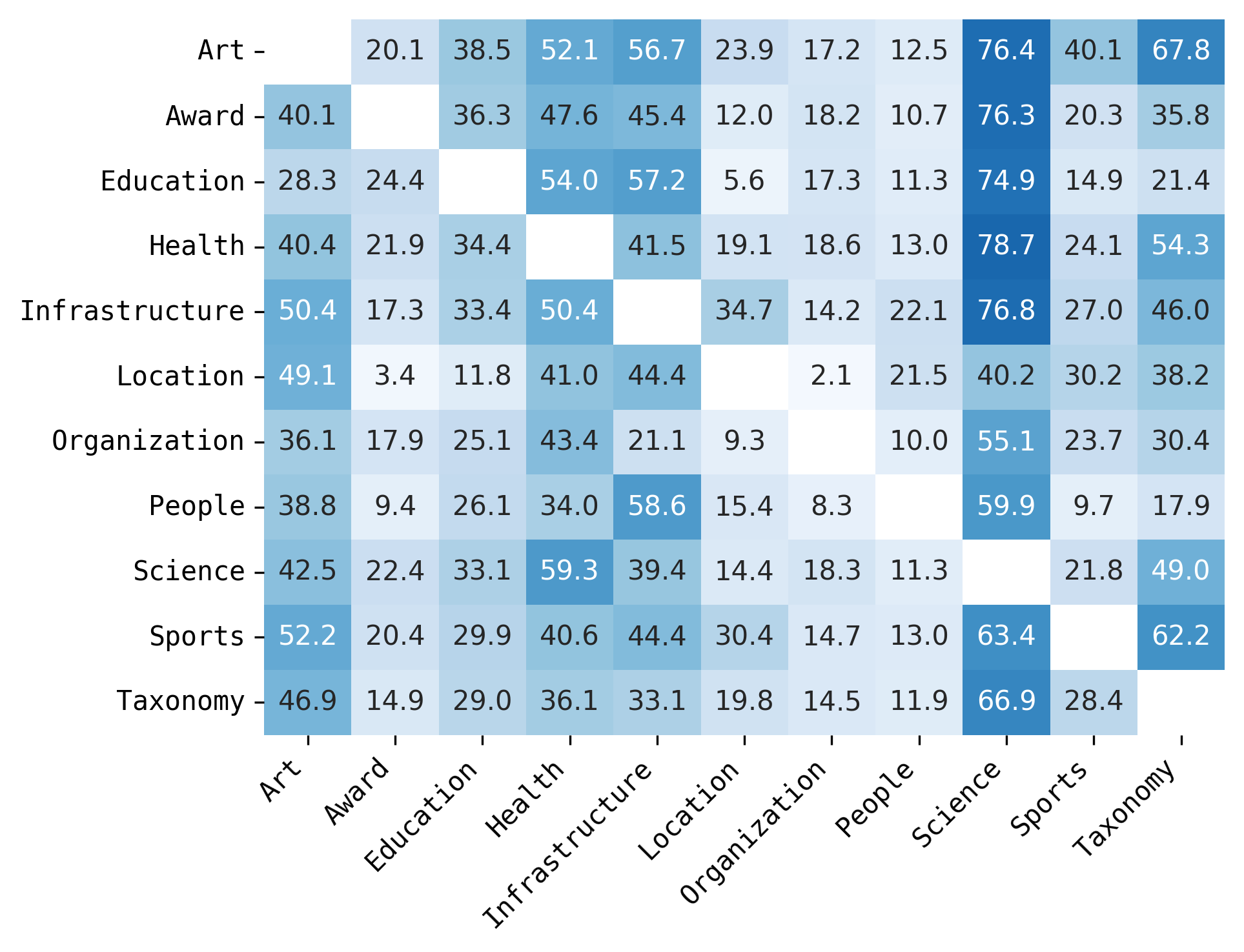}
        %
        \caption{\OurNewModel Hits@1}
    \end{subfigure}
    \hfill
    \begin{subfigure}[b]{0.27\textwidth}
        \centering
        \includegraphics[width=\linewidth,height=\linewidth]{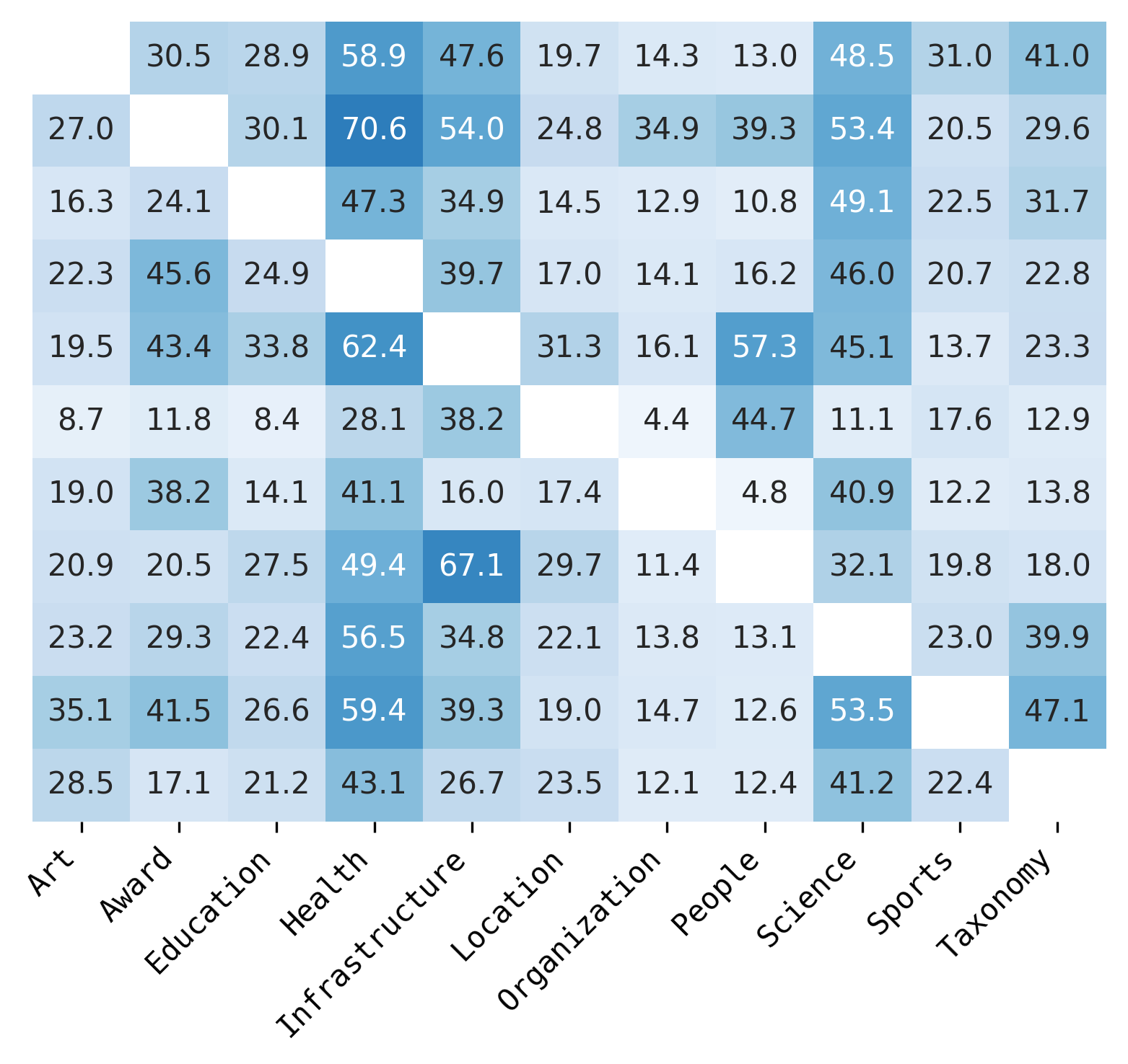}
        \caption{\OurOtherModel Hits@1}
    \end{subfigure}
    \hfill
    \begin{subfigure}[b]{0.31\textwidth}
        \centering
        \includegraphics[width=\linewidth,height=0.88\linewidth]{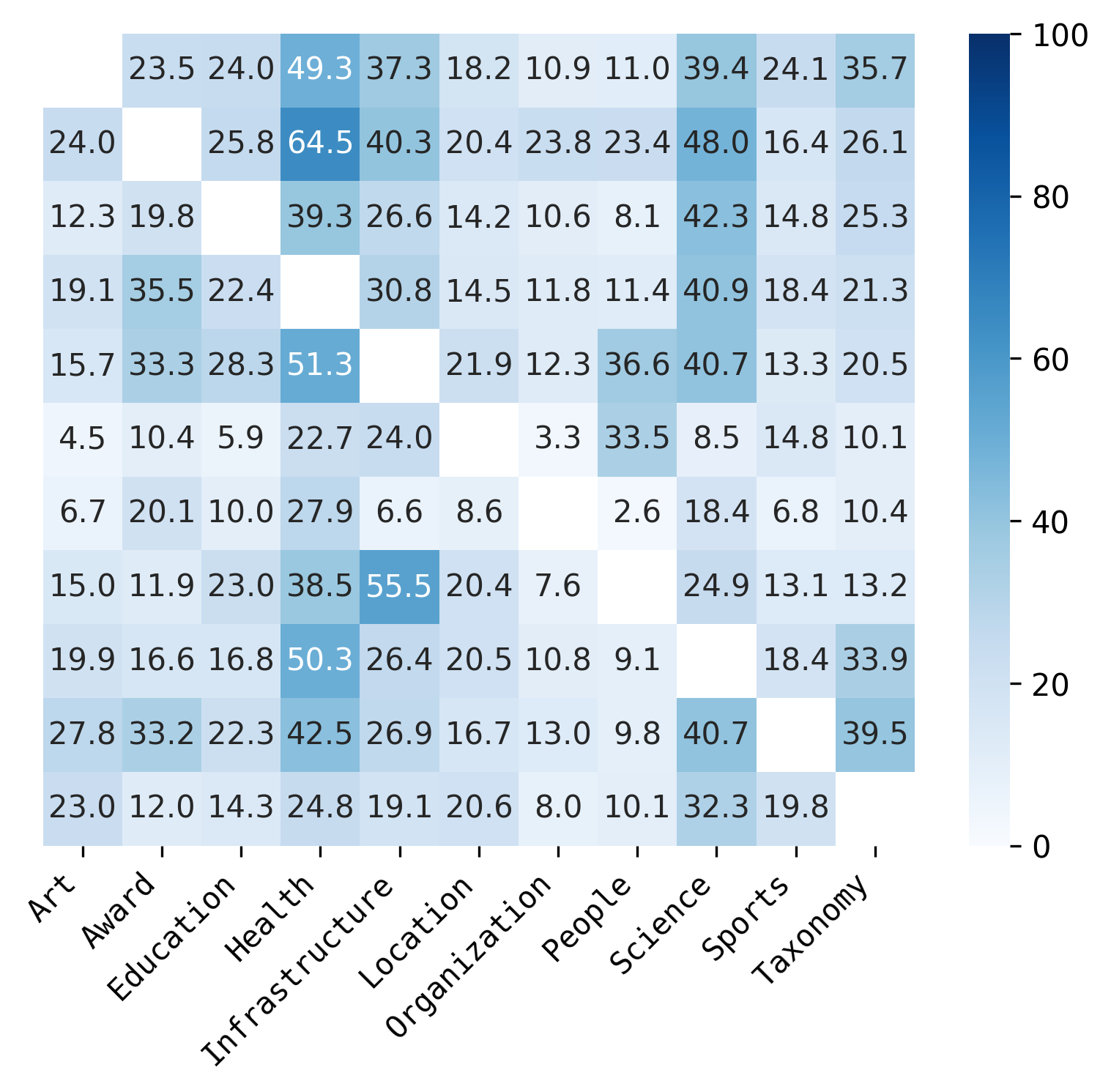}
        \caption{Original InGram Hits@1}
    \end{subfigure}
    \begin{subfigure}[b]{0.32\textwidth}
        \centering        
        \includegraphics[width=\linewidth,height=0.84\linewidth]{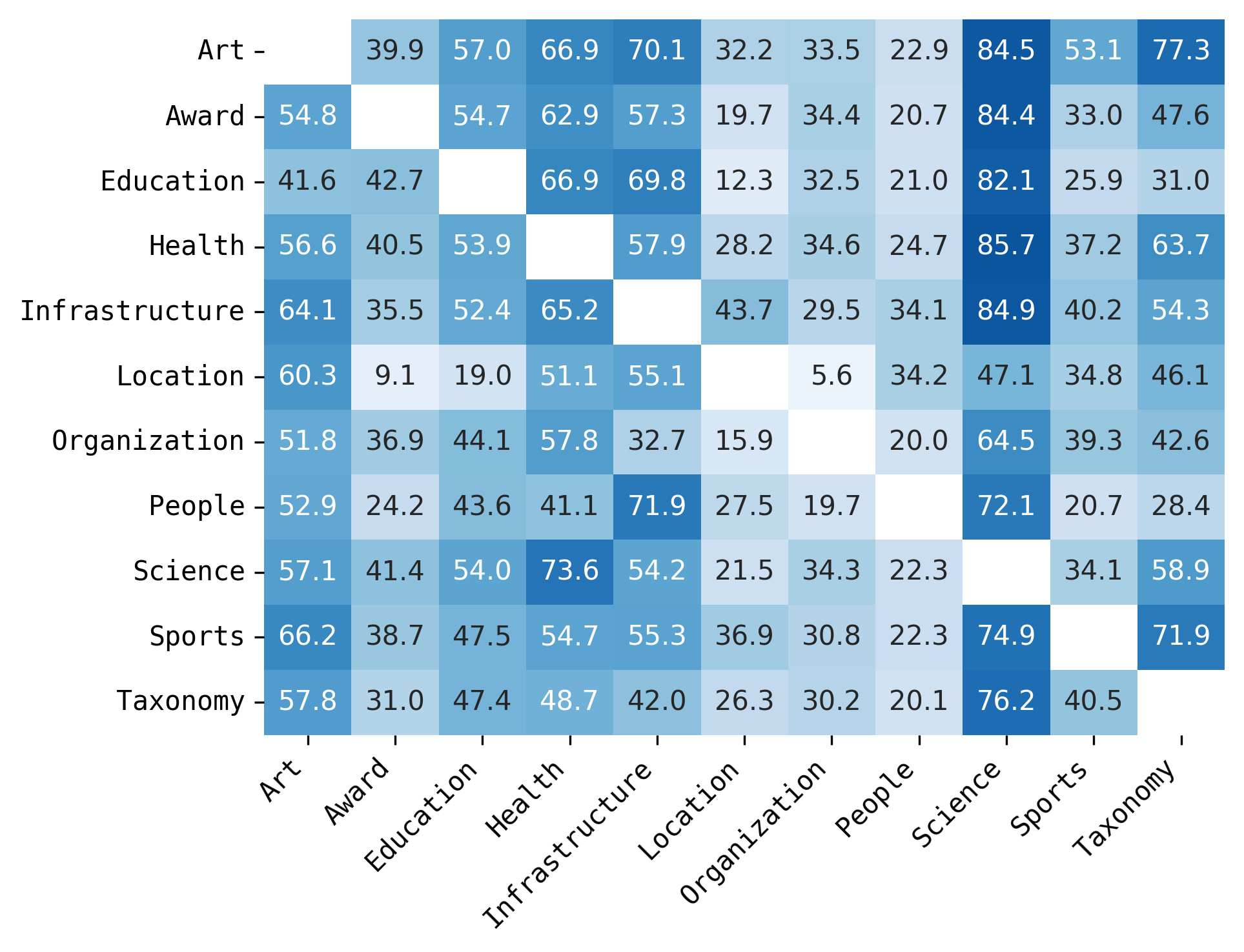}
        %
        \caption{\OurNewModel MRR}
    \end{subfigure}
    \hfill
    \begin{subfigure}[b]{0.27\textwidth}
        \centering
        \includegraphics[width=\linewidth,height=\linewidth]{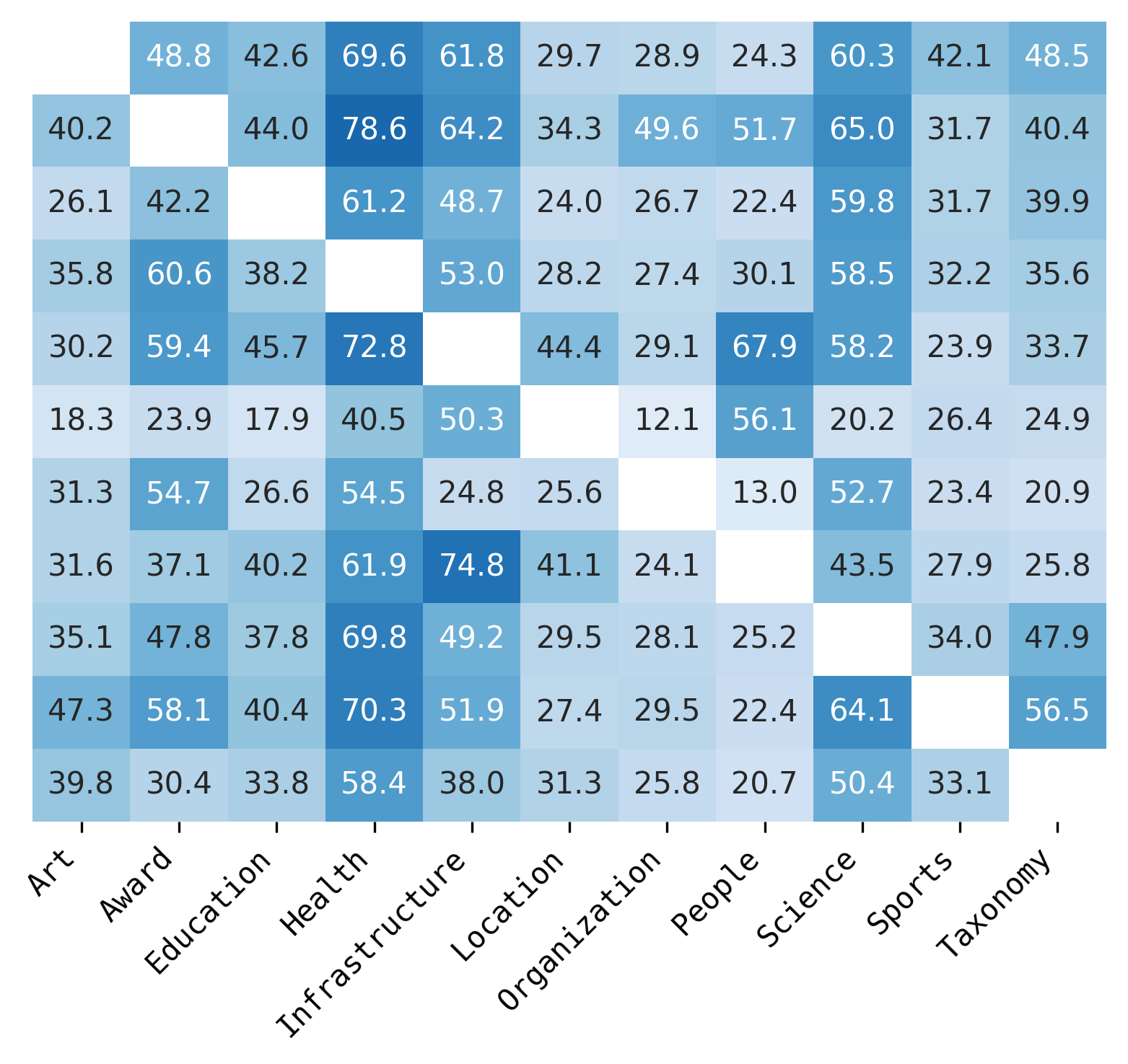}
        \caption{\OurOtherModel MRR}
    \end{subfigure}
    \hfill
    \begin{subfigure}[b]{0.31\textwidth}
        \centering
        \includegraphics[width=\linewidth,height=0.88\linewidth]{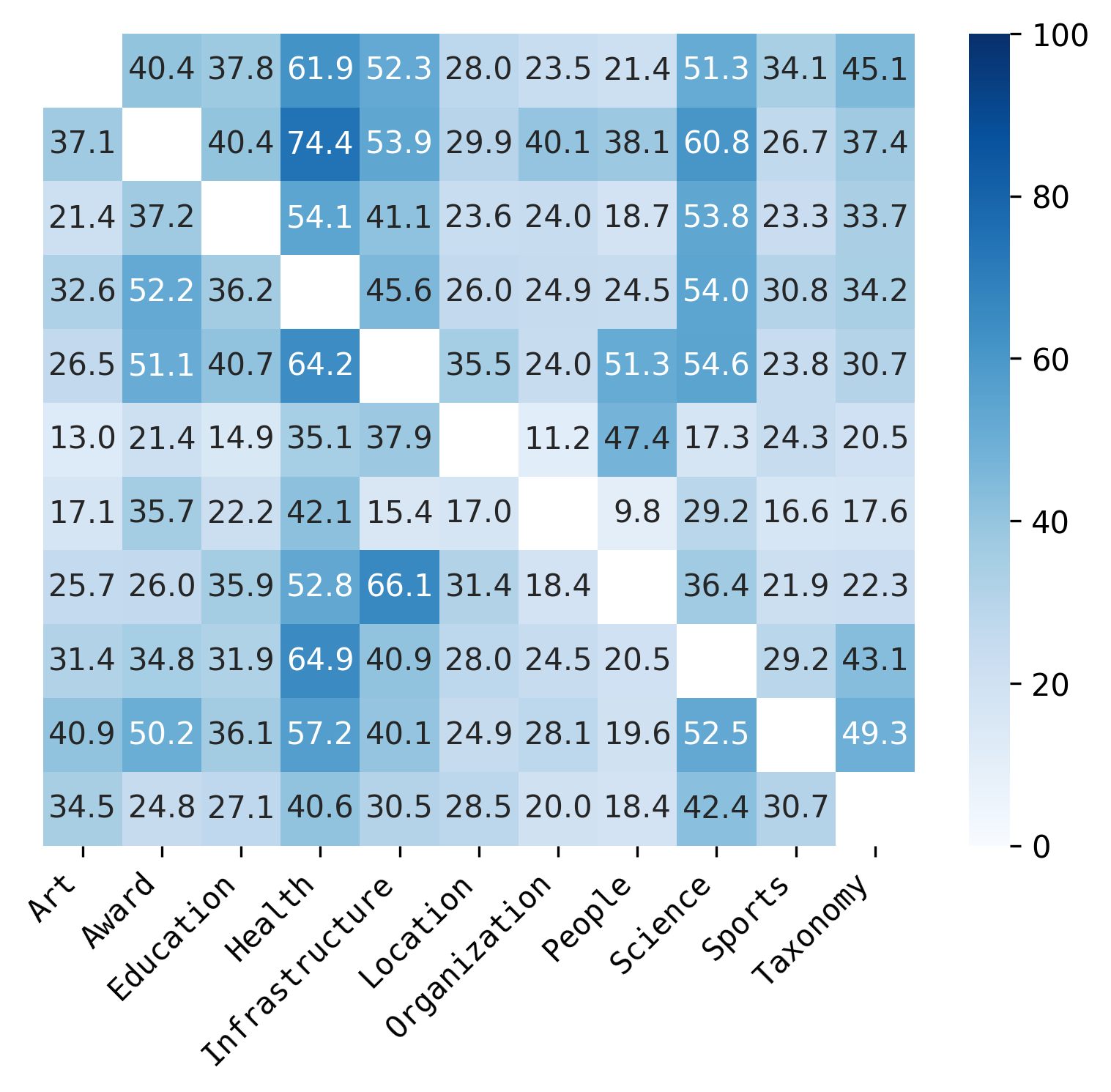}
        \caption{Original InGram MRR}
    \end{subfigure}
    %
    \caption{{\bf Node prediction $(i, k, ?)$ performance over WikiTopics} for \OurNewModel, \OurOtherModel, and InGram~\citep{ingram}. Each row within each heatmap corresponds to a training graph, and each column within each heatmap corresponds to a test graph. A darker color means better performance. 
    \textbf{\OurNewModel, \OurOtherModel, and InGram showcase comparable performance in general, and \OurNewModel exhibits the best performance on Hits@1 in particular.}}
    \label{fig:wikitopics-node-full}
\end{figure}

\end{document}